\def\therule{\makebox[\algorithmicindent][l]{\hspace*{.5em}\vrule height .75\baselineskip depth .25\baselineskip}}%
\newtoks\therules
\def\appendto#1#2{\expandafter#1\expandafter{\the#1#2}}
\def\gobblefirst#1{
 #1\expandafter\expandafter\expandafter{\expandafter\@gobble\the#1}}%
\def\LState{\State\unskip\the\therules}
\def\pushindent{\appendto\therules\therule}%
\def\popindent{\gobblefirst\therules}%
\def\printindent{\unskip\the\therules}%
\def\printandpush{\printindent\pushindent}%
\def\popandprint{\popindent\printindent}%
\let\temp\rmdefault 
\let\rmdefault\temp
\protected\def\numpi{\text{\ensuremath{\pi}}}
\newcolumntype{L}{D{.}{.}{2,5}}
\newcommand\scalemath[2]{\scalebox{#1}{\mbox{\ensuremath{\displaystyle #2}}}}
\newtheorem{theorem}{Theorem}
\newtheorem{definition}{Definition}
\newtheorem{assumption}{Assumption}
\newtheorem{proposition}{Proposition}
\newlength{\tempdima}
\let\originalparagraph\paragraph
\renewcommand{\paragraph}[2][.]{\originalparagraph{#2#1}}
\title{\LARGE \bf Learning Stable Models for Prediction and Control 
}
\author{\fontsize{10}{13}\selectfont \authorblockN{Giorgos Mamakoukas$^*$,
Ian Abraham$^\dagger$, and
Todd D. Murphey$^*$}\\
\authorblockA{\fontsize{10}{13}\selectfont $^*$Department of Mechanical Engineering, Northwestern University, 
Evanston, Illinois 60208, USA}\\
\authorblockA{\fontsize{10}{13}\selectfont $^\dagger$The Robotics Institute, Carnegie Mellon University, 
Pittsburgh, Pennsylvania 15213, USA}\\
 \href{mailto:giorgosmamakoukas@u.northwestern.edu}{\fontsize{10}{13}\selectfont giorgosmamakoukas@u.northwestern.edu}}
\begin{document}
\maketitle
\IEEEpubid{\begin{minipage}{\textwidth}\ \vspace{4ex}\\[8pt] 
  \copyright 2022 IEEE.  Personal use of this material is permitted.  Permission from IEEE must be obtained for all other uses, in any current or future media, including reprinting/republishing this material for advertising or promotional purposes, creating new collective works, for resale or redistribution to servers or lists, or reuse of any copyrighted component of this work in other works.
\end{minipage}} 

\begin{abstract}
This paper demonstrates the benefits of imposing stability on data-driven Koopman operators. The data-driven identification of stable Koopman operators (DISKO) is implemented using an algorithm \cite{mamakoukas_stableLDS2020} that computes the nearest \textit{stable} matrix solution to a least-squares reconstruction error. As a first result, we derive a formula that describes the prediction error of Koopman representations for an arbitrary number of time steps, and which shows that stability constraints can improve the predictive accuracy over long horizons.  As a second result, we determine formal conditions on basis functions of Koopman operators needed to satisfy the stability properties of an underlying nonlinear system. As a third result, we derive formal conditions for constructing Lyapunov functions for nonlinear systems out of stable data-driven Koopman operators, which we use to verify stabilizing control from data. Lastly, we demonstrate the benefits of DISKO in prediction and control with simulations using a pendulum and a quadrotor and experiments with a pusher-slider system. The paper is complemented with a video: \url{https://sites.google.com/view/learning-stable-koopman}. 
\end{abstract}
\section{INTRODUCTION}
The dynamics of robots are often unknown or stochastic (e.g., Sphero SPRK \cite{Ian_KoopmanMPC}, soft robotics \cite{Bruder_Koopman, soft_robotics}) and environments that are complex or changing (such as sand \cite{Biomimemtic_on_sand, HumanoidRobots_onGrass_sands_rocks, Driving_on_sand} or water \cite{Underwater_navigation_challenges,mamakoukas2018feedback, modeling_control_underwater, mamakoukas2016sequential}) are hard to model accurately. In the face of such uncertainty, robotic applications can often fail due to poor prediction and control. For this reason, system identification methods are used to develop or adapt a model from data \cite{koopman_sindy, Online_system_identification, Osama_adaptive, System_identification_manipulator, Identification_dynamic_model, Identification_humanoid}. 
To further improve the learning and quality of the identified dynamics, researchers are developing active learning methods to strategically maneuver a robot and actively collect measurements that will reduce the uncertainty of the dynamics and the environment \cite{Ian_active_learning, active_learning_inverse_models, active_learning_montecarlo, activelearning_mobile_robotics}. 

\subsection{Data-Driven Applications of Koopman Operators} 
Recent data-driven efforts have focused in particular on Koopman operators \cite{koopman}. Koopman operators are linear embeddings of nonlinear systems \cite{koopman_KIC, koopman_mezic, koopmanism, koopman_mpc} and have gained attention for the purposes of both system identification \cite{koopman_linear_si} and real-time nonlinear control \cite{koopman_mpc}. Besides simplicity, using the Koopman linear representation for control can in certain cases also outperform feedback policies that are based directly on the nonlinear dynamics \cite{brunton_invariant, koopman_kronic}. However, with few exceptions \cite{brunton_invariant, koopman_invariant_Naoya, haseli2019efficient}, Koopman operators are typically infinite-dimensional and studies seek finite-dimensional approximations Koopman operators using methods such as the Dynamic Mode Decomposition (DMD) \cite{DMD}, extended DMD (EDMD)\cite{koopman_datadrivenapproximation_edmd, Koopman_EDMD}, Hankel-DMD \cite{Hankel_DMD}, or closed-form solutions \cite{mamakoukas2021derivative, koopman_si}. 
\IEEEpubidadjcol

These methods have already been used for Koopman-based prediction and control in many applications, such as robotics \cite{mamakoukas2021derivative, Ian_KoopmanMPC, Bruder_Koopman, castano2020Koopman}, human locomotion \cite{applications_human_locomotion}, neuroscience \cite{applications_neuroscience}, fluid mechanics \cite{Applications_fluid_flows}, and climate forecast \cite{Koopman_longterm_prediction}. Due to the high dimensionality of the states or due to limited resources (memory or computational speed), researchers typically compute Koopman representations offline once and do not update them in real time \cite{Bruder_Koopman, RSS2019_MamakoukasCastano}. Without online updates to the model, it is important that the approximate Koopman operator captures properties of the underlying system to remain accurate throughout the state space. Further, it is important that it remains accurate for long time horizons to enable more farsighted control \cite{farSighted}. 
\IEEEpubidadjcol

\subsection{Learning Koopman Models: Challenges and Related Work}

Learning Koopman representations remains an open question, but only few studies have focused on optimizing the long-term accuracy of the data-driven models. Work in \cite{predictiveaccuracy_DMD} computes error bounds for the Dynamic Mode Decomposition, closely related to the Koopman operator, but the analysis is applicable only to systems with parabolic partial differential equations and has restrictive assumptions on the stability of the identified dynamics. Work in \cite{koopman_deeplearning} uses deep learning to identify Koopman eigenfunctions from data and considers a loss function that measures the long-term accuracy of the Koopman operator for an arbitrary number of steps into the future. However, the optimization complexity grows with the number of prediction steps. Given that deep learning requires large datasets, the proposed method is not data-efficient and scales poorly to high-dimensional systems or long prediction horizons.

There is only one study that has so far imposed physics-based properties on the Koopman operator. Work in \cite{Koopman_dissipativity} learns Koopman operators under dissipativity constraints and is the first study that tries to exploit \textit{a priori} information about the nonlinear dynamics. However, the analysis assumes that the dissipativity properties of the system are given, which is not always the case. Further, the authors do not discuss restrictions that dissipative Koopman operators place on the basis functions. For example, inverse functions that diverge at the equilibrium are inconsistent with a dissipative representation and cannot be part of the learned Koopman model. Last, because the optimization problem is numerically intractable, the authors compute solutions to approximate objectives instead. 

Research efforts have typically overlooked whether the properties of the learned model, e.g., stability, are consistent with those of the original system. For example, work in \cite{Koopman_EDMD} makes assumptions on the stability of the underlying nonlinear dynamics---i.e., the system has a single attractor that is (asymptotically) stable---but does not enforce similar constraints on the learned model. As a result, stable nonlinear dynamics are sometimes represented by Koopman operators that are unstable, due to noise, poor quality (e.g., sparse or highly-correlated) measurements, or even limitations of the learning schemes used \cite{Robust_Koopman, SparseData_Koopman, Robust_Koopman_v2}. The need for stable Koopman operators is further highlighted with recent efforts that attempt to enforce stability with indirect methods \cite{sinha2021few} or require a stable Koopman operator for safety-critical control via the synthesis of control barrier functions \cite{folkestad2020data}. Needless to say, when the stability properties of the underlying system and the learned model do not match, the Koopman-based evolution of the states diverges exponentially from the true solution. At the same time, model predictive control with long planning horizon requires models that are accurate over a long time. 

\subsection{Contribution and Structure}
This work considers the data-driven identification of stable Koopman operators (DISKO) for the purposes of predictions that remain numerically stable and accurate over long time horizons, as well as stabilizing control. Specifically, we 
\begin{itemize}
 \item derive the prediction error induced by Koopman models over an arbitrary number of time steps (a result that is used to show the need for stable operators),
 \item provide conditions for choosing Koopman basis functions that are consistent with the stability properties of the underlying nonlinear system and which can improve data-driven learning, 
 \item present a method to construct Lyapunov functions for nonlinear dynamics, which are used to verify stabilizing controllers, and
\item demonstrate the benefits of DISKO for prediction and control in both simulations and experiments.
\end{itemize} 

This paper is structured as follows. Section \ref{sec:: Koopman_Operator} reviews the Koopman Operator framework for prediction and control. Section \ref{sec:: Stability_properties_of_Koopman} i) derives a formula for the prediction error of arbitrary time steps for Koopman representations, ii) presents conditions for admissible basis functions of a Koopman operator that is consistent with the stability properties of underlying nonlinear dynamics, and iii) derives the conditions and methodology for constructing Lyapunov functions for nonlinear systems using stable approximate Koopman operators. Section \ref{sec::DISKO} introduces the data-driven identification of stable Koopman operators (DISKO). Section \ref{sec:: Results} demonstrates the benefits of DISKO in prediction and control of nonlinear systems with simulation and experimental results. Section \ref{sec:: Discussion} summarizes the findings and discusses areas that merit further investigation.

\section{Koopman Operator}\label{sec:: Koopman_Operator}
The Koopman operator $\mathcal{K}$ linearly evolves functions of the states $s(t) \in \mathcal{S} \subseteq \mathbb{R}^N$ (i.e. $\Psi(s(t))$, commonly referred to as observables or basis functions) without loss of accuracy \cite{koopman}. Given general nonlinear dynamics of the form 
\begin{align}
 s(t_k + \Delta t)  = F(s(t_k)),
\end{align}
where $F$ is the flow map, the Koopman operator advances the observables with the flow of the dynamics:
\begin{align}
 \mathcal{K}\Psi = \Psi \circ F.
\end{align}
Formally, the continuous- and discrete-time operators are respectively given by
\begin{equation*}
\resizebox{0.99\hsize}{!}{$
\begin{aligned}
 \frac{d}{dt} \Psi(s(t)) = \mathcal{K} \Psi(s(t)) \quad \text{and} \quad \Psi(s(t_k + \Delta t)) = \mathcal{K}_d \Psi(s(t_k)),
\end{aligned}
$}
\end{equation*}
where the two operators are linked via $\mathcal{K} = \log(\mathcal{K}_d)/\Delta t$\cite{antsaklis2006linear} (Chapter 2, p. 156). Although a linear representation, the Koopman operator evolves nonlinear dynamics with full fidelity throughout the state space, contrary to methods that locally linearize dynamics around a point or a trajectory. For a more comprehensive review of the Koopman operator, we refer the reader to \cite{koopmanism}.

\subsection{Data-Driven Approximations of Koopman Operators}
With few exceptions \cite{brunton_invariant}, Koopman operators are infinite-dimensional and recent studies focus on obtaining finite-dimensional approximations for the purposes of system identification and control \cite{Koopman_system_identification, Ian_KoopmanMPC, Bruder_Koopman}. There are several methods to approximate Koopman operators, such as DMD \cite{DMD}, EDMD \cite{koopman_datadrivenapproximation_edmd, Koopman_EDMD}, Hankel-DMD \cite{Hankel_DMD}, closed-form solutions \cite{RSS2019_MamakoukasCastano, koopman_si}, or regression techniques, such as Least Absolute Shrinkage and Selection Operator (LASSO) regression \cite{LASSO, Bruder_Koopman}. In this paper, we consider the least-squares solutions of the local one-time-step error across $P$ measurements given by
\begin{equation}\label{eq:: Kd_LS}
\begin{aligned}
\tilde{\mathcal{K}}^*_d = \underset{\tilde{\mathcal{K}}_d}{\operatorname{argmin}}&~ \sum_{k = 1}^{P}\frac{1}{2}\lVert \Psi(s(t_k + \Delta t), u(t_k + \Delta t)) \\ &- \tilde{\mathcal{K}}_d \Psi(s(t_k), u(t_k))\rVert^2,
\end{aligned}
\end{equation}
where $\tilde{\mathcal{K}}_d \in \mathbb{R}^{W\times W}$ is the finite-dimensional approximation of the Koopman operator and $\Psi(s(t_k)): \mathbb{R}^N \mapsto \mathbb{R}^W$ are the observables.
Each measurement $k$ consists of the initial state $s(t_k)$, final state $s(t_k + \Delta t)$ and the actuation applied at the same instants, $u(t_k)$ and $u(t_k+\Delta t)$, respectively. Equation \eqref{eq:: Kd_LS} has a closed-form solution given by
\begin{align}\label{eq:: Kd_AG}
\tilde{\mathcal{K}}^*_d = \mathcal{A}\mathcal{G}^\dagger,
\end{align}
with 
\begin{align*}
\mathcal{A} =&\sum_{k = 1}^{P} \Psi(s(t_k + \Delta t), u(t_k + \Delta t)) \Psi(s (t_k), u(t_k))^T \intertext{and}
\mathcal{G} =& \sum_{k = 1}^{P} \Psi(s(t_k), u(t_k)) \Psi(s(t_k), u(t_k))^T
\end{align*}
where $^\dagger$ refers to the Moore-Penrose pseudoinverse. Note that the time spacing $\Delta t$ between measurements $s(t_k)$ and $s(t_k + \Delta t)$ must be consistent for all $P$ training measurements. 

Note that we use the $\sim$ notation to indicate the \textit{data-driven, approximate} value of the Koopman operator, and not its dimension. We use $\mathcal{K}$ to refer to the Koopman operator, which is typically infinite-dimensional, but can also be finite-dimensional (as is proven for some systems \cite{brunton_invariant}); we use $\tilde{\mathcal{K}}$ to refer to the finite-dimensional \textit{data-driven approximation} of the Koopman operator. On the other hand, we use $\Psi$ to refer to the vector of analytical functions that represent the basis functions of arbitrary (finite or infinite) dimension, associated with either an exact or approximate Koopman model.

Koopman operators offer an easily implementable system identification framework that is conducive to linear control tools for nonlinear dynamical systems. The linear representation makes it easier to analyze the properties of the underlying system, such as model accuracy or regions of attraction \cite{koopman_basisofattraction, koopman_stabilityanalysis}. Further, Koopman operator-based control can even outperform feedback that utilizes full knowledge of the nonlinear dynamics \cite{brunton_invariant, koopman_kronic}. 

\subsection{Control of Nonlinear Dynamics Using Koopman Operators}
Consider a linear system with states $s(t) \in \mathbb{R}^N$, control $u(t) \in \mathbb{R}^M$, and a discrete-time performance objective \vspace{-0.1cm}
\begin{align}\label{eq::objective}
J = \frac{1}{2}\sum_{t_k=0}^{\infty} \lVert s\left(t_k\right)-s_{des}\left(t_k\right)\rVert_Q^2 + \lVert u\left(t_k\right)\rVert_R^2,
\end{align}
where $Q \succeq 0 \in \mathbb{R}^{N\times N}$ and $R \succ 0 \in \mathbb{R}^{M \times M}$ are weights on the deviation from the desired states and the applied control, respectively. Next, we use the Koopman operator dynamics to design an equivalent objective function and a control response for the original nonlinear system. 

To simplify the analysis, we choose basis functions that depend separately on the states and control. That is, we consider basis functions $\Psi(s(t_k), u(t_k))$ = $[\Psi_s(s(t_k)), \Psi_{u} (u(t_k))]^T$, where $\Psi_s(s(t_k)) \in \mathbb{R}^{W_s}$ are the Koopman basis functions that depend only on the states, and $ \Psi_{u} (u(t_k)) \in \mathbb{R}^{W_u}$ are those that depend only on the input, such that $W = W_s + W_u$. Then, we can write the Koopman dynamics as 
\begin{align*}
\Psi(s(t_k + \Delta t), u(t_k + \Delta t)) =& \begin{bmatrix} \Psi_s(s(t_k + \Delta t)) \\ \Psi_{u} (u(t_k + \Delta t)) \end{bmatrix} \\\approx& 
\begin{bmatrix}
A & B \\
\cdot & \cdot
\end{bmatrix}
\begin{bmatrix} \Psi_s(s(t_k)) \\ \Psi_{u} (u(t_k)) \end{bmatrix},
\end{align*}
where $(\cdot)$ is used to indicate the terms associated with predicting the evolution of control, which is of no interest in this paper as it will be determined by the feedback policy. The terms $A \in \mathbb{R}^{W_s \times W_s}$ and $B \in \mathbb{R}^{W_s \times W_u}$ are sub-matrices of $\tilde{\mathcal{K}}_d$ and are fixed unless $\tilde{\mathcal{K}}_d$ is updated. Note that the dynamical equation has been modified to allow for control inputs \cite{koopman_KIC}. Here, we choose $\Psi_u(u(t_k)) = u(t_k)$ to simplify the dynamics for the purposes of LQR control.  The dynamics of the Koopman state-dependent basis functions are then
\begin{align}\label{eq:: AxBu_Koopman}
\Psi_s(s(t_k + \Delta t)) \approx 
A \Psi_s(s(t_k)) + B u(t_k).
\end{align}
In this paper, we always choose the system states as the first $N$ basis functions. Then, to retrieve the states from the Koopman prediction, one simply uses the first $N$ basis functions.

Given the Koopman representation, we write the discrete-time performance objective as
\begin{align}\label{eq:: J_K}
J_{\tilde{\mathcal{K}}} = \frac{1}{2}\sum_{t_k=0}^{\infty} & \lVert \Psi_s\left(s\left(t_k\right)\right) - \Psi_s\left(s_{des}\left(t_k\right)\right)\rVert_{Q_{\tilde{\mathcal{K}} }}^2 
+ \frac{1}{2}\lVert u(t_k)\rVert_R^2,
\end{align}
where $Q_{\tilde{\mathcal{K}} } \succeq 0 \in \mathbb{R}^{W_s \times W_s}$ penalizes the deviation from the desired observable functions $\Psi_s(s_{des}(t_k))$. We then use the Koopman representation to develop linear quadratic regulator (LQR) feedback of the form
\begin{align} \label{eq:: K_LQR}
u(t_k) = - K_{LQR} (\Psi_s(s(t_k)) - \Psi_s(s_{des}(t_k) )).
\end{align}
where $K_{LQR} \in \mathbb{R}^{W_u \times W_s}$ are the LQR gains.

\section{Stable Koopman Operators}\label{sec:: Stability_properties_of_Koopman}
In this section, we derive the error induced by approximate Koopman operators over an arbitrary number of time steps into the future. We then use the error expression to motivate imposing stability on the operators, sometimes even in cases when the underlying system is unstable. Then, we present conditions on the basis functions that are consistent with stable Koopman operators. Last, we demonstrate how to construct Lyapunov functions using stable data-driven Koopman operators. In the following analysis, we make use of the definitions for unstable, (marginally) stable, and asymptotically stable matrices.

\begin{definition} [Continuous-time: Theorem 8.1, \cite{hespanha2018linear}] A square matrix $A$ is i) unstable if and only if at least one of its eigenvalues has a positive real part or zero real part with a Jordan block of size larger than 1; ii) asymptotically stable or Hurwitz if and only if all the eigenvalues have strictly negative real parts; iii) (marginally) stable if and only if all of the eigenvalues have negative or zero real parts and all the eigenvalues with zero real parts are distinct.
\end{definition}

The analysis in Section \ref{sec:: Stability_properties_of_Koopman} discusses the properties of continuous-time Koopman models. The algorithm used to identify stable Koopman models uses state measurements and calculates a discrete-time representation by bounding the magnitude of the eigenvalues. For completeness, we include the definition of stable matrices in discrete-time as well. 

\begin{definition} [Discrete-time: Theorem 8.3, \cite{hespanha2018linear}] A square matrix $A$ is i) unstable if and only if at least one of its eigenvalues has magnitude greater than 1; ii) asymptotically stable or Schur if and only if all the eigenvalues have magnitude strictly less than 1; iii) (marginally) stable if and only if the maximum magnitude of its eigenvalues is 1 and the eigenvalues with magnitude equal to 1 are all distinct.
\end{definition}
While in \cite{hespanha2018linear} an asymptotically stable system includes (marginally) stable systems, in this work we consider the two stability types separately. That is, by (marginally) stable we will refer to matrices and systems that are neither unstable nor asymptotically stable. For the rest of this paper, we will refer to a system that is either (marginally) stable or asymptotically stable as stable to indicate that the system is not unstable. Note that if the response of a linear system is bounded, that is $\|s(t) \| \le c, \forall~t\ge0$ for some $c > 0$, then the system is stable (Definition 8.1 in \cite{hespanha2018linear}). As a result, a nonlinear system with a bounded response can be represented without loss of accuracy only by stable Koopman operators.
\subsection{Global Error of Approximate Koopman Operators} \label{ssec::global_error}
\begin{figure}
	\centering
	\includegraphics[width= \columnwidth, keepaspectratio]{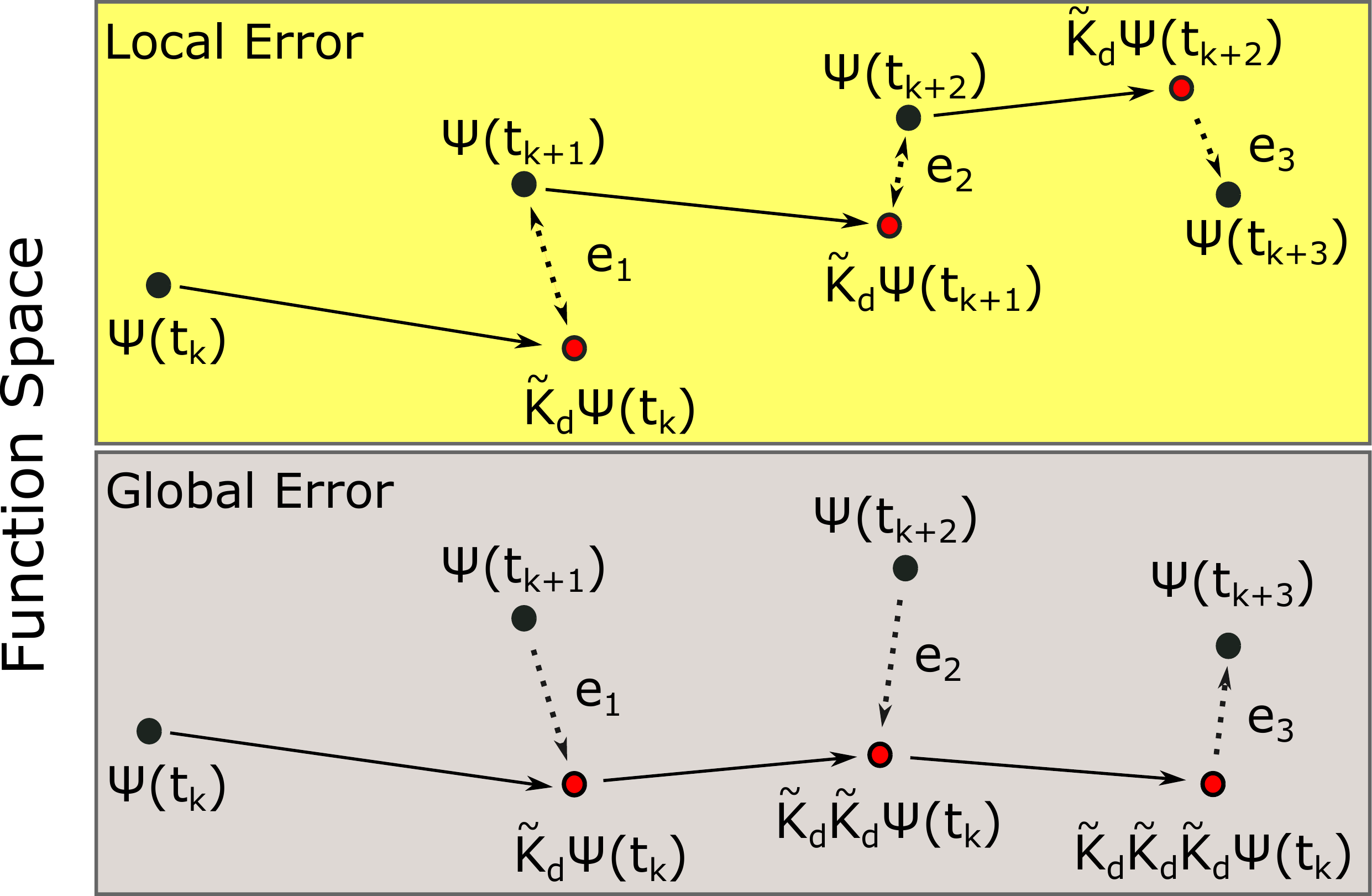} 
	\hfill%
	\caption{Local and global errors (in time) induced by approximate Koopman operators. The local error considers the accuracy of the model across a single time step; the global error considers the accuracy of the model across all time steps and is a more representative metric of long-term accuracy, assuming states are not updated in every time step.} 
	\label{fig:: Loval vs Global} 
\end{figure}

\subsubsection{Notation}
At time $t_0 + n \Delta t$, we use $\Psi(s(t_0 + n \Delta t))$ to indicate the true value of the basis function evaluated with the true state value $s(t_0 + n \Delta t)$ and $\tilde{\Psi}_n$ to indicate the approximate solution, where $n \in \mathbb{Z}^+ $ indicates the number of time steps into the future. We use $e_n$ to be the local error at the $n$th time step induced by the approximate Koopman operator $\tilde{\mathcal{K}}_d$, that is
\begin{align}\label{eq:: local_error}
e_{n} \equiv \Psi(s(t_0 + n \Delta t)) - \tilde{\mathcal{K}}_d \Psi(s(t_0 + (n-1) \Delta t)),
\end{align}
which assumes that the Koopman operator propagates the true value of the basis functions from the previous time step. 
Similarly, we use $E_n$ to refer to the global error at the $n$th time step:
\begin{align}\label{eq:: GE_simplified}
E_{n} \equiv & \Psi(s(t_0 + n \Delta t)) - \tilde{\mathcal{K}}_d^n \Psi(s(t_0)),
\end{align}
which, for $n=1$, matches the local error \eqref{eq:: local_error}. However, note that the global error \eqref{eq:: GE_simplified} is not an accumulation of the local errors \eqref{eq:: local_error}: $E_n \ne \sum_i^n e_i$. We illustrate the difference between local and global error in Fig. \ref{fig:: Loval vs Global}.

\subsubsection{Derivation of Global Error}
Consider the true solution at some time step $t_0 + n \Delta t$, which can be written using \eqref{eq:: local_error} as
\begin{align}\label{eq:: error_first_backward_step}
\Psi(s(t_0 + n \Delta t)) = \tilde{\mathcal{K}}_d \Psi(s(t_0 + (n-1) \Delta t)) + e_{n}. 
\end{align}
Similarly, 
\begin{align}\label{eq:: error_second_backward_step}
\Psi(s(t_0 + (n-1) \Delta t)) = \tilde{\mathcal{K}}_d \Psi(s(t_0 + (n-2) \Delta t)) + e_{n-1}. 
\end{align}
Plugging \eqref{eq:: error_second_backward_step} into \eqref{eq:: error_first_backward_step}
\begin{equation}
\resizebox{0.99\hsize}{!}{$
\begin{aligned}
\Psi(s(t_0 + n \Delta t)) =&~\tilde{\mathcal{K}}_d \big(\tilde{\mathcal{K}}_d \Psi(s(t_0 + (n-2) \Delta t)) + e_{n-1}\big) + e_{n} \\
 =&~\tilde{\mathcal{K}}_d^2 \Psi(s(t_0 + (n-2) \Delta t)) + \tilde{\mathcal{K}}_d e_{n-1} + e_{n}. 
\end{aligned}
$}
\end{equation}
Recursively expressing the solution in terms of the true values of the basis functions in the previous steps and the corresponding local error yields
\begin{align*}
\Psi(s(t_0 + n \Delta t)) =& \tilde{\mathcal{K}}_d^n \Psi(s(t_{0})) + \sum_{i=0}^{n-1} \tilde{\mathcal{K}}_d^i e_{n-i}.
\end{align*} 
Therefore, using \eqref{eq:: GE_simplified}, the global error at $t_0 + n \Delta t$ is 
\begin{align*}
E_{n} = \sum_{i=0}^{n-1} \tilde{\mathcal{K}}_d^i e_{n-i}.
\end{align*}
\subsubsection{Global Error Bound}
Next, we compute an upper bound for the global error. We use an induced norm that satisfies the properties of triangle inequality ($
\lVert A + B \rVert \le \lVert A\rVert + \lVert B\rVert$), subordinance ($\lVert Ax \rVert \le \lVert A \rVert \lVert x \rVert$) and submultiplicativity ($
\lVert AB \rVert \le \lVert A\rVert \cdot \lVert B\rVert
)$, such that
\begin{align*}
 \lVert E_{n} \rVert =&~ \lVert \sum_{i=0}^{n-1} \tilde{\mathcal{K}}_d^i e_{n-i}\rVert \\
 \le& \sum_{i=0}^{n-1} \lVert \tilde{\mathcal{K}}_d^i e_{n-i}\rVert & \tag{triangle inequality}\\
 \le& \sum_{i=0}^{n-1} \lVert\tilde{\mathcal{K}}_d^i\rVert \cdot \lVert e_{n-i}\rVert &\tag{subordinance}\\
 \lVert E_{n} \rVert \le& \sum_{i=0}^{n-1} \lVert\tilde{\mathcal{K}}_d\rVert^i \cdot \lVert e_{n-i}\rVert. \tag{submultiplicativity}
\end{align*}
Assuming that the local error is bounded, that is there exists $\lVert e_{max}\rVert$ such that $\lVert e_{i}\rVert \le \lVert e_{max}\rVert~\forall~i \in [1,n]$, we can simplify the upper bound to the global error to
\begin{align}\label{eq:: global_error_bound}
\lVert E_{n} \rVert \le& \lVert e_{max}\rVert \cdot \sum_{i=0}^{n-1} \lVert\tilde{\mathcal{K}}_d\rVert^i.
\end{align}
Note that
\begin{align*}
\lVert e_{max} \rVert = 0 \iff \lVert E_{k+n} \rVert = 0, 
\end{align*}
which is true for invariant subspaces.

From \eqref{eq:: global_error_bound}, if $\tilde{\mathcal{K}}_d$ is unstable, then the power of the matrix norm diverges as the number of time steps increases. This means that an unstable $\tilde{\mathcal{K}}_d$ amplifies exponentially even small errors, and thus renders the long-term prediction of the Koopman representation impractical. \textit{Instead, we propose that one should use a stable operator that generates similar local errors, but which also has the additional benefit of numerical stability over long-term predictions.} Note that it is possible that such a stable operator can generate similar local errors even for unstable dynamics. In fact, later in Section \ref{sec:: Results}, we show an example where a stable operator improves stabilization of a quadrotor that is unstable. 

Note that the eigenvalue profile of the Koopman model can be at times misleading in terms of bounding the power of the operator, because the upper bound can be itself large, as is pointed out in \cite{bounds_on_powers_of_matrices, peaks_discrete_linear,peak_bounds_discrete_linear}. Such scenarios are shown to occur for moderately large dimensions of the operator ($W \approx 100$) and do not apply in the examples of this paper. Exploiting the conditions that prevent large error growth in the transient response of a system, which is related to the strong stability property \cite{strong_stability_discrete}, for high-dimensional operators is left for future work. 

\subsection{Stability-Based Conditions for Koopman Basis Functions}
Given the importance of the spectral properties of the Koopman operator on the accuracy of long-term predictions, we next consider the implications of stability on the choice of basis functions. Specifically, we relate the stability properties of original nonlinear dynamics to the stability properties of the Koopman representation and present necessary conditions for the basis functions so that they are consistent with dynamics that have a stable (or an asymptotically stable) equilibrium. Although in practice one can impose stability on arbitrary Koopman models, choosing basis functions that are not consistent with stable dynamics could lead to unstable representations and worse training errors. We present the analysis for continuous-time dynamics, which are arguably the default expression, but equivalent relationships can be extended to the discrete-time case.

Consider a nonlinear dynamical system with states $s(t) \in \mathcal{S}$. We use $\mathrm{f}(s(t)) \in \mathbb{R}^{N} \mapsto \mathbb{R}^N$ to refer to the dynamics of the system represented in the state space $\mathcal{S}$, that is 
\begin{align}\label{eq:: nonlinear_dynamics_states}
 \mathrm{f}(s(t))\:\triangleq\:\frac{d}{dt} s(t) 
\end{align}
and $f(\Psi(s(t)))$ to refer to the associated Koopman dynamics given by
\begin{align} \label{eq:: nonlinear_dynamics_Koopman}
 f(\Psi(s(t)))\:\triangleq \frac{d}{dt} \Psi(s(t)) = \mathcal{K} \Psi(s(t)).
\end{align}

Then, we formally define the relationship of equivalent nonlinear and Koopman dynamics as follows.
\begin{definition}
Consider dynamics of the form in \eqref{eq:: nonlinear_dynamics_states} such that 
\begin{align}
 s(t) = s(0) + \int_0^t \mathrm{f}(s(\tau)) d\tau
\end{align}
and Koopman dynamics of the form in \eqref{eq:: nonlinear_dynamics_Koopman} such that
\begin{align}
 \Psi(s(t)) = \Psi(s(0)) + \int_0^t f(\Psi(s(\tau))) d\tau.
\end{align}
Then, we define dynamics \eqref{eq:: nonlinear_dynamics_states} and \eqref{eq:: nonlinear_dynamics_Koopman} to be equivalent if and only if
\begin{align}
 \int_0^t f(\Psi(s(\tau))) = \Psi(s(0)) + \Psi(\int_0^t \mathrm{f}(s(\tau)) d\tau).
\end{align}
Similarly, in discrete-time, 
\begin{align}
 f(\Psi(s(t_k))) = \Psi(s(t_k + \Delta t)) = \Psi(\mathrm{f}(s(t_k))).
\end{align}
\end{definition}

If the two representations \eqref{eq:: nonlinear_dynamics_states} and \eqref{eq:: nonlinear_dynamics_Koopman} are equivalent throughout the state space, that is, they evolve the dynamics in identically the same way, then we argue that certain properties must be true. 

\begin{proposition}\label{prop: conditions_for_Koopman_basis}
For a nonlinear dynamical system \eqref{eq:: nonlinear_dynamics_states} and an equivalent Koopman representation \eqref{eq:: nonlinear_dynamics_Koopman}, the following are true:
\begin{enumerate}
 \item If $s_e$ is an equilibrium for the nonlinear dynamical system \eqref{eq:: nonlinear_dynamics_states}, then it is an equilibrium for the equivalent Koopman dynamics \eqref{eq:: nonlinear_dynamics_Koopman}. That is, 
 \begin{align*}
 \mathrm{f}(s_e) = 0 \Longrightarrow f(\Psi(s_e)) = 0. 
 \end{align*}
 \item If $s_e$ is a Lyapunov-stable equilibrium for the equivalent nonlinear dynamical system \eqref{eq:: nonlinear_dynamics_states}, then it is a Lyapunov-stable equilibrium for the Koopman dynamics \eqref{eq:: nonlinear_dynamics_Koopman}. That is, if for every $\epsilon_s > 0$ there exists $\delta_s$ such that 
\begin{gather*}
 \| s(0) - s_e \| < \delta_s \Longrightarrow \| s(t) - s_e \| < \epsilon_s~
 \\\forall~t\ge 0,
\end{gather*}
then, for every $\epsilon_\Psi > 0$ there exists $\delta_\Psi$ such that
\begin{flalign*}
 &&\| \Psi(s(0)) - \Psi(s_e) \| <& \delta_\Psi& 
 \\\Longrightarrow && \| \Psi(s(t)) - \Psi(s_e) \| <& \epsilon_\Psi~\forall~t\ge 0.
\end{flalign*}
 
 \item If $s_e$ is an asymptotically stable equilibrium for the nonlinear dynamical system \eqref{eq:: nonlinear_dynamics_states}, then it is also an asymptotically stable equilibrium in $\mathcal{D}$ for the Koopman dynamics \eqref{eq:: nonlinear_dynamics_Koopman}. That is, if $s_e$ is Lyapunov stable for the nonlinear dynamical system \eqref{eq:: nonlinear_dynamics_states} and there exists $\delta_s$ such that, 
 \begin{align*}
 \|s(0) - s_e \| < \delta \Longrightarrow \lim_{t\to\infty} \|s(t) - s_e\| = 0,
 \end{align*}
 then, there exists $\delta_\Psi$
 \begin{align*}
 &\|\Psi(s(0)) - \Psi(s_e) \| < \delta_\Psi
 \\\Longrightarrow &\lim_{t\to\infty} \|\Psi(s(t)) - \Psi(s_e)\| = 0.
 \end{align*}
\end{enumerate}

\end{proposition}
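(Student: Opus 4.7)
The plan is to leverage two structural facts set up earlier in the paper. First, the equivalence between \eqref{eq:: nonlinear_dynamics_states} and \eqref{eq:: nonlinear_dynamics_Koopman} means that evaluating $\Psi$ along the true state trajectory produces exactly the Koopman trajectory, so any constant or convergent state trajectory transfers to a constant or convergent observable trajectory. Second, the paper's standing convention that the system states are the first $N$ basis functions gives, for any standard $p$-norm, the lower bound $\|s - s_e\| \le \|\Psi(s) - \Psi(s_e)\|$. I would additionally invoke continuity of $\Psi$ at $s_e$, which is standard for any practical dictionary, to supply a matching upper bound.

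For claim (1), I would start from the constant trajectory $s(t)\equiv s_e$, which solves the nonlinear dynamics because $\mathrm{f}(s_e)=0$. Equivalence of the two representations forces $\Psi(s(t))\equiv \Psi(s_e)$ to be the corresponding Koopman trajectory; differentiating in $t$ gives $f(\Psi(s_e))=\mathcal{K}\Psi(s_e)=0$, so $\Psi(s_e)$ is an equilibrium of \eqref{eq:: nonlinear_dynamics_Koopman}.

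For claim (2), the strategy is to sandwich the observable trajectory between the two bounds. Given $\epsilon_\Psi>0$, continuity of $\Psi$ at $s_e$ yields $\eta>0$ with $\|s-s_e\|<\eta \Rightarrow \|\Psi(s)-\Psi(s_e)\|<\epsilon_\Psi$. Lyapunov stability of $s_e$ in state space, applied with $\epsilon_s = \eta$, produces $\delta_s>0$ such that $\|s(0)-s_e\|<\delta_s$ implies $\|s(t)-s_e\|<\eta$ for all $t\ge 0$. Setting $\delta_\Psi = \delta_s$ and using the lower bound, $\|\Psi(s(0))-\Psi(s_e)\|<\delta_\Psi$ forces $\|s(0)-s_e\|<\delta_s$, which in turn yields $\|s(t)-s_e\|<\eta$ and finally $\|\Psi(s(t))-\Psi(s_e)\|<\epsilon_\Psi$, closing the stability claim. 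For claim (3), I would reuse (2) for the Lyapunov-stable part and handle attractivity separately: pick $\delta_\Psi$ inside the state-space attractivity basin so that $s(t)\to s_e$, then invoke continuity of $\Psi$ at $s_e$ to upgrade this to $\Psi(s(t))\to \Psi(s_e)$.

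The main obstacle is less in the deductions themselves and more in surfacing the regularity assumptions that the statement leaves implicit. Continuity of $\Psi$ at $s_e$ and the lower bound $\|s-s_e\| \le \|\Psi(s)-\Psi(s_e)\|$ are both essential: without the former, one cannot convert a small state-space neighborhood into a small observable-space neighborhood, and without the latter there is no way to translate an observable-space initial condition back into a state-space one. I would state both hypotheses at the top of the proof and explicitly credit the state-as-basis-function convention for the lower bound, rather than treating it as a generic property of arbitrary Koopman dictionaries; otherwise the proposition as stated can fail for dictionaries that are highly redundant or degenerate near $s_e$.
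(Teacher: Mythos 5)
Your proposal is correct, and for parts (2) and (3) it takes a genuinely different --- and in one respect more careful --- route than the paper. For part (1) the paper argues via the chain rule, $\tfrac{d}{dt}\Psi(s(t)) = \tfrac{d\Psi}{ds}\,\mathrm{f}(s(t))$, so that $\mathrm{f}(s_e)=0$ annihilates the product; your constant-trajectory argument reaches the same conclusion through the equivalence definition and is essentially interchangeable. The real divergence is in part (2): the paper works only with the forward Lipschitz bound of Assumption \ref{ass:: psi_are_lipschitz}, showing $\|s(0)-s_e\|<\delta_s \Rightarrow \|\Psi(s(0))-\Psi(s_e)\|<L_\Psi\delta_s$ and then declaring $\delta_\Psi = L_\Psi\delta_s$, $\epsilon_\Psi = L_\Psi\epsilon_s$. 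That establishes the implication only when the hypothesis is quantified over state-space balls; it never converts the stated hypothesis $\|\Psi(s(0))-\Psi(s_e)\|<\delta_\Psi$ back into a bound on $\|s(0)-s_e\|$, which is the direction the proposition actually demands. Your proof supplies exactly the missing ingredient: the lower bound $\|s-s_e\|\le\|\Psi(s)-\Psi(s_e)\|$ coming from the convention that the states are the first $N$ basis functions, combined with continuity of $\Psi$ at $s_e$ for the upper direction (a weaker hypothesis than the paper's global Lipschitz constant). The same remark applies to part (3), where the paper again only runs the Lipschitz inequality forward. What the paper's approach buys is brevity and explicit constants $\delta_\Psi = L_\Psi\delta_s$; what yours buys is a logically airtight reversal of the observable-space hypothesis into a state-space one, and an honest acknowledgement that the proposition needs some injectivity-type condition on the dictionary near $s_e$ (here furnished by embedding the states in $\Psi$) to be true as stated. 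It would be worth noting that this added hypothesis is one the paper uses elsewhere but does not invoke in its own proof.
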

\begin{proof}
See Appendix \ref{appendix: Necessary conditions for Koopman Basis Functions}. 
\end{proof}

Note that the conditions on the Koopman basis functions in Proposition \ref{prop: conditions_for_Koopman_basis} consider equivalent Koopman representations and nonlinear dynamics. These conditions need to be satisfied even for approximate Koopman operators in order for the Koopman representation to be consistent with the stability properties of the original system. Next, we use Proposition \ref{prop: conditions_for_Koopman_basis} to relate the stability properties of the original nonlinear dynamics to the properties of a Koopman operator and its associated basis functions. Using this relationship, we derive necessary conditions for both the operator and admissible basis functions associated with stable nonlinear systems. 

\subsubsection{Conditions on Koopman Operators for Stable Systems}
First, we derive stability properties for a Koopman representation that is consistent with its associated original stable, in the sense of Lyapunov, system. The analysis rests on Assumptions \ref{ass:: bounded_states} and \ref{ass:: psi_are_lipschitz}. 
\begin{assumption}\label{ass:: bounded_states}
All states $s(t) \in \mathbb{R}^N$ of a nonlinear dynamical system \eqref{eq:: nonlinear_dynamics_states} remain bounded for all $t$. That is, for some $\epsilon \ge 0$, $\lVert s(t) \rVert \le \epsilon$ for all $t \ge 0$. 
\end{assumption}

\begin{assumption}\label{ass:: psi_are_lipschitz}
The basis functions $\Psi(s(t))$ are Lipschitz with a Lipschitz constant $L_\Psi$. That is, 
\begin{align*}
 \| \Psi(s_1(t)) - \Psi(s_2(t)) \| \le L_\Psi \|s_1(t) - s_2(t)\| ~\forall~s_1, s_2 \in \mathcal{S}.
\end{align*}
\end{assumption}
\noindent Note that the Lipschitz constant $L_\Psi$ need not be known.

Next, we prove that only a stable Koopman operator can accurately represent nonlinear dynamics that are Lyapunov stable. 

\begin{definition}\label{def:: Bounded_Region}
Consider a nonlinear dynamical system \eqref{eq:: nonlinear_dynamics_states} with bounded states $s(t) \in \mathbb{R}^N$. We define $\mathcal{D}_\epsilon \subseteq \mathcal{S}$ as a region of the state space such that, if $\lVert s(t) \rVert \le \epsilon$, then $ s(t) \in \mathcal{D}_\epsilon$ for all $t \ge 0$. 
\end{definition}

\begin{theorem}[Lyapunov Stability]\label{th:: Luapunov_Stability}
Consider a nonlinear dynamical system \eqref{eq:: nonlinear_dynamics_states} and an equivalent Koopman representation \eqref{eq:: nonlinear_dynamics_Koopman}. If $s_e$ is a Lyapunov-stable equilibrium, then the Koopman operator $\mathcal{K}$ is not unstable. 
\end{theorem}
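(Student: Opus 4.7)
The plan is to lift the Lyapunov-stability hypothesis from the nonlinear state space into the observable space via Proposition \ref{prop: conditions_for_Koopman_basis}, and then exploit the fact that the Koopman dynamics are \emph{linear} to conclude a matrix-level stability statement about $\mathcal{K}$ itself. The argument is naturally phrased as a proof by contradiction: assume $\mathcal{K}$ is unstable and construct a small perturbation whose resulting trajectory grows unboundedly, violating Lyapunov stability of $\Psi(s_e)$.

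The first step is to set up the linear dynamics around the equilibrium. By part 1 of Proposition \ref{prop: conditions_for_Koopman_basis}, $\mathrm{f}(s_e)=0$ implies $\mathcal{K}\Psi(s_e)=0$, so $\Psi(s_e)$ is an equilibrium of the Koopman ODE \eqref{eq:: nonlinear_dynamics_Koopman}. Defining the shifted observable $y(t)\triangleq\Psi(s(t))-\Psi(s_e)$, equivalence of the representations gives $\dot y(t)=\mathcal{K}y(t)$, whose flow is $y(t)=e^{\mathcal{K}t}y(0)$. By part 2 of Proposition \ref{prop: conditions_for_Koopman_basis}, Lyapunov stability of $s_e$ in \eqref{eq:: nonlinear_dynamics_states} translates into: for every $\epsilon_\Psi>0$ there exists $\delta_\Psi>0$ such that $\|y(0)\|<\delta_\Psi$ implies $\|y(t)\|<\epsilon_\Psi$ for all $t\ge 0$.

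Next I would run the contradiction. Suppose $\mathcal{K}$ is unstable in the sense of the continuous-time definition, so it has either an eigenvalue with strictly positive real part or a repeated eigenvalue on the imaginary axis with a nontrivial Jordan block. In either case there exists a (generalized) eigendirection $v$ with $\|e^{\mathcal{K}t}v\|\to\infty$. Rescaling, for any $\delta_\Psi>0$ one obtains $v_\delta=\tfrac{\delta_\Psi}{2\|v\|}v$ with $\|v_\delta\|<\delta_\Psi$ yet $\|e^{\mathcal{K}t}v_\delta\|\to\infty$. If $v_\delta$ is realized as $\Psi(s(0))-\Psi(s_e)$ for some admissible $s(0)$, this contradicts the bound $\|y(t)\|<\epsilon_\Psi$ inherited from step one, and so $\mathcal{K}$ cannot be unstable, leaving only the asymptotically stable or marginally stable cases.

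The main obstacle is precisely the realization condition in the last step: the image of $\Psi$ is at most an $N$-dimensional submanifold of the $W$-dimensional observable space, so not every direction $v$ need be attainable by perturbing $s(0)$. I would handle this by invoking the equivalence of \eqref{eq:: nonlinear_dynamics_states} and \eqref{eq:: nonlinear_dynamics_Koopman}, which forces $\mathcal{K}$ to leave the tangent space to $\Psi(\mathcal{S})$ at $\Psi(s_e)$ invariant; any unstable mode of $\mathcal{K}$ living strictly outside this invariant subspace plays no role in the evolution of admissible trajectories and can be removed without changing the representation, so without loss of generality every unstable mode is excited by some admissible perturbation. Assumption \ref{ass:: psi_are_lipschitz} (Lipschitz continuity of $\Psi$) is then used to guarantee that such an admissible $s(0)$ can be chosen arbitrarily close to $s_e$, closing the contradiction and yielding the theorem.
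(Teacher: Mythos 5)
Your first step is exactly the paper's: invoke Proposition \ref{prop: conditions_for_Koopman_basis} to transfer Lyapunov stability of $s_e$ to Lyapunov stability of $\Psi(s_e)$ under the Koopman dynamics. Where the paper then simply cites the standard linear-systems characterization (the origin of $\dot y = \mathcal{K}y$ is Lyapunov stable iff no eigenvalue has positive real part and all imaginary-axis eigenvalues have trivial Jordan structure), you re-prove the needed direction by contradiction using a rescaled generalized eigendirection. That unpacking is fine as linear algebra, and it has the virtue of exposing a subtlety the paper's one-line citation hides: the stability inherited from Proposition \ref{prop: conditions_for_Koopman_basis} only controls initial conditions of the form $\Psi(s(0))-\Psi(s_e)$, which lie on the (at most $N$-dimensional) image of $\Psi$ inside $\mathbb{R}^W$, whereas the spectral characterization is an equivalence for perturbations ranging over all of $\mathbb{R}^W$.

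The weak point is your fix for that realization issue. Declaring that unstable modes transverse to the invariant subspace generated by admissible perturbations ``can be removed without changing the representation, so without loss of generality every unstable mode is excited'' does not prove the theorem as stated: removing those modes replaces $\mathcal{K}$ by its restriction $\mathcal{K}'$ to that subspace, and your contradiction then shows only that $\mathcal{K}'$ is not unstable --- the original $\mathcal{K}$ could still carry an unexcited unstable Jordan block and remain unstable as a matrix. (Note also that the relevant invariant object is the smallest $\mathcal{K}$-invariant subspace containing $\{\Psi(s)-\Psi(s_e)\}$ for $s$ near $s_e$, which need not coincide with the tangent space to $\Psi(\mathcal{S})$ at $\Psi(s_e)$.) To be fair, the paper's proof leans on the same unstated identification, treating Lyapunov stability of $\Psi(s_e)$ as stability of the linear system against arbitrary perturbations in $\mathbb{R}^W$, so your argument is no weaker than the published one; but to make it watertight you would need either a spanning or minimality hypothesis on the observables near $s_e$, or to restate the conclusion for the restriction of $\mathcal{K}$ to the subspace actually reached by the dynamics.
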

\begin{proof}
From Proposition \ref{prop: conditions_for_Koopman_basis}, if $s_e$ is a Lyapunov-stable equilibrium for the nonlinear dynamics, then it is also a Lyapunov-stable solution for the Koopman dynamics \eqref{eq:: nonlinear_dynamics_Koopman}.

\noindent From the definition of stability for linear systems \cite{antsaklis2006linear} (Definition 4.1 and Theorem 5.6), \cite{pritchard1981stability}, $\Psi(s_e)$ is a Lyapunov-stable equilibrium for the Koopman dynamics \eqref{eq:: nonlinear_dynamics_Koopman} if and only if $\mathcal{K}$ is not unstable, specifically $\mathrm{Re}[\lambda_i] \le0$, for each eigenvalue $\lambda_i$ of the operator $\mathcal{K}$ and each eigenvalue $\lambda_i$ with negative real part has an associated Jordan block of order one. Then, if $s_e$ is a Lyapunov-stable equilibrium for the nonlinear dynamics \eqref{eq:: nonlinear_dynamics_states}, then the Koopman operator $\mathcal{K}$ is not unstable.

\end{proof}

Theorem \ref{th:: Luapunov_Stability} proves that, for Koopman dynamics that model a system with a Lyapunov-stable equilibrium, it is a necessary condition that the Koopman operator is not unstable. Theorem \ref{th:: Luapunov_Stability} should also apply to approximate, data-driven Koopman operators. Consider a part of the state space $\mathcal{D}_{\epsilon}$ that contains a Lyapunov-stable equilibrium. Then a Koopman operator trained with measurements from $\mathcal{D}_{\epsilon}$ should be stable (see Theorem \ref{th:: Luapunov_Stability}) if it is to capture the original dynamics with full fidelity. Otherwise, if the model is unstable, states evolved with Koopman dynamics will diverge regardless of how close they start to the Lyapunov-stable equilibrium and, in this way, violate the second statement of Proposition \ref{prop: conditions_for_Koopman_basis}.

\subsubsection{Conditions on Koopman Operators for Asymptotically Stable Systems}

Next, we derive stability conditions for a Koopman representation that is consistent with its associated \textit{asymptotically} stable system. The analysis rests on Assumption \ref{ass:: One Asymptotic Equilibrium} and further assumes that all states of a nonlinear dynamical system \eqref{eq:: nonlinear_dynamics_states} lie inside a domain of attraction $D_0 \subseteq \mathbb{R}^N$, formally defined in Definition \ref{def:: Domain of Attracion}.
\begin{assumption}\label{ass:: One Asymptotic Equilibrium}
The nonlinear dynamical system \eqref{eq:: nonlinear_dynamics_states} has a single asymptotic equilibrium. 
\end{assumption}
\begin{definition}\label{def:: Domain of Attracion}
Given a nonlinear dynamical system \eqref{eq:: nonlinear_dynamics_states} and an asymptotically stable solution $s_e$, the domain of attraction is
\begin{align*}
 \mathcal{D}_0 \triangleq \{s_0 \in \mathcal{D} : \text{\textup{if}}~s(0) = s_0,~\text{\textup{then}}~\lim_{t\to\infty} s(t) = s_e\}. 
\end{align*}
\end{definition}

Note that for multiple asymptotic equilibria, there need to be separate regions of attractions, which must in turn be represented by separate Koopman operators. In this work, we focus on obtaining a single Koopman operator consistent with a single asymptotic equilibrium. For many systems, due to the presence of friction, this can often be represented as the zero-velocity state. For nonlinear systems with multiple equilibria points, one can use work in \cite{Tommy_DSS} to obtain multiple local Koopman representations.

Next, we prove that only a Koopman operator that is Hurwitz (all eigenvalues are strictly in the left half-plane) satisfies the stability properties of nonlinear dynamics that are asymptotically stable. 

\begin{theorem}[Asymptotic Stability]\label{th:: Asymptotic Stability}
Consider a nonlinear dynamical system \eqref{eq:: nonlinear_dynamics_states} and an equivalent Koopman representation \eqref{eq:: nonlinear_dynamics_Koopman}. Then, if $s_e$ is an asymptotically stable equilibrium, the Koopman operator $\mathcal{K}$ is Hurwitz.
\end{theorem}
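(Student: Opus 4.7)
The plan is to mirror the structure of the proof of Theorem~\ref{th:: Luapunov_Stability}, strengthening Lyapunov stability to asymptotic stability throughout. The argument factors into two pieces: first, transfer the asymptotic-stability property from the nonlinear system~\eqref{eq:: nonlinear_dynamics_states} to the equivalent Koopman representation~\eqref{eq:: nonlinear_dynamics_Koopman} via Proposition~\ref{prop: conditions_for_Koopman_basis}; second, invoke the standard spectral characterization of asymptotic stability for linear systems to conclude that $\mathcal{K}$ must be Hurwitz.

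First, I would combine parts~2 and~3 of Proposition~\ref{prop: conditions_for_Koopman_basis}. Part~2 furnishes Lyapunov stability of $\Psi(s_e)$ under the Koopman flow, and part~3 provides a neighborhood on which Koopman trajectories converge to $\Psi(s_e)$ as $t\to\infty$. Together, these assert that $\Psi(s_e)$ is a locally asymptotically stable equilibrium of the \emph{linear} system $\tfrac{d}{dt}\Psi(s(t)) = \mathcal{K}\Psi(s(t))$. Using the identity $\mathcal{K}\Psi(s_e) = 0$, which follows from part~1 of the proposition together with~\eqref{eq:: nonlinear_dynamics_Koopman}, the affine change of coordinates $\eta := \Psi - \Psi(s_e)$ converts the problem to $\dot{\eta} = \mathcal{K}\eta$ with the origin as a locally asymptotically stable equilibrium.

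Next, I would invoke the standard linear-systems fact \cite{hespanha2018linear,antsaklis2006linear} that, for a linear time-invariant system, local asymptotic stability of the origin is equivalent to $\mathcal{K}$ being Hurwitz, i.e., every eigenvalue of $\mathcal{K}$ having strictly negative real part. This is sharper than the ``not unstable'' conclusion of Theorem~\ref{th:: Luapunov_Stability}: purely imaginary eigenvalues (or zero eigenvalues with simple Jordan blocks) would preserve Lyapunov stability but would preclude the convergence guaranteed by part~3 of the proposition, and are therefore excluded here.

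The subtlety I expect to require the most care is that the Koopman evolution physically lies on the image manifold $\Psi(\mathcal{S}) \subseteq \mathbb{R}^W$, which typically has lower dimension than $W$; asymptotic stability of $\Psi(s_e)$ along on-manifold trajectories need not, a priori, certify contraction in every direction of $\mathbb{R}^W$. To close this gap, I would rely on the equivalence of the two representations: perturbations of $s(0)$ throughout the domain of attraction $\mathcal{D}_0$ produce Koopman initial conditions that populate a neighborhood of $\Psi(s_e)$ rich enough to probe each eigendirection of $\mathcal{K}$. Alternatively, the Hurwitz conclusion can be read as a statement about $\mathcal{K}$ restricted to the invariant subspace $\mathrm{span}\{\Psi(s(t)) : s(0)\in\mathcal{D}_0,\, t\ge 0\}$, on which asymptotic stability and Hurwitzness are directly equivalent.
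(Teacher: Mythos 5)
Your proof follows essentially the same route as the paper's: transfer asymptotic stability from the nonlinear system to the Koopman dynamics via Proposition~\ref{prop: conditions_for_Koopman_basis}, then invoke the spectral characterization of asymptotic stability for linear time-invariant systems to conclude that $\mathcal{K}$ is Hurwitz. Your closing remark about the image set $\Psi(\mathcal{S})$ possibly not spanning $\mathbb{R}^W$ flags a subtlety that the paper's proof silently passes over --- the paper asserts the linear-systems equivalence without qualification --- and your resolution (restricting to the invariant subspace generated by the Koopman trajectories) is a reasonable way to make that step precise.
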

\begin{proof}
From Proposition \ref{prop: conditions_for_Koopman_basis}, if $s_e$ is an asymptotically stable solution for the nonlinear dynamical system, it is also an asymptotically stable solution for the Koopman dynamics \eqref{eq:: nonlinear_dynamics_Koopman}. From the stability properties of linear systems, $\Psi(s_e)$ is an asymptotically stable equilibrium for the Koopman dynamics \eqref{eq:: nonlinear_dynamics_Koopman} if and only if $\mathcal{K}$ is Hurwitz: $\mathrm{Re}[\lambda_i(\mathcal{K})] <0$
for each eigenvalue $\lambda_i$ of the operator $\mathcal{K}$.
Then, if $s_e$ is an asymptotically stable equilibrium for the nonlinear dynamics, then the Koopman operator $\mathcal{K}$ of the associated Koopman dynamics is Hurwitz.
\end{proof}
 
Proposition \ref{prop: conditions_for_Koopman_basis} and Theorems \ref{th:: Luapunov_Stability} and \ref{th:: Asymptotic Stability} present conditions for a Koopman representation, specifically on admissible basis functions and the operator itself, that is consistent with the stability properties of the associated underlying nonlinear system. In practice, one can compute a Koopman operator for any choice of basis functions. However, the stability-related constraints on the admissible basis functions can indicate which basis functions are consistent with stable Koopman operators. We argue that using basis functions that violate these conditions and are inconsistent with stable dynamics would generate unstable Koopman operator solutions or, if stability is enforced on the operator during learning as is the focus of this work, would lead to higher training error. As we show next, this becomes problematic, as bounding the training error is of central importance in the construction of Lyapunov functions.
\subsection{Lyapunov Functions Using Stable Koopman Operators}
Given a stable operator, it is possible to design Lyapunov functions for nonlinear systems using the data-driven Koopman matrix. Consider an approximate finite-dimensional Koopman representation that is equivalent, based on Proposition \ref{prop: conditions_for_Koopman_basis}, to general dynamical systems \eqref{eq:: nonlinear_dynamics_states} such that
\begin{align}\label{eq:: nonlinear_dynamics_remainder}
 \frac{d}{dt} \Psi (s(t))= \tilde{\mathcal{K}} \Psi(s(t)) + \epsilon(\Psi(s(t))),
\end{align}
where 
\begin{align}
 \epsilon(\Psi(s(t))) \triangleq f(\Psi(s(t)) - \tilde{\mathcal{K}} \Psi(s(t))
\end{align} is the residual error. 
If $\tilde{\mathcal{K}}$ in \eqref{eq:: nonlinear_dynamics_remainder} is stable, we can design Lyapunov functions for the nonlinear dynamics, as we prove in Theorem \ref{th:: SemiGlobalStability}. 

\begin{theorem}\label{th:: SemiGlobalStability}
Consider a Koopman representation \eqref{eq:: nonlinear_dynamics_remainder} of a nonlinear dynamical system. Further, assume $\tilde{\mathcal{K}}$ is stable. Let 
\begin{align}\label{eq:: a_max}
 \alpha \le 
 \dfrac{\lambda_{min}(Q_{\tilde{\mathcal{K} }})}{ 2 \lambda_{max}(P)}
\end{align}
where $Q \succ 0$ and $P \succ 0$ are a solution to the Lyapunov equation
\begin{equation}\label{eq:: LyapunovEquation}
 \tilde{\mathcal{K}}^TP + P \tilde{\mathcal{K}} + Q_{\tilde{\mathcal{K}} } = 0.
\end{equation}
If
\begin{gather}
 \lVert \epsilon(\Psi(0)) \rVert_2 = 0 \label{eq:: zeroError} \intertext{and} \lVert \epsilon(\Psi(s(t))) \rVert_2 \le \alpha \lVert \Psi (s(t)) \rVert_2 ~\forall~\Psi(s(t)) \in \mathcal{D}_{\alpha} \subseteq \mathcal{S},\label{eq:: boundedError}
\end{gather}
then, the zero solution $\Psi(s(t)) = 0$ to the nonlinear dynamics is asymptotically stable in $\mathcal{D}_{\alpha}$. Further, $V = \Psi(s(t))^T P \Psi(s(t))$ is a Lyapunov function in $\mathcal{D}_{\alpha}$. 
\end{theorem}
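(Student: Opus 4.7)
The plan is to verify that $V(\Psi) = \Psi^T P \Psi$ is a valid Lyapunov function for the perturbed dynamics \eqref{eq:: nonlinear_dynamics_remainder} on $\mathcal{D}_\alpha$. Since $P \succ 0$, $V$ is positive definite with $V(0)=0$, and condition \eqref{eq:: zeroError} ensures that $\Psi = 0$ is in fact an equilibrium of \eqref{eq:: nonlinear_dynamics_remainder}, so the framework is well-posed. What remains is to show $\dot V < 0$ off the origin on $\mathcal{D}_\alpha$.

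The main calculation is to differentiate $V$ along trajectories of \eqref{eq:: nonlinear_dynamics_remainder} and use the Lyapunov equation \eqref{eq:: LyapunovEquation} to cancel the linear part:
\begin{align*}
\dot V
&= \Psi^T(\tilde{\mathcal{K}}^T P + P\tilde{\mathcal{K}})\Psi + 2\Psi^T P \,\epsilon(\Psi) \\
&= -\Psi^T Q_{\tilde{\mathcal{K}}}\Psi + 2\Psi^T P \,\epsilon(\Psi).
\end{align*}
I would then bound each term: Rayleigh's inequality gives $-\Psi^T Q_{\tilde{\mathcal{K}}}\Psi \le -\lambda_{\min}(Q_{\tilde{\mathcal{K}}})\|\Psi\|_2^2$, while Cauchy--Schwarz combined with the operator-norm identity $\|P\|_2 = \lambda_{\max}(P)$ (valid because $P$ is symmetric positive definite) and the perturbation bound \eqref{eq:: boundedError} yield $2\Psi^T P\,\epsilon(\Psi) \le 2\alpha\lambda_{\max}(P)\|\Psi\|_2^2$ throughout $\mathcal{D}_\alpha$. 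Combining,
\[
\dot V \le -\bigl(\lambda_{\min}(Q_{\tilde{\mathcal{K}}}) - 2\alpha\lambda_{\max}(P)\bigr)\|\Psi\|_2^2,
\]
and the bound \eqref{eq:: a_max} on $\alpha$ renders the coefficient nonnegative, strictly so when \eqref{eq:: a_max} is strict or when the perturbation is strictly dominated by the quadratic part away from zero.

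The step I expect to be the main obstacle is the invariance of $\mathcal{D}_\alpha$ under the flow: the perturbation bound \eqref{eq:: boundedError} is hypothesized only on $\mathcal{D}_\alpha$, but Lyapunov's theorem requires the inequality on $\dot V$ to hold along the entire trajectory. I would resolve this by passing to the largest sublevel set $\Omega_c = \{\Psi : V(\Psi)\le c\}$ contained in $\mathcal{D}_\alpha$. Since $\dot V \le 0$ throughout $\mathcal{D}_\alpha$, $\Omega_c$ is forward invariant, so initial conditions in $\Omega_c$ generate trajectories that remain in $\mathcal{D}_\alpha$ and thus continue to satisfy \eqref{eq:: boundedError}. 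Asymptotic stability in $\mathcal{D}_\alpha$ and the conclusion that $V$ serves as a Lyapunov function then follow from the standard direct-method argument applied on $\Omega_c$.
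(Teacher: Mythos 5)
Your proof follows essentially the same route as the paper's: differentiate $V = \Psi^T P \Psi$ along \eqref{eq:: nonlinear_dynamics_remainder}, cancel the linear part via the Lyapunov equation \eqref{eq:: LyapunovEquation}, and bound the residual term with Cauchy--Schwarz and \eqref{eq:: boundedError} to obtain $\dot V \le -\bigl(\lambda_{\min}(Q_{\tilde{\mathcal{K}}}) - 2\alpha\lambda_{\max}(P)\bigr)\lVert\Psi\rVert_2^2$. Your additional step of passing to a forward-invariant sublevel set $\Omega_c \subseteq \mathcal{D}_\alpha$ addresses a point the paper's proof leaves implicit and tightens the argument.
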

\begin{proof} 
Consider a candidate Lyapunov function $V = \Psi^T(s(t))P \Psi(s(t))$. Taking the time derivative, 
\begin{align*}
 \frac{d}{dt} V (\Psi(s(t))) =& \frac{dV(\Psi(s(t))}{d\Psi(s(t))} \frac{d\Psi(s(t))}{dt} \\
 =& \Psi(s(t))^T P f(\Psi(s(t))) \\
 &+ f(\Psi(s(t)))^T P \Psi(s(t)) \\
 =& \Psi(s(t))^T P [\mathcal{K} \Psi(s(t)) + \epsilon(\Psi(s(t)))] 
 \\&+ [ \Psi^T \tilde{\mathcal{K}}^T + \epsilon(\Psi(s(t)))^T] P \Psi(s(t)) \\
 =& \Psi(s(t))^T (P \tilde{\mathcal{K}} + \tilde{\mathcal{K}}^T P) \Psi(s(t)) 
 \\&+ 2 \Psi(s(t))^T P \epsilon(\Psi(s(t))).
\end{align*}

Given that $\tilde{\mathcal{K}}$ is stable and $Q_{\tilde{\mathcal{K}} } \succ 0$, then there always exists a (unique) solution $P \succ 0$ to the Lyapunov equation.
Thus,
\begin{align*}
 P \tilde{\mathcal{K}} + \tilde{\mathcal{K}}^T P = - Q_{\tilde{\mathcal{K}} }
\end{align*}
such that 
\begin{align*}
 \frac{d}{dt} V (\Psi(s(t))) =& -\Psi(s(t))^T Q_{\tilde{\mathcal{K}} } \Psi(s(t)) \\ &+ 2 \Psi(s(t))^T P \epsilon(\Psi(s(t))).
\end{align*}

Note that $ - \Psi(s(t))^T Q_{\tilde{\mathcal{K}} } \Psi(s(t)) \le - \lambda_{min}(Q_{\tilde{\mathcal{K}}}) \lVert \Psi(s(t)) \rVert_2^2$ and using the Cauchy-Schwartz inequality \cite{haddad2011nonlinear}, it follows that
\begin{align*}
 \frac{d}{dt} V (\Psi(s(t))) \le& - \lambda_{min}(Q_{\tilde{\mathcal{K}} }) \lVert \Psi(s(t)) \rVert_2^2 
 \\&+ 2 \lambda_{max}(P) \lVert \Psi(s(t)) \rVert_2 \lVert \epsilon(\Psi(s(t)))\rVert_2.
\end{align*}
Then, using \eqref{eq:: boundedError}, we can rewrite
\begin{gather*}
\frac{d}{dt} V (\Psi(s(t))) \le - (\lambda_{min}(Q_{\tilde{\mathcal{K}} } ) - 2\alpha\lambda_{max}(P))\lVert )\lVert \Psi(s(t)) \rVert_2^2\\
~\forall~\Psi(s(t)) \in \mathcal{D}_{\alpha} \subseteq \mathcal{S}.
\end{gather*}
Choosing 
$\alpha <= \lambda_{min}(Q_{\tilde{\mathcal{K}} })/ 2\lambda_{max}(P)$, then
\begin{align*}
\dfrac{d}{dt}V(\Psi(s(t))) \le 0 ~\forall~\Psi(s(t)) \in \mathcal{D}_{\alpha}
\end{align*}
such that the system \eqref{eq:: nonlinear_dynamics_remainder} is stable in $\mathcal{D}_{\alpha}$ about $\Psi(s(t))=0$. Further, for $\alpha < \lambda_{min}(Q_{\tilde{\mathcal{K}} })/ 2\lambda_{max}(P)$, $\dfrac{d}{dt}V(\Psi(s(t))) < 0 \in \mathcal{D}_{\alpha}$ and the system is asymptotically stable in $\mathcal{D}_{\alpha}$ about $\Psi(s(t)) = 0$.
\end{proof}

Theorem \ref{th:: SemiGlobalStability} makes it is possible to design Lyapunov functions for nonlinear systems that are valid in a region $\mathcal{D}_{\alpha}$ of the state space where the modeling error of the Koopman operator is bounded \eqref{eq:: boundedError}. This highlights the importance of choosing appropriate basis functions that satisfy the stability conditions outlined in Proposition \ref{prop: conditions_for_Koopman_basis} and the conditions in \eqref{eq:: zeroError} and \eqref{eq:: boundedError}. Further, Theorem \ref{th:: SemiGlobalStability} makes it possible to design control-Lyapunov functions that can be used to verify stabilizing feedback laws from controlled measurements in a region $\mathcal{D}_{\alpha}$ \eqref{eq:: boundedError} for data-driven systems. In Section \ref{sec:: Results}, we present a few examples of verifying the stability of controlled systems using Lyapunov functions constructed from stable data-driven Koopman operators, but leave more sophisticated analysis for future work. 

Section \ref{sec:: Stability_properties_of_Koopman} shows that stable Koopman operators can improve long-term predictions and help construct Lyapunov functions. We also present conditions on admissible basis functions for stable Koopman models to match the stability properties of the operator and improve the training error. However, in practice, data-driven Koopman operators can be unstable even if the modeled dynamics are stable and even if appropriate basis functions (as described in Proposition \ref{prop: conditions_for_Koopman_basis}) are used. For this reason, next in Section \ref{sec::DISKO} we present the first framework for data-driven identification of stable Koopman operators (DISKO). 

\section{Synthesis of Stable Koopman Operators}\label{sec::DISKO}
This section presents the methodology used for the Data-driven Identification of Stable Koopman Operators (DISKO). There are several candidate algorithms that can compute a stable solution to the optimization in \eqref{eq:: LS_StableKoopman}. In this paper, we use the gradient-descent algorithm presented in \cite{mamakoukas_stableLDS2020} (referred to as SOC) to find locally optimal stable solutions $\tilde{\mathcal{K}}_d$. The SOC algorithm builds upon the work in \cite{NearestStable, gillis2020note}, where the nearest stable matrix to an unstable one is computed by minimizing the Frobenius norm. 
 
We choose the SOC algorithm because it is shown to outperform the top-alternative existing algorithms for stable linear dynamical systems in terms of model accuracy, memory efficiency, and scalability such that it can be used for feedback control of higher-dimensional systems when the other methods fail \cite{mamakoukas_stableLDS2020}. 
\subsection{Stable Least-Squares Koopman Operators} 
To compute a stable Koopman operator, we first convert the optimization \eqref{eq:: LS_StableKoopman} to a different formulation that is suitable for the SOC algorithm.

\begin{proposition}\label{prop:: EDMD_Koopman}
Consider $P$ measurements of states $s \in \mathbb{R}^N$ and basis functions $\Psi(s(t)) \in \mathbb{R}^W$. Given $X$ and $Y$ such that
\begin{align*}
X =& \begin{bmatrix} \Psi(s(t_1), u (t_1 ))^T \\ \vdots \\ \Psi(s(t_P), u (t_P))^T
 \end{bmatrix}^T
 \intertext{and}
 Y =& \begin{bmatrix} \Psi(s(t_1 + \Delta t), u (t_1 + \Delta t))^T \\ \vdots \\ \Psi(s(t_P + \Delta t), u (t_P + \Delta t))^T
 \end{bmatrix}^T.
\end{align*}
Then, the expression 
\begin{align*}
\sum_{k = 1}^{P}\frac{1}{2}\lVert \Psi(s(t_k + \Delta t), u(t_k + \Delta t)) - \tilde{\mathcal{K}}_d \Psi(s(t_k), u(t_k))\rVert^2
\end{align*}
is equivalent to 
\begin{align*}
\frac{1}{2}\lVert Y - \tilde{\mathcal{K}}_d X \rVert_{F}^2,
\end{align*}
where $X, Y \in \mathbb{R}^{W \times P}$, $\tilde{\mathcal{K}}_d \in \mathbb{R}^ {W \times W}$, $\lVert \cdot \rVert_F$ is the Frobenius norm of a matrix and $\mathbb{S}^{W,W}_d$ is the set of all stable matrices of size $W \times W$.
\end{proposition}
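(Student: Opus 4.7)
The plan is to rewrite the Frobenius norm on the right-hand side column-by-column and recognize that each column of $Y - \tilde{\mathcal{K}}_d X$ is exactly one summand from the left-hand side. The proof is essentially bookkeeping: exploit the fact that matrix multiplication acts column-wise, combined with the columnar expansion of the Frobenius norm.

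Concretely, I would first recall that for any matrix $M \in \mathbb{R}^{W \times P}$ with columns $m_1, \dots, m_P$, the Frobenius norm satisfies $\lVert M \rVert_F^2 = \sum_{k=1}^P \lVert m_k \rVert_2^2$. Next, I would identify the columns of $X$ and $Y$ from the construction in the proposition: the $k$-th column of $X$ is $\Psi(s(t_k), u(t_k))$ and the $k$-th column of $Y$ is $\Psi(s(t_k + \Delta t), u(t_k + \Delta t))$. Then I would invoke the fact that, for any matrix $A$, the $k$-th column of $AX$ equals $A$ times the $k$-th column of $X$, so the $k$-th column of $\tilde{\mathcal{K}}_d X$ is $\tilde{\mathcal{K}}_d \Psi(s(t_k), u(t_k))$, and hence the $k$-th column of $Y - \tilde{\mathcal{K}}_d X$ is precisely
\[
\Psi(s(t_k + \Delta t), u(t_k + \Delta t)) - \tilde{\mathcal{K}}_d \Psi(s(t_k), u(t_k)).
\]

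Finally, applying the column-wise expansion of the Frobenius norm to $Y - \tilde{\mathcal{K}}_d X$ and multiplying by $\tfrac{1}{2}$ reproduces the left-hand side sum term by term, establishing the claimed equality.

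There is no serious obstacle here; the result follows directly from two standard facts (column-wise action of matrix multiplication and the columnar definition of $\lVert \cdot \rVert_F$). The only care needed is to ensure the ordering of the snapshots in the columns of $X$ and $Y$ is consistent, so that pairs $(\Psi(s(t_k), u(t_k)),\ \Psi(s(t_k+\Delta t), u(t_k+\Delta t)))$ are aligned in the same column index $k$; this is guaranteed by the definitions of $X$ and $Y$ given in the proposition statement.
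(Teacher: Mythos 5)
Your proof is correct and follows essentially the same route as the paper's: both identify the $k$-th column of $Y - \tilde{\mathcal{K}}_d X$ with the $k$-th residual vector and then apply the column-wise (equivalently, entry-wise) expansion of the squared Frobenius norm. The paper merely writes out the entry-wise double sum explicitly before regrouping, whereas you invoke the identity $\lVert M \rVert_F^2 = \sum_{k} \lVert m_k \rVert_2^2$ directly; the content is identical.
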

\begin{proof}
See Appendix \ref{App:: Equivalence}. 
\end{proof}

\begin{algorithm*}[ht!]
 \textbf{Input:} $\Psi_s(t), \Psi_u(t)$ \Comment{Choice of basis functions}\\
 \textbf{Output:} \text{Stable Koopman model}
 \caption{Data-driven Identification of Stable Koopman Operators (DISKO)}\label{algo:: DISKO}
\begin{algorithmic}[1]
\While{$k < k_{max}$}
 \LState \text{Collect new state and control measurements} $s(t_k), s(t_k + \Delta t)$,  $u(t_k)$ 
 \LState \text{Evaluate basis functions $\Psi_s(s(t_k)), \Psi_s(s(t_k + \Delta t)), \Psi_u(t_k)$} 
 \LState \text{Update $\mathcal{G}, \mathcal{A}, X_U, Y_U, U_U$} \Comment{Preserve memory space}
 \LState \text{Run SOC algorithm} \Comment{Compute stable Koopman dynamics \eqref{eq:: AxBu_Koopman}}
 \EndWhile
 \end{algorithmic}
\end{algorithm*}

From Proposition \ref{prop:: EDMD_Koopman}, seeking stable Koopman operators for
\begin{equation}\label{eq:: LS_StableKoopman}
\inf \limits_{\tilde{\mathcal{K}}_d \in \mathbb{S}^{W,W}_d} \frac{1}{2}\lVert Y - \tilde{\mathcal{K}}_d X \rVert_{F}^2,
\end{equation}
is equivalent to seeking stable solutions for \eqref{eq:: Kd_LS}. Note that solving \eqref{eq:: LS_StableKoopman} is not equivalent to projecting the unconstrained Koopman solution \eqref{eq:: Kd_LS} to the stable set of matrices. That is, 
\begin{equation}\label{eq:: LS-Stable_vs_Stable}
\inf \limits_{\tilde{\mathcal{K}}_d \in \mathbb{S}^{W,W}_d} \frac{1}{2}\lVert Y - \tilde{\mathcal{K}}_d X \rVert_{F}^2 \neq \inf \limits_{\tilde{\mathcal{K}}_d \in \mathbb{S}^{W,W}_d} \frac{1}{2}\lVert \tilde{\mathcal{K}}^*_d - \tilde{\mathcal{K}}_d \rVert_{F}^2.
\end{equation}
Projecting an unstable solution of \eqref{eq:: Kd_LS} to the stable set results in a matrix that is stable but often with much greater fitness error than the solution to \eqref{eq:: LS_StableKoopman}, as we demonstrate with examples later in Section \ref{sec:: Results}. 

The SOC algorithm uses the property that a matrix $A$ is stable if and only if it can be written as $A = S^{-1} OCS$\cite{NearestStable}, where $S$ is invertible, $O$ is orthogonal, and $C$ is a positive semidefinite contraction; its singular values are less than or equal to 1. Let $\mathbb{I}^{W\times W}$ be the set of all invertible matrices of size $W \times W$. Then, we reformulate the optimization \eqref{eq:: LS_StableKoopman} such that
\begin{equation*}
\resizebox{0.99\hsize}{!}{$
\begin{aligned}
\inf \limits_{K_d \in \mathbb{S}^{W,W}_d} \frac{1}{2}\lVert Y - \tilde{\mathcal{K}}_d X \rVert_{F}^2 = \inf \limits_{S \in \mathbb{I}^{W \times W}, O \text{ orthogonal}, C\succeq 0, \lVert C\rVert \le 1 } \frac{1}{2}\lVert Y - S^{-1}OCSX\rVert_F^2.
 \end{aligned}
 $}
\end{equation*}

When considering systems with inputs, instead of imposing stability on the Koopman operator that propagates both state- and input- dependent basis functions, we use the Koopman dynamics shown in \eqref{eq:: AxBu_Koopman} and impose stability only the state transition matrix $A$. Then, the optimization problem becomes 
\begin{equation}\label{eq:: SOC_withInputs}
\resizebox{0.99\hsize}{!}{$
[A, B] = \inf \limits_{S\in \mathbb{I}^{W_s \times W_s}, O \text{ orthogonal}, C\succeq 0, \lVert C\rVert \le 1 } \frac{1}{2}\lVert Y - S^{-1}OCSX - BU\rVert_F^2,
 $}
\end{equation}
where $X, Y \in \mathbb{R}^{W_s \times P}$ include the measurements of the state-dependent Koopman basis functions $\Psi_s(s(t))$ and $U \in \mathbb{R}^{W_u \times P}$ include the measurements of the control-dependent ones $\Psi_u(u(t))$, similar to the form shown in proposition \ref{prop:: EDMD_Koopman}. Note that for linear systems with inputs, the stability properties described in Section \ref{sec:: Stability_properties_of_Koopman} apply to the state-transition matrix $A$ shown in \eqref{eq:: AxBu_Koopman}. In the rest of the analysis, one can simply set $B = 0$ for systems without inputs. For details on how the SOC algorithm can solve either \eqref{eq:: LS_StableKoopman} or \eqref{eq:: SOC_withInputs}, we refer the reader to \cite{mamakoukas_stableLDS2020} and the publicly available code\footnote{\href{https://github.com/MurpheyLab/MemoryEfficientStableLDS}{https://github.com/MurpheyLab/MemoryEfficientStableLDS}}.

Let $f(S,O,C, B) = \frac{1}{2}\lVert Y - S^{-1}OCSX - BU\rVert_F^2$. The gradients with respect to $S, O$, and $C$ are derived in \cite{mamakoukas_stableLDS2020} and rewritten here in a more compact form as 
\begin{equation}\label{eq:: Gradient Descent of S, O, C}
\begin{aligned}
\nabla_S f(S,O,C, B) =&~S^{-T} [\mathcal{V} A^T - A^T \mathcal{V}]\\
\nabla_O f(S,O,C, B) =& - S^{-T} \mathcal{V} S^T C^T\\
\nabla_C f(S,O,C, B) =& - O^T S^{-T}\mathcal{V} S^T \\
\nabla_B f(S,O,C, B) =& - (Y - A X - BU) U^T
\end{aligned}
\end{equation}
where $A = S^{-1}OCS \in \mathbb{R}^{W_s \times W_s}$ and $\mathcal{V} = (Y - A X - BU) X^T \in \mathbb{R}^{W_s \times W_s}$. The gradients \eqref{eq:: Gradient Descent of S, O, C} depend on $X$, $Y$, and $U$, which contain a history of all the basis functions measurements and can slow down the computation over time, as increasingly more data are collected. Compared to \cite{mamakoukas_stableLDS2020} and in order to  speed up the computation as well as preserve memory space, we use the relationships $XX^T = \mathcal{G} \in \mathbb{R}^{W_s \times W_s}$, $YX^T = \mathcal{A} \in \mathbb{R}^{W_s \times W_s}$, $XU^T = X_U \in \mathbb{R}^{W_s \times W_u}$, $YU^T = Y_U \in \mathbb{R}^{W_s \times W_u}$, and $UU^T = U_U \in \mathbb{R}^{W_u \times W_u}$ (derived in Appendix \ref{App:: Memory-Preserving Gradient Descents}) such that $\mathcal{V} = \mathcal{A} - A\mathcal{G} - BX_U^T$ and $\nabla_B f(S,O,C, B) = - Y_U + A X_U + BU_U$. Then, the gradient directions can be incrementally updated with new measurements and preserve memory space. We present the algorithmic steps of the DISKO framework in Algorithm \ref{algo:: DISKO}.

\section{Results}\label{sec:: Results}
In this section, we demonstrate the benefits of DISKO in prediction and control. First, we consider systems without inputs and show the effect of imposing stability on the prediction accuracy. We first compare the solutions of the proposed SOC algorithm to the method \cite{gillis2020note} that does not consider the least-squares fitness error when projecting the matrix to the stable set of solutions. Then, we compare the evolution of nonlinear dynamics using the unconstrained and stable Koopman models, shown in \eqref{eq:: Kd_LS} and \eqref{eq:: LS_StableKoopman}, respectively. 

\subsection{Least-Squares vs Nearest Stabilization}
We demonstrate the difference between projecting an unstable matrix to the nearest stable solution (nearest stabilization) and solving for a stable matrix while also optimizing for the reconstruction error (least-squares stabilization).  

\begin{figure*}
 \subcaptionbox{CG \label{fig::DISKO vs LDS}}{
 \includegraphics[width = 0.65\columnwidth, keepaspectratio = true]{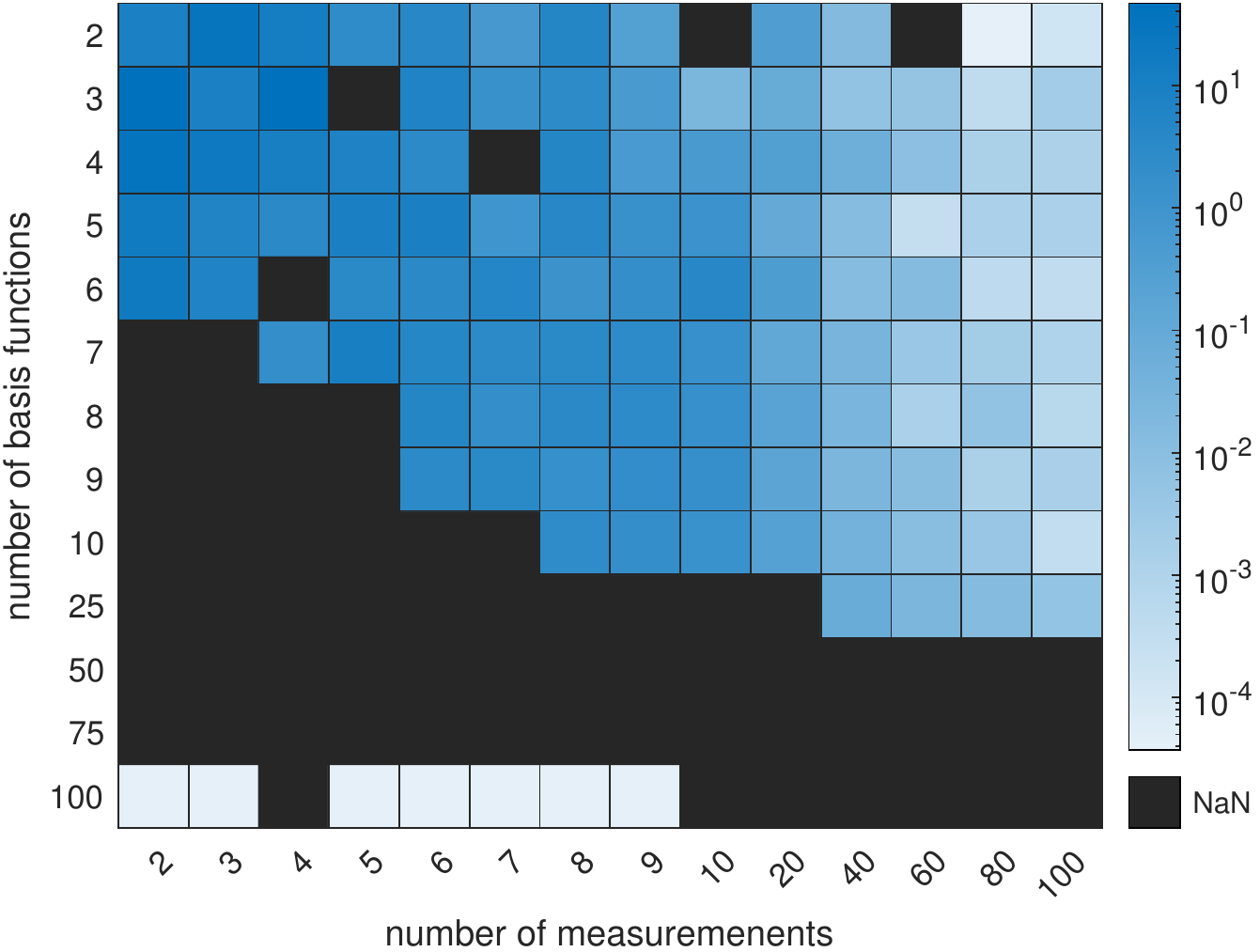}}
 \hfill
 \subcaptionbox{DISKO}{
\includegraphics[width = 0.65\columnwidth, keepaspectratio = true]{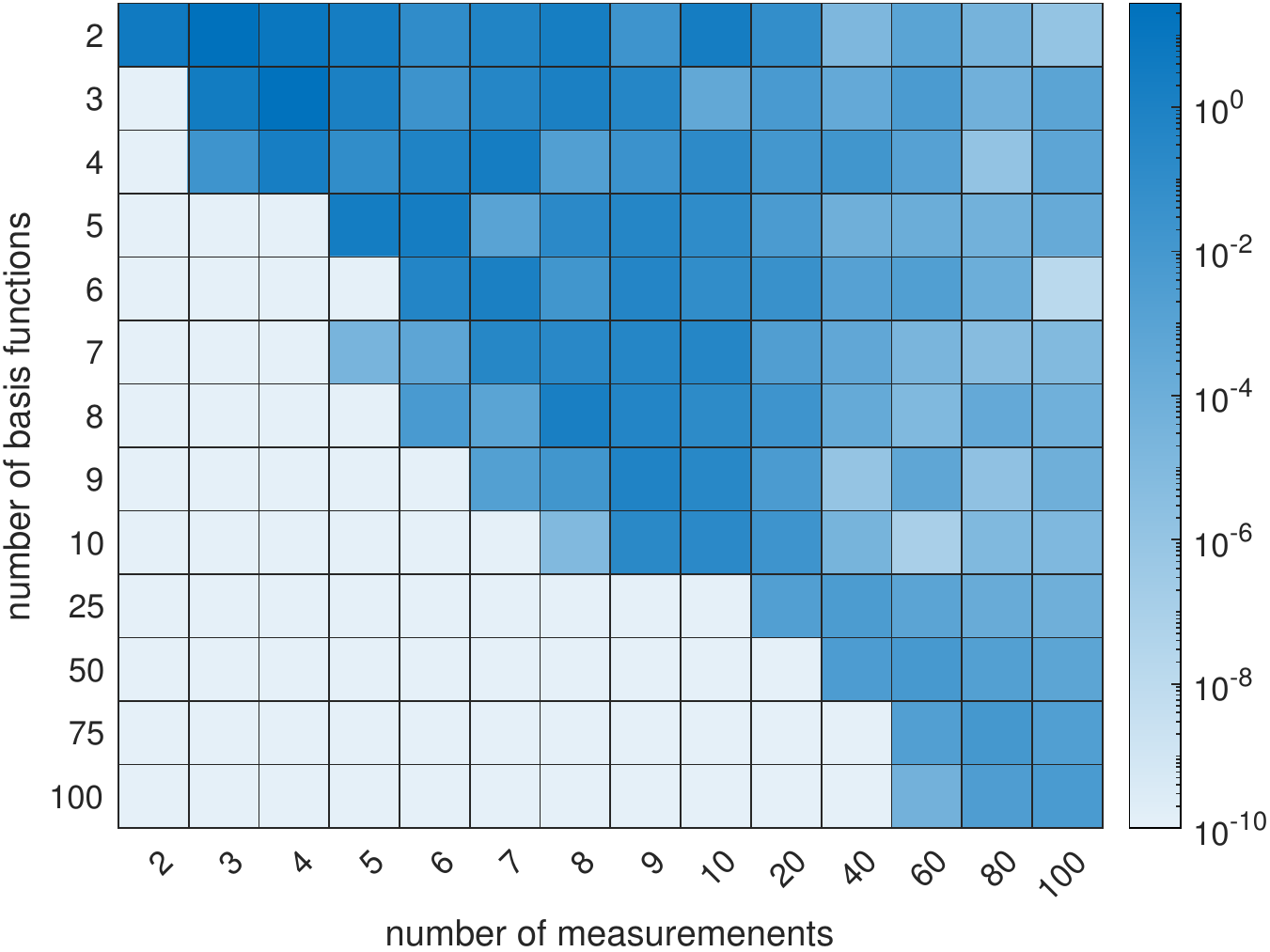}}
 \hfill
 \subcaptionbox{Percent Difference \label{Fig:: difference}}{
\includegraphics[width = 0.65\columnwidth, keepaspectratio = true]{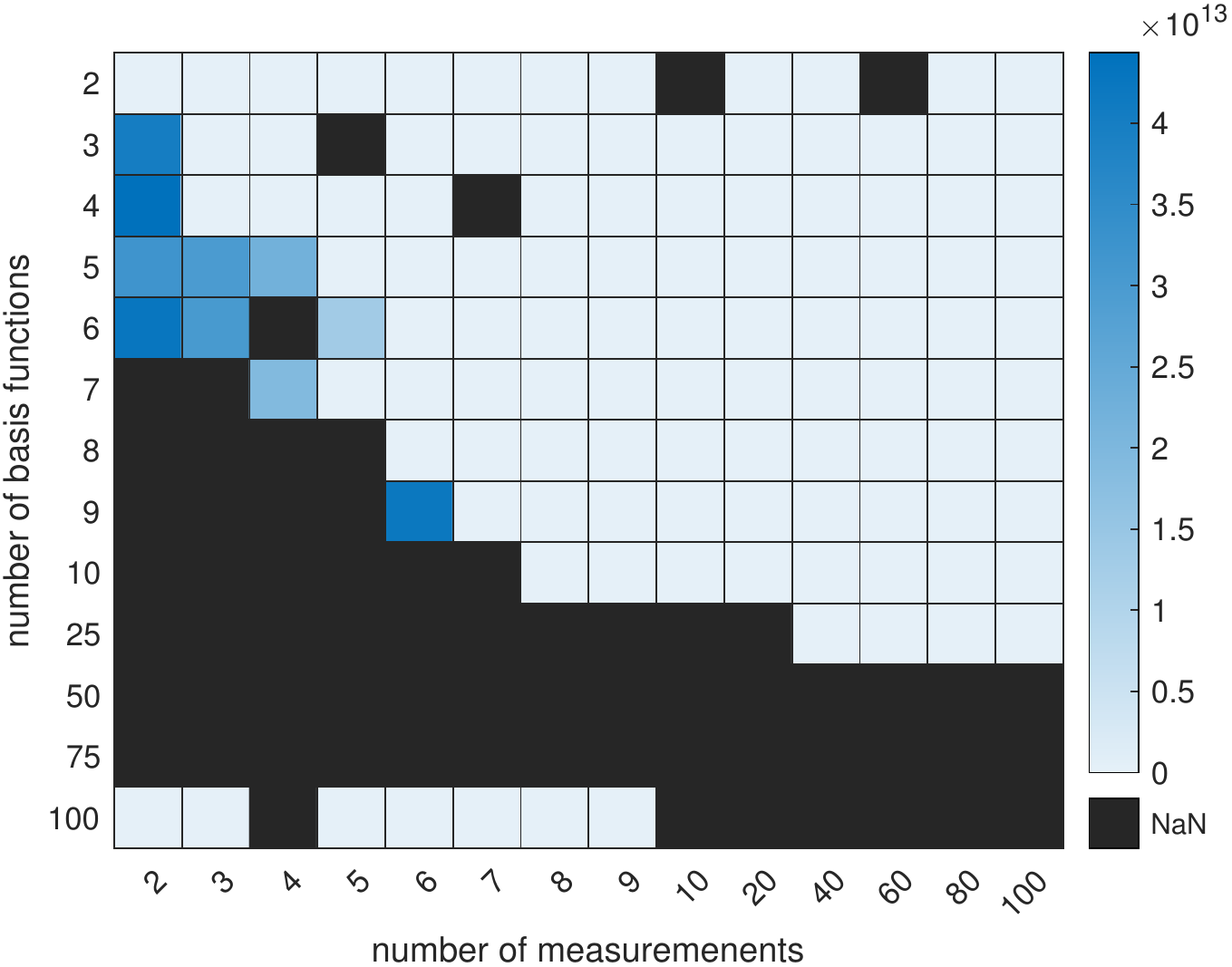}}

 \caption{Comparison of the SOC (Algorithm \ref{algo:: DISKO}) and CG \cite{boots2008constraint} algorithms as a function of the total number of random measurements used for training and the total number of basis functions. The error is normalized by the product of the number of measurements and functions. In \ref{Fig:: difference}, we calculate the percent difference of the error between the two algorithms: $\frac{e_{CG} - e_{SOC}}{e_{SOC}}$. \label{Fig:: SOC vs CG}}
\end{figure*}

 Consider the randomly generated matrices
 \begin{align*}
 X =& \begin{bmatrix} 0.1419 & 0.4218 & 0.9157 & 0.7922 & 0.9595 \\
 0.6557 & 0.0357 & 0.8491 & 0.9340 & 0.6787 \\
 0.7577 & 0.7431 & 0.3922 & 0.6555 & 0.1712 \end{bmatrix} \\
 Y =& \begin{bmatrix} 8.1472 & 9.0579 & 1.2699 & 9.1338 & 6.3236\\
 0.9754 & 2.7850 & 5.4688 & 9.5751 & 9.6489 \\
 1.5761 & 9.7059 & 9.5717 & 4.8538 & 8.0028 \end{bmatrix}. 
 \end{align*}
 Computing the least-squares solution using \eqref{eq:: Kd_LS} and then projecting it to the stable set of matrices using \cite{gillis2020note} yields
 \begin{align*}
 \tilde{\mathcal{K}}_d = \begin{bmatrix} 0.0041 & -6.6031 & 5.1709 \\
 10.3449 & -1.9480 & -0.0590 \\
 11.7192 & -6.7149 & 3.4609 \end{bmatrix},
 \end{align*}
 with eigenvalues $\Lambda = \{0.87, 0.87, -0.22\}$. The least-squares error using the stable matrix is 
 \begin{align*}
 \frac{1}{2}\lVert Y - \tilde{\mathcal{K}}_d X \rVert_{F}^2 = 203.04
 \end{align*}
 and the Frobenius norm from the least squares solution is
 \begin{align*}
 \frac{1}{2}\lVert \tilde{\mathcal{K}}^*_d - \tilde{\mathcal{K}}_d \rVert_{F}^2 = 45. 98.
 \end{align*}
 On the other hand, directly solving \eqref{eq:: LS_StableKoopman} generates a different solution
 \begin{align*}
 \tilde{\mathcal{K}}_d = \begin{bmatrix} 5.6337 & -8.2334 & 11.5883 \\
 14.4877 & -5.0863 & 1.9636 \\
 8.3346 & -2.8916 & 1.0662\end{bmatrix}
 \end{align*}
 with eigenvalues $\Lambda = \{0.98, 0.98, -0.35\}$. The least-squares error using the stable matrix is 
 \begin{align*}
 \frac{1}{2} \lVert Y - \tilde{\mathcal{K}}_d X \rVert_{F}^2 = 79.47
 \end{align*}
 and the Frobenius norm from the least squares solution is
 \begin{align*}
 \frac{1}{2}\lVert \tilde{\mathcal{K}}^*_d - \tilde{\mathcal{K}}_d \rVert_{F}^2 =108.53.
 \end{align*}
 As expected, projecting the unstable solution to the stable set and ignoring the least-squares error fitness generates a solution that is closer, in the Frobenius norm sense, to the original unstable matrix, but also with greater error compared to the solution of \eqref{eq:: LS_StableKoopman}. 
 
\subsection{Comparison to alternative schemes for DISKO}
As mentioned earlier, there are many candidate algorithms that can be implemented for DISKO. To motivate using the SOC algorithm over alternative choices, we compare it to the constraint generation (CG) approach \cite{boots2008constraint}, which has been shown to outperform competing alternative algorithms.\footnote{The algorithm in \cite{WLS_stableLDS} does not work well for systems with inputs, as demonstrated in \cite{mamakoukas_stableLDS2020}.} The two algorithms are compared also in \cite{mamakoukas_stableLDS2020}, but not on randomly generated matrices and not up to such high dimensions. 

We compare SOC and CG on finding stable operators that minimize the error in 
\eqref{eq:: LS_StableKoopman} for varying number of basis functions ($W \in [2, 100]$ and number of measurements ($ P \in [2,100]$). Data in $X$ and $Y$ are sampled from the uniform distributions $U(0,10)$ and $U(0,20)$, respectively, where $U(a,b)$ is a uniform distribution and $a$ and $b$ are the minimum and maximum values. We present the results in Fig. \ref{Fig:: SOC vs CG}. SOC outperforms CG in all cases, for any number of measurements and functions used. Further, CG does not always converge to a solution in the allotted time (10 minutes per minimization). Besides the superior performance in terms of the reconstruction error, the SOC algorithm is also more memory efficient. Since SOC performs better than the alternative algorithms for randomized matrices, it is a good choice for learning stable Koopman models. In light of these results, we use the SOC algorithm to implement DISKO in the remaining of this work.

\subsection{Comparisons of Reconstruction and Prediction Error}
Next, we compare the evolution of the nonlinear dynamics of a pendulum (without control) using the unconstrained solution \eqref{eq:: Kd_LS} and the stable one \eqref{eq:: LS_StableKoopman}. The states, dynamics, and basis functions used to train the Koopman operator are given by 
\begin{gather*}
    s = [\theta, \dot\theta]^T, \qquad     \frac{d}{dt} s = [\dot\theta, 9.81 \sin(\theta) + \beta \dot\theta]^T \\
    \Psi(s) = [\theta, \dot\theta, \sin(\theta), \cos(\theta)  \dot\theta, \sin(\theta)\cos(\theta), \sin(\theta) \dot\theta^2]^T,
\end{gather*}
where $\theta, \dot\theta$, and $\beta$ are the angle, angular velocity and damping coefficient, respectively. The time spacing between samples is $\Delta t = 0.02$~s. For simplicity, we drop the time dependencies of the variables. 

The equilibrium points of the pendulum dynamics occur when $\theta = n\pi$ for $n \in \mathcal{Z}$ and $\dot\theta = 0$. The basis functions used here evaluate to zero when $\theta = 0$ and $\dot\theta = 0$, satisfying Proposition \ref{prop: conditions_for_Koopman_basis} for one of the equilibrium points of the original dynamics. Note that, depending on the identified Koopman model, it is still possible that the Koopman dynamics are zero (i.e. $\tilde{\mathcal{K}}\Psi(s(t)) = 0$) at the rest of the equilibrium points of the pendulum system. Omitting $\theta$ from the basis functions would ensure that the first condition in Proposition \ref{prop: conditions_for_Koopman_basis} is satisfied for all the equilibrium points of the pendulum. In that case, however, and using a single Koopman model, it would not be possible to differentiate between the unstable and stable equilibrium as target states for control purposes. For this reason, we include $\theta$ in the states. In future work, we plan on solving for Koopman models subject to the constraints that the Koopman dynamics share the same equilibrium points as the original nonlinear system. 

Fig. \ref{fig:: PendulumPrediction} shows the eigenvalues of the two methods and the prediction of the two solutions for the undamped pendulum ($\beta = 0$). We train the Koopman models with data that are collected by propagating the pendulum dynamics by $\Delta t = 0.02$~s from 1000 random initial conditions sampled uniformly from $U(-\pi, \pi)$ and $U(-1, 1)$ for the angle and the angular velocity, respectively. The prediction associated with the unconstrained least-squares Koopman operator diverges away, whereas the states predicted with DISKO remain bounded and close to the true evolution of the nonlinear dynamics for a longer amount of time.

In this example, enforcing stability around the unstable point may lead to poor prediction performance. However, we find that choosing to use a stable operator to model a locally unstable point bounds the instability as shown in Fig.~\ref{fig:: PendulumPrediction}. Numerically, this is advantageous, since the stable model prevents divergent predictions on long time horizons. On the other hand, unstable least-squares solutions generate exponentially diverging predictions that are impractical to use for model-based control. That said, depending on the stability properties of the underlying dynamics, it might be more beneficial to \textit{bound} the instability of data-driven models ---instead of forcing stability--- to prevent numerically challenging predictions and improve data-driven control. Or, alternatively, an adaptive algorithm could learn---using a single parameter---how much instability is needed based on data. The SOC algorithm used here can explicitly bound the magnitude of the eigenvalues for discrete-time models, see Section~\ref{ssec::global_error}.

\begin{figure}
 \subcaptionbox{\label{subfig: eigenvalues}}{
 \includegraphics[width = 0.88\columnwidth, keepaspectratio = true]{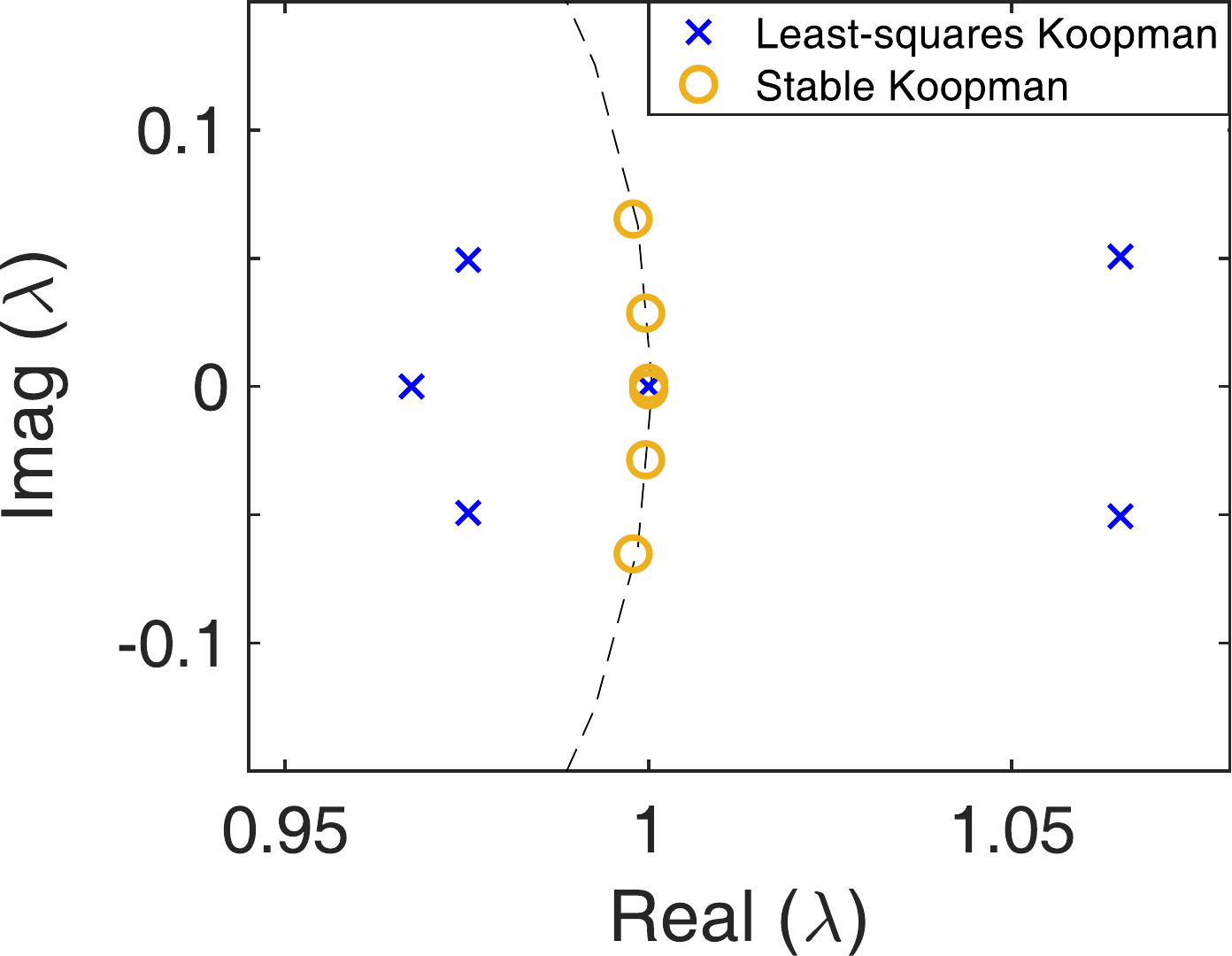}}
 \hfill
 \subcaptionbox{\label{subfig: prediction}}{
\includegraphics[width = 0.48\columnwidth, keepaspectratio = true]{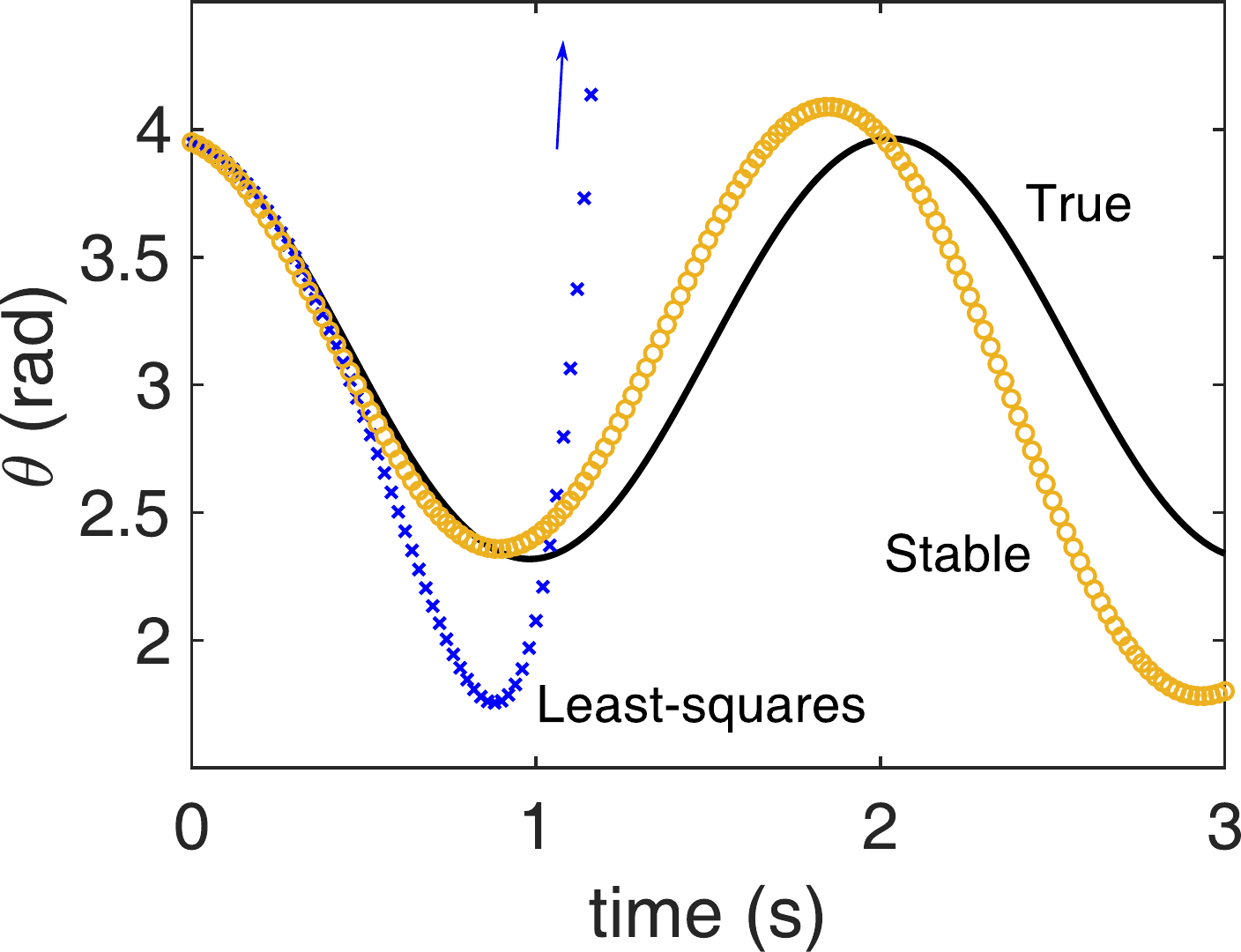}}
 \subcaptionbox{\label{subfig: prediction_unstable_eq}}{
\includegraphics[width = 0.48\columnwidth, keepaspectratio = true]{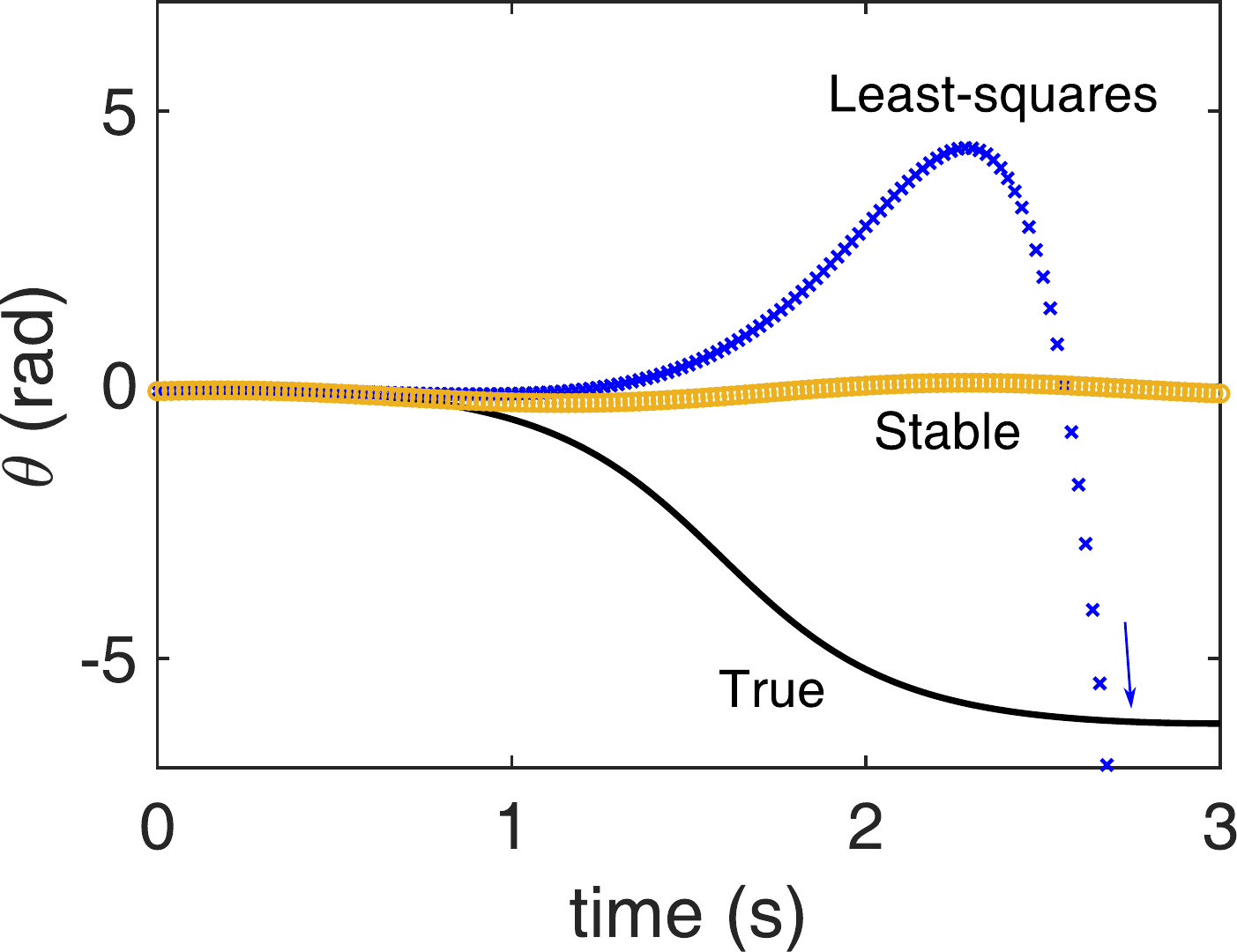}}
 \caption{Figure \ref{subfig: eigenvalues} shows the eigenvalues of the unconstrained, least-squares \eqref{eq:: Kd_LS} and constrained \eqref{eq:: LS_StableKoopman} Koopman operator for the nonlinear dynamics of a pendulum. The constrained operator bounds the unstable eigenvalues within the stability boundary. The stable eigenvalues are also appropriately modified so that constrained Koopman solution locally minimizes the prediction error \eqref{eq:: LS_StableKoopman}. Figures \ref{subfig: prediction} and \ref{subfig: prediction_unstable_eq} show the prediction of the angle of the pendulum system close to the stable and unstable equilibrium points, respectively, using the unconstrained Koopman solution \eqref{eq:: Kd_LS} and the constrained-stable Koopman operator. The predictions of the unconstrained Koopman operator start to diverge away from the pendulum states after 1 second.}\label{fig:: PendulumPrediction}
\end{figure}

Next, we investigate the effect of unstable eigenvalues on the prediction accuracy, as well as the data-efficiency of the algorithms. Specifically, for 300 randomly sampled initial conditions, we compare the prediction error for the pendulum angle as a function of the number of training measurements used to compute a Koopman model. We sample the initial conditions uniformly from  $U(-2\pi,2\pi)$ and $U(-2.5,2.5)$ for the angle and the angular velocity, respectively and training measurements are $\Delta t = 0.02$~s apart; the average error is measured over 3 seconds. We show the results in Fig. \ref{fig:: PendulumDataEfficiency} for an undamped and a damped pendulum system. 

\begin{figure}
    \centering
    \subcaptionbox{Undamped pendulum \label{fig:: Pendulum - AverageError - Unstable vs Stable}}{
    \includegraphics[width = 0.45\columnwidth, keepaspectratio = true]{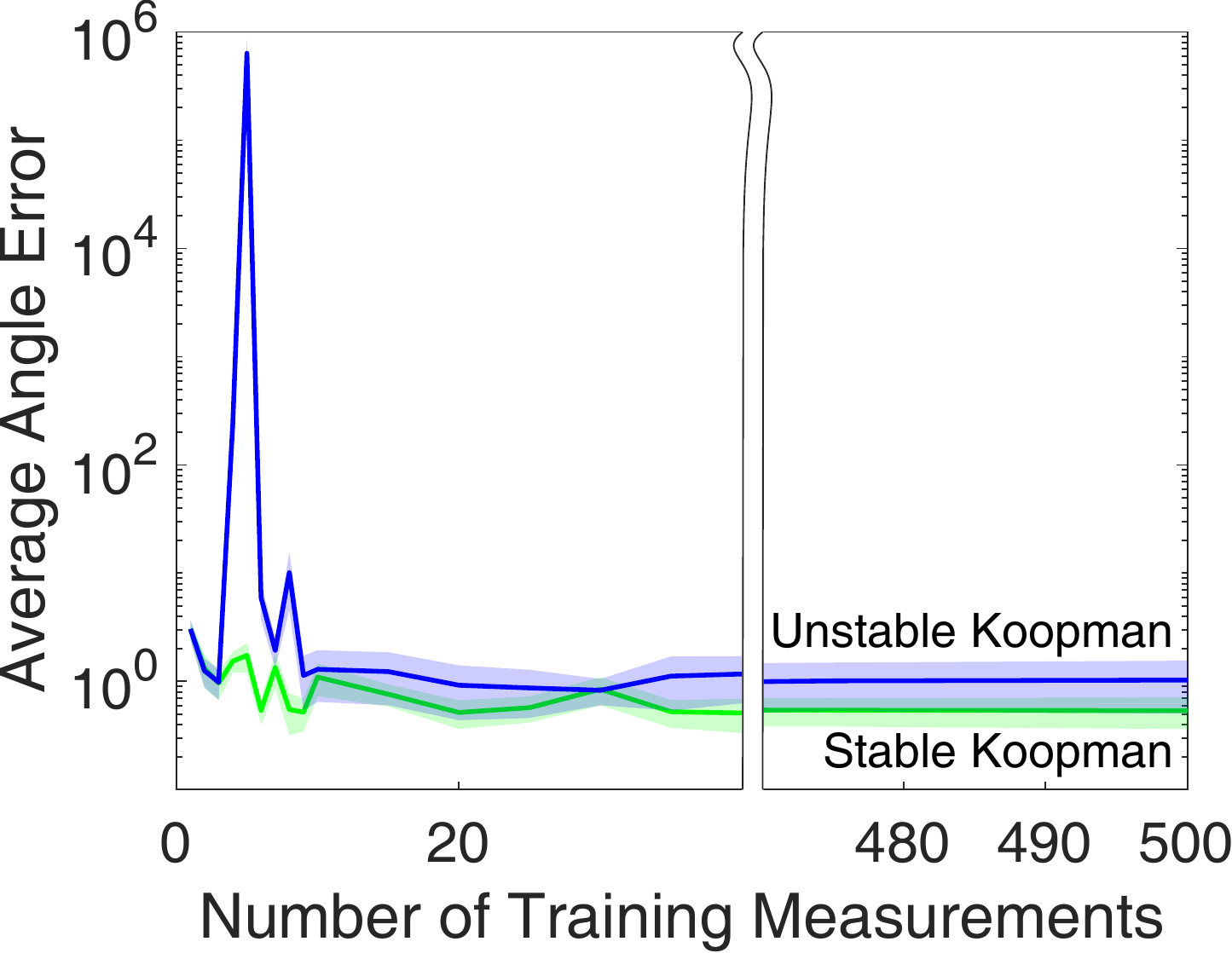}}
    \subcaptionbox{Damped pendulum \label{fig:: DampedPendulum - AverageError - Unstable vs Stable}}{
    \includegraphics[width = 0.51\columnwidth, keepaspectratio = true]{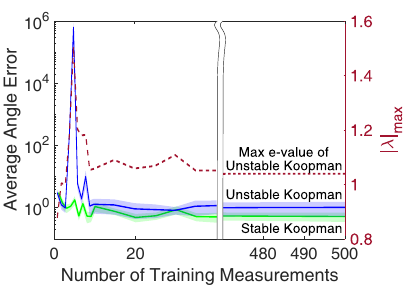}}
    \caption{Average angle error, with one-half standard deviation shading, for the undamped (Fig. \ref{fig:: Pendulum - AverageError - Unstable vs Stable}) and damped (Fig. \ref{fig:: DampedPendulum - AverageError - Unstable vs Stable}) pendulum dynamics, as predicted by the unconstrained and the constrained-stable Koopman operator solutions. For each number of measurements used to compute a Koopman operator, the average angle error is the average absolute difference of the true system state (evolved using the nonlinear dynamics) and the system state as predicted by either Koopman operator over 3 seconds over 300 initial conditions.}\label{fig:: PendulumDataEfficiency}
\end{figure} 

For both systems, the stable Koopman operator leads to smaller average prediction error for any number of measurements used for training, as well as lower error variance than the least-squares, unconstrained solution \eqref{eq:: Kd_LS}, exhibiting better predictive accuracy and robustness. When using the largest training set (500 measurements), the average error is 0.52 radians (about 30 degrees) for the stable Koopman model and 1.03 radians (about 59 degrees) for the least-squares model. Further, the large error spikes associated with the unstable Koopman model around 5 measurements indicate that the unstable solution can be very inaccurate when trained with few measurements; on the other hand, the predictive accuracy of the stable Koopman operator remains almost the same regardless of the size of the training sample, demonstrating data-efficiency and emphasizing the benefit of DISKO in the low-sample limit. When few measurements are available, the least-squares solution is prone to misidentifying the system, but the stability constraints help make the learning process less sensitive to the amount of training data. The envelope of the standard deviation error of the unconstrained Koopman is slightly lower than that of the stable solution around 10 training measurements, suggesting that it is possible that the unstable Koopman generates at times a smaller error and is more accurate. We argue that this is a result of the unstable Koopman overfitting to certain initial conditions and accurately predicting the evolution of very few initial states. Last, the unconstrained least-squares solution for the damped pendulum is always unstable, misidentifying in all cases the true properties of the system. This observation illustrates that, although training with more samples can improve the model accuracy, there is no guarantee that the resulting model will be stable. In fact, data-driven models can often misrepresent stable dynamics with unstable solutions \cite{SparseData_Koopman, Robust_Koopman_v2}.

Next, we use the undamped pendulum system to illustrate how stable Koopman operators can be used to construct Lyapunov functions and verify the stability of a controller. Using LQR feedback, we generate a trajectory and use the state measurements and the DISKO algorithm to compute a stable Koopman operator. We then use the Koopman operator to construct a Lyapunov function, which we evaluate with the measurements from the controlled trajectory. 

To generate LQR control, we use $Q = \text{diag}[1, 1]$ and $R = \text{diag}[0.01]$, initial conditions $[\theta_0, \dot{\theta}_0] = [\pi, 5]$ and collect measurements every $\Delta t = 0.1$~s. We train a Koopman operator using $\Psi(s) = [\theta, \dot{\theta}, \theta^2, \dot{\theta}^2, \sin(\theta), \sin(\dot{\theta}), \sin(\theta) \dot{\theta}, \sin(\dot{\theta})\theta]^T$, which satisfy the stability conditions presented in Section \ref{sec:: Stability_properties_of_Koopman}. We then solve the Lyapunov equation \eqref{eq:: LyapunovEquation} using $Q_{\tilde{\mathcal{K}} } = \mathbb{I}_{8\times 8}$ (identity matrix) and construct the candidate control-Lyapunov function as $V(\Psi((s)) = \Psi(s)^T P \Psi(s)$. We show the results in Fig. \ref{fig:: LyapunovPendulum}. The candidate control-Lyapunov function evaluated with the controlled trajectory of the pendulum satisfies the properties of a Lyapunov function and shows that the specific trajectory is converging to the equilibrium. The validity of the Lyapunov function rests on the assumptions shown in \eqref{eq:: a_max} and \eqref{eq:: boundedError}, respectively, which place upper and lower bounds on $a$ respectively;  \eqref{eq:: zeroError} is satisfied because all chosen basis functions evaluate to zero at the equilibrium. These bounds, used to test the validity of the candidate Lyapunov function, help answer that the controlled dynamics provably converge to the equilibrium whenever there exist solutions that satisfy both assumptions.

\begin{figure}
\centering
\subcaptionbox{Candidate control-Lyapunov function}{\includegraphics[width = 0.8\linewidth, keepaspectratio = true]{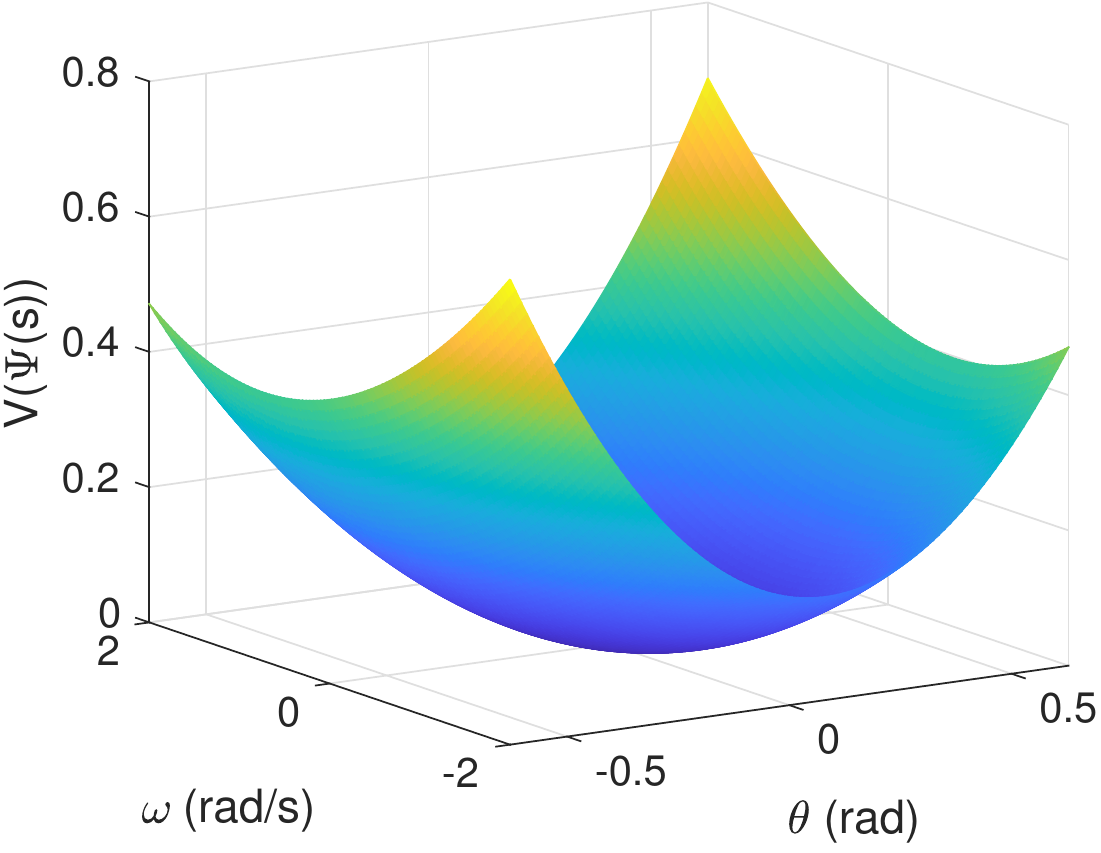}}
\subcaptionbox{Test of assumptions for validity of Lyapunov function}{\includegraphics[width = 0.99\linewidth, keepaspectratio = true]{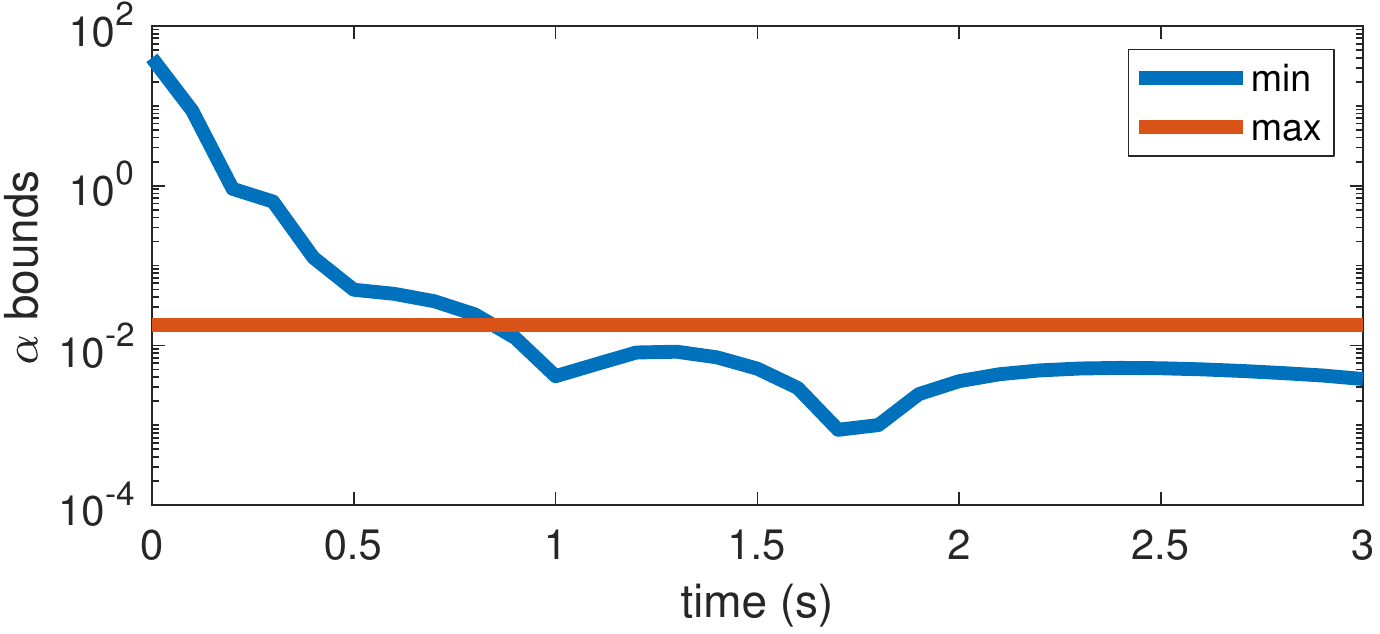}}
\subcaptionbox{Evaluation with state measurements from LQR-controlled trajectory.}{\includegraphics[width = \linewidth, keepaspectratio = true]{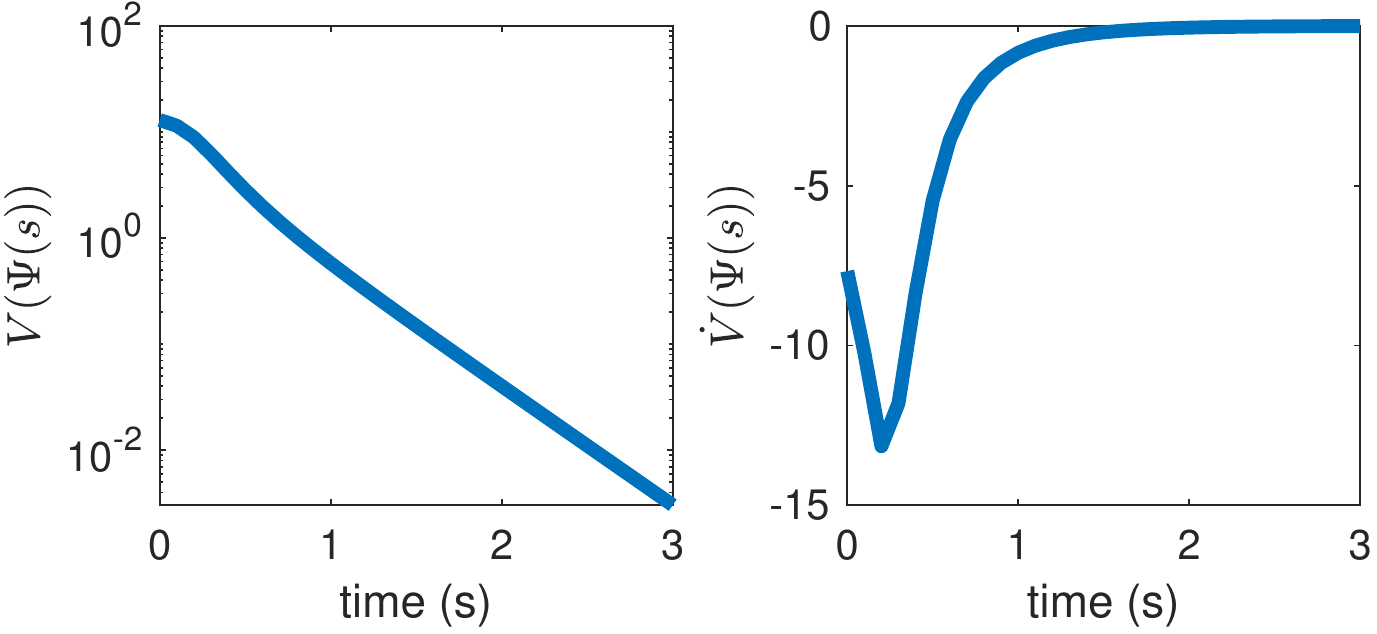}}

 \caption{Candidate control-Lyapunov function constructed from stable Koopman operators and evaluated on a LQR-controlled pendulum. The candidate control-Lyapunov function is used to verify the stability of the controlled trajectory. The existence of a solution that satisfies the assumptions \eqref{eq:: a_max} and \eqref{eq:: boundedError} required for the validity of the Lyapunov function is tested; \eqref{eq:: zeroError} is satisfied because all chosen basis functions evaluate to zero at the equilibrium. In the last two seconds, there exist solutions for $a$ that satisfy the upper and lower bounds from \eqref{eq:: a_max} and \eqref{eq:: boundedError}, respectively, and the constructed solution is provably a valid Lyapunov function. }\label{fig:: LyapunovPendulum}
\end{figure}

\begin{figure}
\centering
\subcaptionbox{Test of assumptions for validity of Lyapunov function}{\includegraphics[width = 0.99\linewidth, keepaspectratio = true]{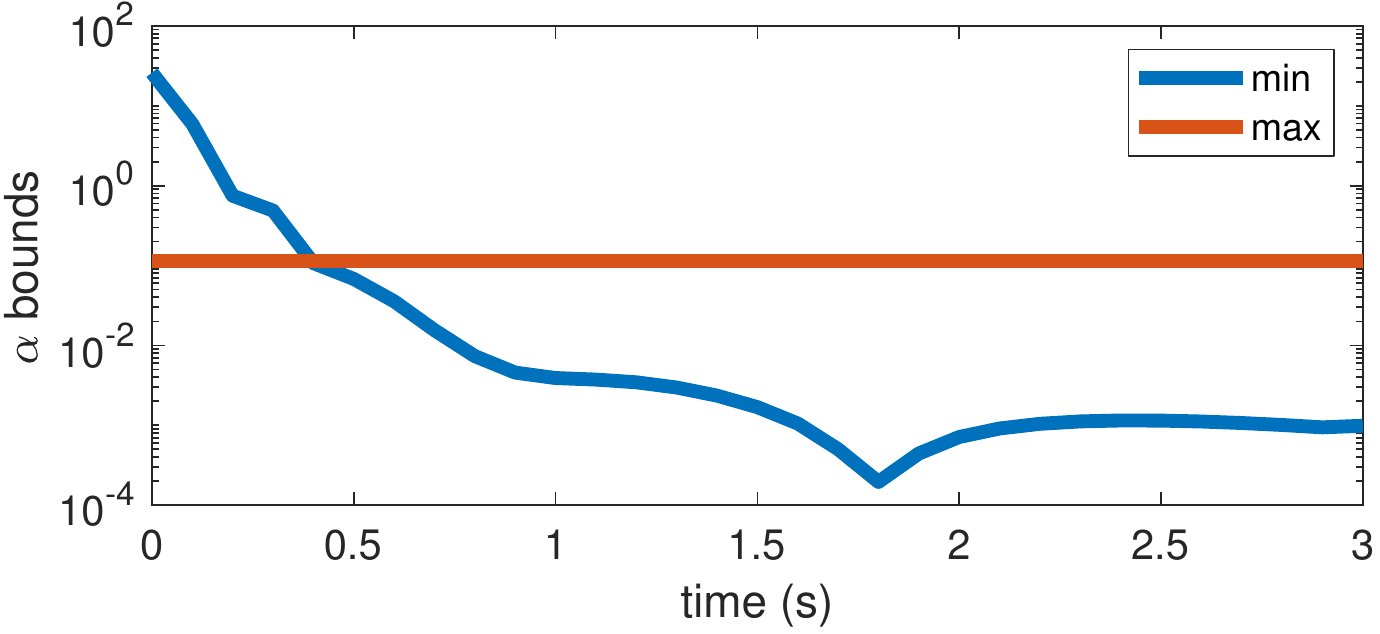}}
\subcaptionbox{Evaluation with state measurements from LQR-controlled trajectory.}{\includegraphics[width = \linewidth, keepaspectratio = true]{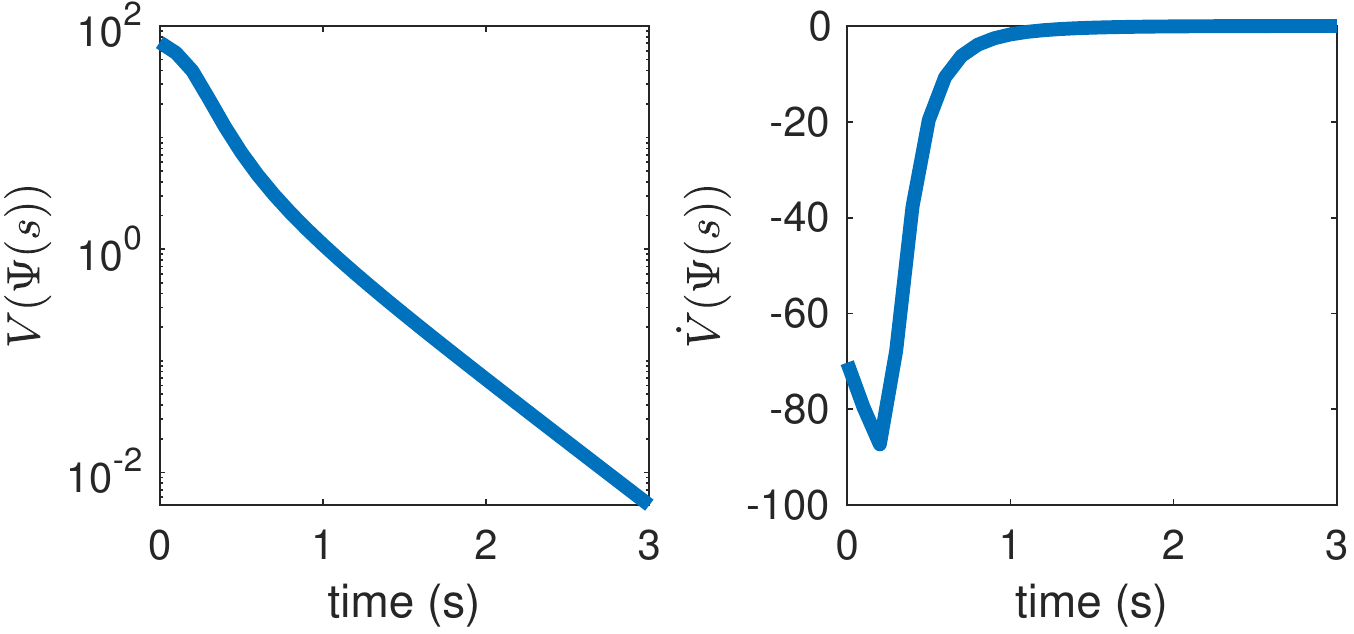}} \caption{Evaluation of the candidate Lyapunov function with the same data as in Fig. \ref{fig:: LyapunovPendulum}, but improved basis function selection. Improving the accuracy of the basis functions leads to relaxed upper and lower bound constraints and a longer period over which the candidate Lyapunov function is validated. }\label{fig:: LyapunovPendulum_Haseli}
\end{figure}

\begin{figure}
\centering
\subcaptionbox{Test of assumptions for validity of Lyapunov function}{\includegraphics[width = 0.99\linewidth, keepaspectratio = true]{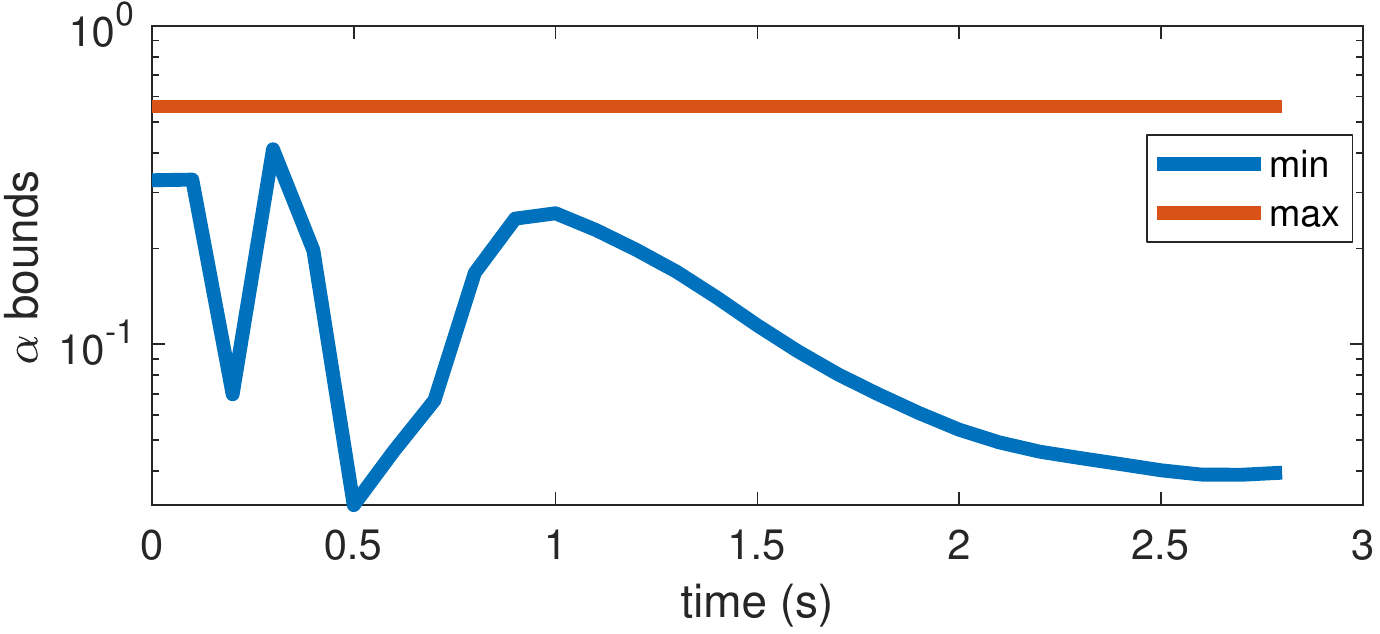}}
\subcaptionbox{}{\includegraphics[width = \linewidth, keepaspectratio = true]{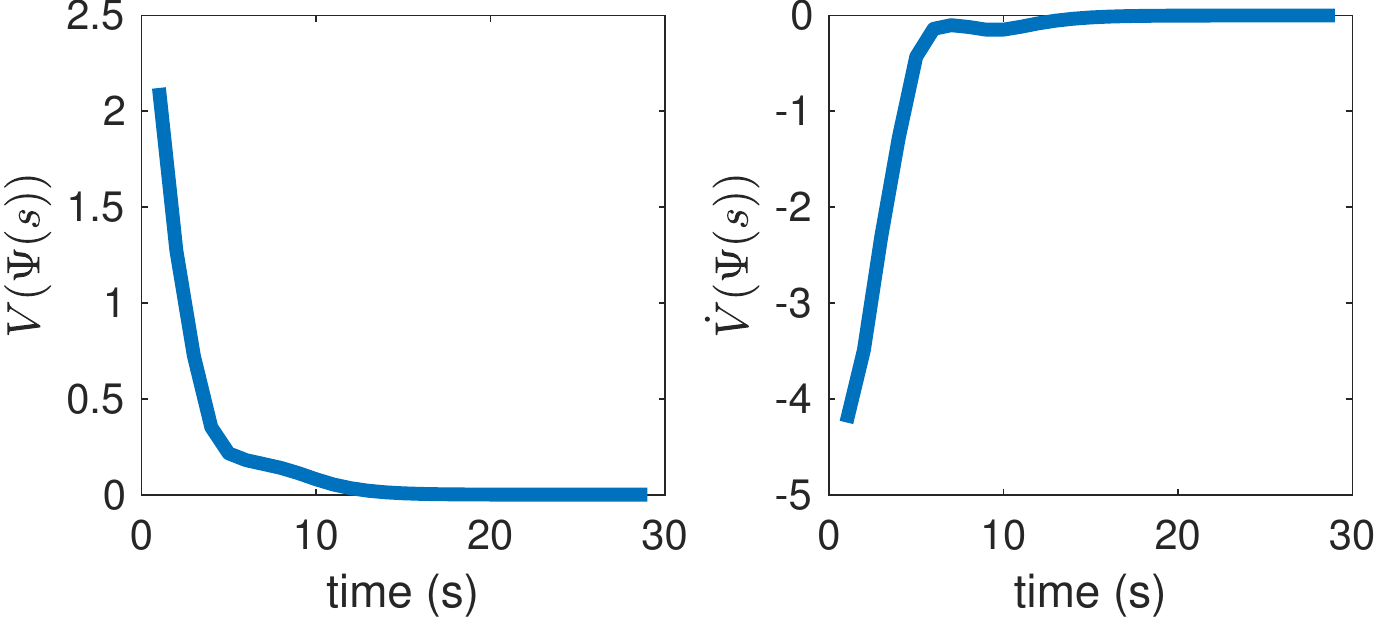}} \caption{Evaluation of the candidate Lyapunov function for the cart-pendulum dynamics.}\label{fig:: LyapunovPendulum_cart}
\end{figure}

Note that the control-Lyapunov function shown in Fig. \ref{fig:: LyapunovPendulum} is used to verify the stability of the applied controller for the particular trajectory and may not be a Lyapunov function everywhere in the state space. According to Theorem 3, a stable Koopman operator can provably generate a Lyapunov function provided the model error satisfies the conditions in \eqref{eq:: a_max}, \eqref{eq:: zeroError}, and \eqref{eq:: boundedError}. For controlled systems, one can construct a candidate control-Lyapunov function from the identified stable Koopman. The candidate control-Lyapunov function is guaranteed to be valid everywhere (that is, in those parts of the state space) that the model error using the stable Koopman satisfies \eqref{eq:: a_max}, \eqref{eq:: zeroError} and \eqref{eq:: boundedError}, the sufficient conditions. 
In practice, however, the candidate control-Lyapunov function evaluated along the controlled trajectory may still be valid, which can serve as a certificate for stability even if \eqref{eq:: a_max}, \eqref{eq:: zeroError}, and \eqref{eq:: boundedError} are not satisfied. Further, one could solve for a control policy that generates a stable Koopman operator and generates a valid control-Lyapunov function, but this is left for future research.

Last, the choice of basis functions explicitly affects the search of valid Lyapunov functions through \eqref{eq:: zeroError} and \eqref{eq:: boundedError}. Basis functions that evaluate to zero at the equilibrium to satisfy \eqref{eq:: zeroError} and also improve the ratio between the model error and their norm to relax the constraint in \eqref{eq:: boundedError} can help validate candidate Lyapunov functions. This is why the properties of the basis functions in Proposition \ref{prop: conditions_for_Koopman_basis} are needed: to improve the model error and satisfy the required conditions so that a candidate Lyapunov function can be found. To illustrate the effect of the basis functions, we use the algorithm in \cite{Haseli_EDMDtunableAccuracy, haseli2021generalizing} to improve the accuracy of the basis functions selection. We show the results in Fig. \ref{fig:: LyapunovPendulum_Haseli}. The new choice of Koopman basis functions both decrease the lower bound constraint \eqref{eq:: boundedError} and increase the upper bound constraint \eqref{eq:: a_max}, thus confirming the validity of the Lyapunov function and the fact that the dynamics are stabilizing to the equilibrium for a longer period: from around 0.5 seconds till the end.

Last, we repeat these results for the inverted cart-pendulum dynamics \cite{mamakoukas2018superlinear}. To collect training data, we invert the pendulum using the Sequential Action Controller (SAC) and the system dynamics and parameters shown in \cite{mamakoukas2018superlinear} (Section IV. B). We train a Koopman operator using 
\begin{align*}
    \Psi(s) = [\theta, \dot{\theta}, x, \dot{x}, \theta^2, \dot{\theta}^2, x^2, \dot{x}^2, \sin(\theta), \sin(\dot{\theta}), \sin(x), \sin(\dot{x}), \\ \sin(\theta) \dot\theta, \sin(\dot\theta) \theta, \sin(x) \dot x, \sin(\dot x) x, \theta \dot\theta, \theta x, \theta \dot x, \dot\theta x, \dot\theta \dot x, x \dot x ]^T,
\end{align*}
which satisfy the stability conditions presented in Section \ref{sec:: Stability_properties_of_Koopman} for the equilibrium at $s_e = [0, 0, 0, 0]^T$ and solve the Lyapunov equation \eqref{eq:: LyapunovEquation} using $Q_{\tilde{\mathcal{K}} } = \mathbb{I}_{22 \times 22}$. We show the results in Fig.~\ref{fig:: LyapunovPendulum_cart}. The candidate Lyapunov function is validated for all the measurements of the controlled trajectory data. 

\subsection{Nonlinear Control Using Stable Koopman Operators}
In this section, we demonstrate the benefit of using stable Koopman operators for nonlinear control. By improving the robustness and modeling accuracy of data-driven models, we argue that stability-constrained models would also lead to improved control performance. 

Knowledge of the equilibrium points for a system can also be used to filter basis functions \textit{a-priori}, without any training data. We provide an illustrative example in Appendix \ref{app:: FilteringOfBasisFunctions}. Leveraging the conditions for the Koopman models as shown in Proposition \ref{prop: conditions_for_Koopman_basis} to optimize the choice of basis functions is a promising research topic that merits further investigation and is deferred to future work. 

\subsubsection{Quadrotor}
We first consider stabilizing a falling quadrotor. Using active learning, which has been shown to enhance learning and the accuracy of identified dynamics \cite{Ian_active_learning}, we collect training data within the first second of the free-fall. Then, using the same training sample, we compute a Koopman model using the least-squares solution \eqref{eq:: Kd_LS} and DISKO and develop an LQR policy to stabilize the quadrotor. The system feedback rate is $200$~Hz and the state is partially observed containing only the measured body-relative gravity vector and the body linear and angular velocities (see~\cite{Ian_active_learning} for more detail). The quadrotor dynamics, LQR parameters, and Koopman basis functions are the same as in~\cite{Ian_active_learning}. Note that the quadrotor dynamics used in \cite{Ian_active_learning}, without control, have no equilibrium points due to gravity. Without gravity (zero gravity term) all the functions used here evaluate to zero, imposing a Koopman equilibrium at the same equilibrium as the original dynamics (when the linear and angular velocities are zero). Even without an equilibrium point of the free dynamics, imposing that the learned model is stable numerically improves the prediction and leads to better control performance as shown in Figure~\ref{subfig:: quadPrediction}.

\begin{figure}
\centering
\begin{subfigure}[]{0.49\columnwidth}
\subcaptionbox{Control performance with increased planning horizon. \label{subfig:: 30timeStep}}{\includegraphics[width =\linewidth, keepaspectratio = true]{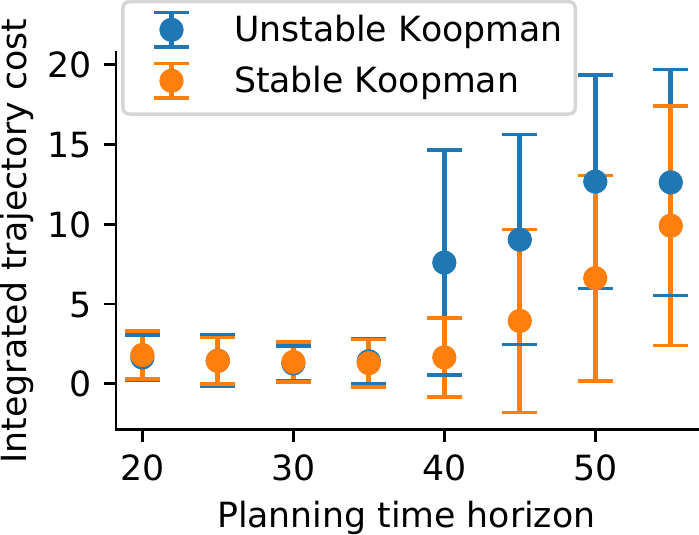}}

\par\medskip
\subcaptionbox{40 time-steps horizon. \label{subfig:: 40timeStep}}{\centering\includegraphics[width =  \linewidth, keepaspectratio = true]{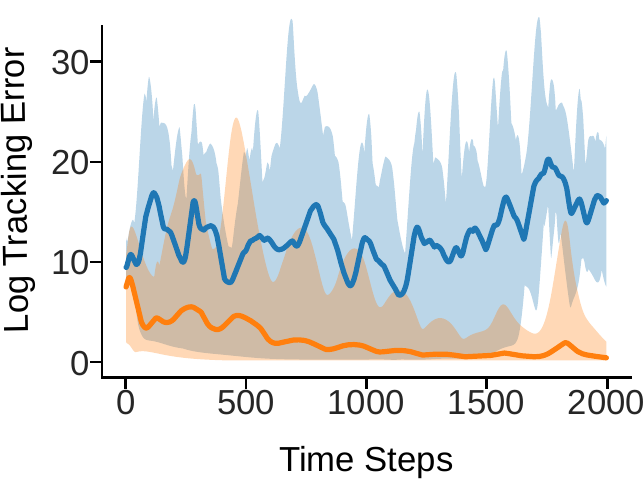}}
\end{subfigure}
\begin{subfigure}[]{0.49\columnwidth}
\subcaptionbox{Prediction error. \label{subfig:: quadPrediction}}{\includegraphics[trim=0 0 0 0,clip,width =0.9\linewidth, keepaspectratio = true]{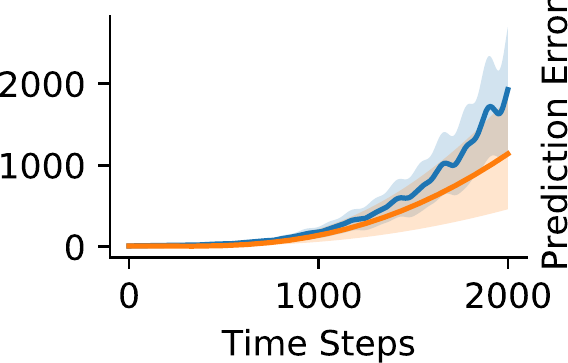}}
\subcaptionbox{LQR-control trajectories. \label{subfig:: quadTrajectory}}{\includegraphics[trim=0 0 0 0,clip,width =0.85\linewidth, keepaspectratio = true]{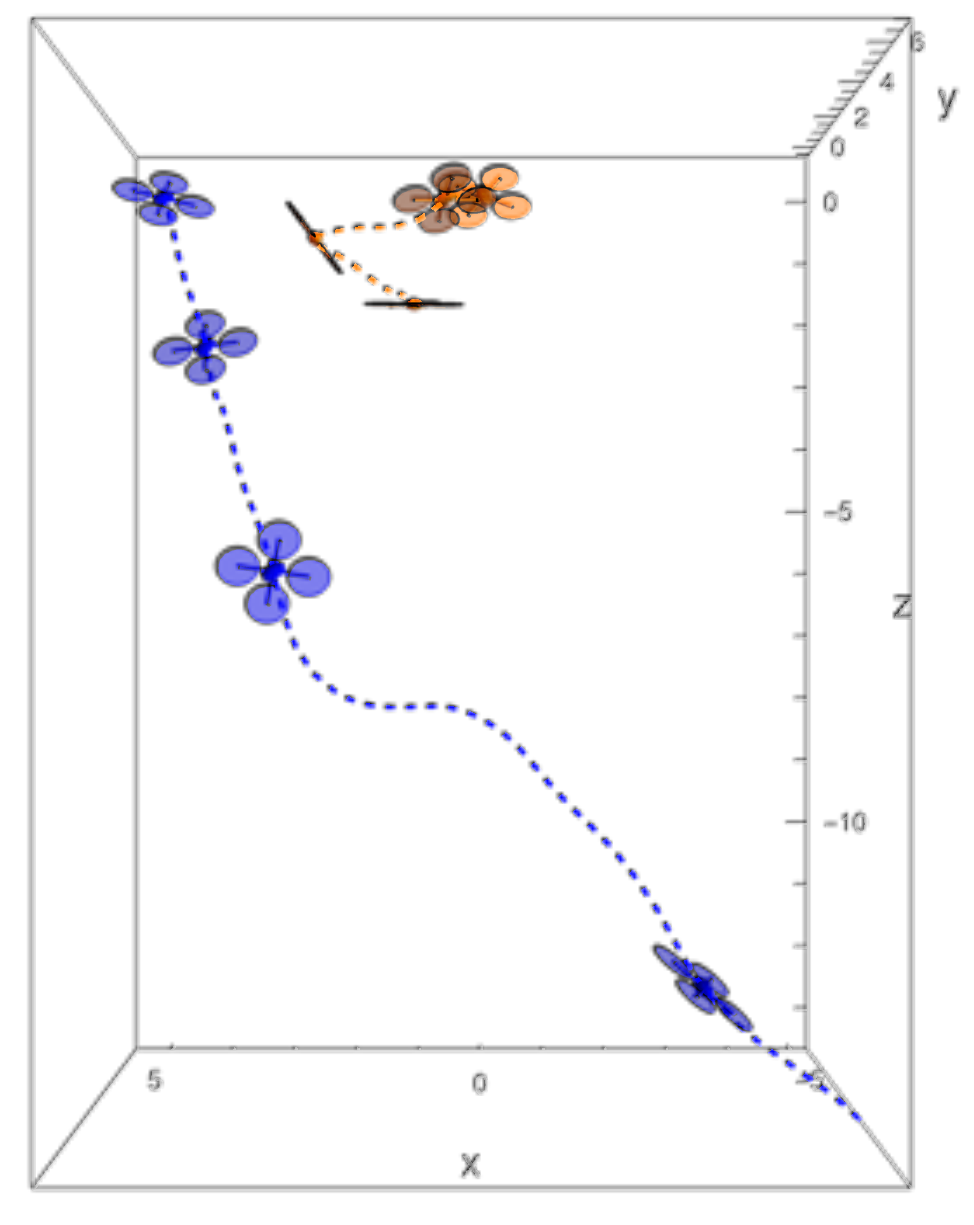}}
\end{subfigure}
\caption{Performance of LQR control derived from the stable \eqref{eq:: LS_StableKoopman} and least squares \eqref{eq:: Kd_LS} Koopman operators for the quadrotor dynamics. Both models use the same training measurements that are collected with active learning. At the end of the learning phase, the stable Koopman is computed and the LQR gains from both models are derived. Figure \ref{subfig:: 30timeStep} illustrates control performance as a function of planning horizon and Figure \ref{subfig:: 40timeStep} show the log error of the tracking cost for 10 trajectories with the same uniformly sampled initial conditions. The solid line represents the median score of each approach and the shaded envelope the lowest and highest cost. Figure \ref{subfig:: quadPrediction} illustrates the prediction error of the learned Koopman operators compared to ground truth. Figure \ref{subfig:: quadTrajectory} shows a trajectory using the 40 time-step horizon control. The initial conditions are the same, but shifted in the x-axis for better visibility.}\label{fig:: ActivelyLearned_Quadrotor}
\end{figure}

We present the stabilizing performance of the stable and least squares Koopman models in Fig. \ref{fig:: ActivelyLearned_Quadrotor}. We consider a sequence of planning time horizons used in computing the finite-horizon LQR control and illustrate the tracking error for 40 time step horizon using the same 10 uniformly sampled initial conditions as done in \cite{Ian_active_learning}. We use the median score (in log scale) as a performance metric of the two approaches, because it is not as biased in a case of failure (when states diverge). Using DISKO, the control is robust and stabilizes the dynamics for all choices of the prediction horizon, contrary to the unconstrained model that fails when using a longer prediction window. Note that the time horizon is typically chosen to be as long as possible while satisfying application-specific computational demands in order to enable more far-sighted control solutions \cite{droge2011adaptive}. Here, as the time horizon is increased, the stable model leads to lower error than the unconstrained model with smaller performance variance. These results show that active learning techniques that enhance learning are not sufficient to address the challenges of unconstrained data-driven models and can be further improved through the use of DISKO. In addition, these results highlight that unstable models can quickly become impractical to use and provide motivation as to why stability constraints can be useful even when modeling unstable dynamics.

\subsubsection{Pusher-slider system}
Next, we demonstrate the benefits of DISKO using a pusher-slider system \cite{pusherSlider_feedbackControl, pusherSlider_probabilistic, pusherSlider_dataEfficient}. The pusher is a steel rod held tightly by the end effector of the Franka Emika Panda robot \cite{frankaEmikaPanda} and the slider is a rectangular block with dimensions $15.3 \times 13$~cm. We record the states of the system at $10$~Hz using an overhead camera and QR codes on the slider. We show the experimental setup in Fig. \ref{fig:: pusherSliderSetup}.

 To train a model, we collected data using a controller to push the block with the end-effector of the robot. We collected six training sets (200 measurements each) performing random maneuvers and used the data to compute a least-squares unconstrained and a stable Koopman model. The basis functions used are
 \begin{align*}
     \Psi(s) =& [x, y, \theta, p_y, v_n, v_p, \sin(\theta) v_n, \cos(\theta) v_n, \sin(\theta) v_p, \\
     & \cos(\theta) v_p, p_y v_n, v_p v_n]^T,
 \end{align*}
 where $x, y,$ and $\theta$ are the world-frame coordinates and orientation of the block, $p_y$ is the distance of the pusher (end effector of the Franka Emika robot) away from the center of the block and along its pushing side (we assume that the pusher is always in contact with the block), $v_n$ and $v_p$ are the normal and parallel velocity of the slider in the body-frame of the block, respectively, and $u \in \mathbb{R}^2$ is the acceleration input for the body-frame normal and parallel velocity of the slider.
 
Note that the equilibrium points of the original dynamics occur when the velocities of the block are zero, that is when $v_n = v_p = 0$. The basis functions chosen here may violate the condition in Proposition \ref{prop: conditions_for_Koopman_basis}, since $x, y, \theta$, and $p_y$ may not be zero at the equilibrium. However, we included the system states in the basis functions to design control that drives the system to the desired configuration, as specified by $x, y, \theta, $ and $p_y$, and not simply bring it to rest. By subtracting the desired values from the system states in the basis functions (i.e. $x' = x - x_{des}$), it is possible to impose that an equilibrium of the original dynamics at the desired configuration is also an equilibrium of the Koopman dynamics (see Proposition \ref{prop: conditions_for_Koopman_basis}). However, subtracting the target configuration from the Koopman basis functions would require that a Koopman model is re-trained for every different target, which we avoided for simplicity. Using information about the target in the basis functions to ensure that Proposition \ref{prop: conditions_for_Koopman_basis} holds is left for future work.
 
 To compare the unconstrained and stable least-squares models, we forward predict the system with zero inputs (see Fig. \ref{fig:: PusherSlider_ZeroControlPrediction}). Given no movement from the pusher, the block should stay in place. However, the states propagated with the unconstrained model diverge, as expected for an unstable linear model. On the other hand, the simulated prediction of the DISKO model barely shows any motion and is consistent with the expected behavior of the system. This difference in performance highlights again the numerical motivation for stability. Further, imposing the right properties on learning helps narrows the solution space and identify representations that mirror the true system properties and generalize beyond training.
 \begin{figure}
    \centering
    \includegraphics[width = 0.9\columnwidth, keepaspectratio = true]{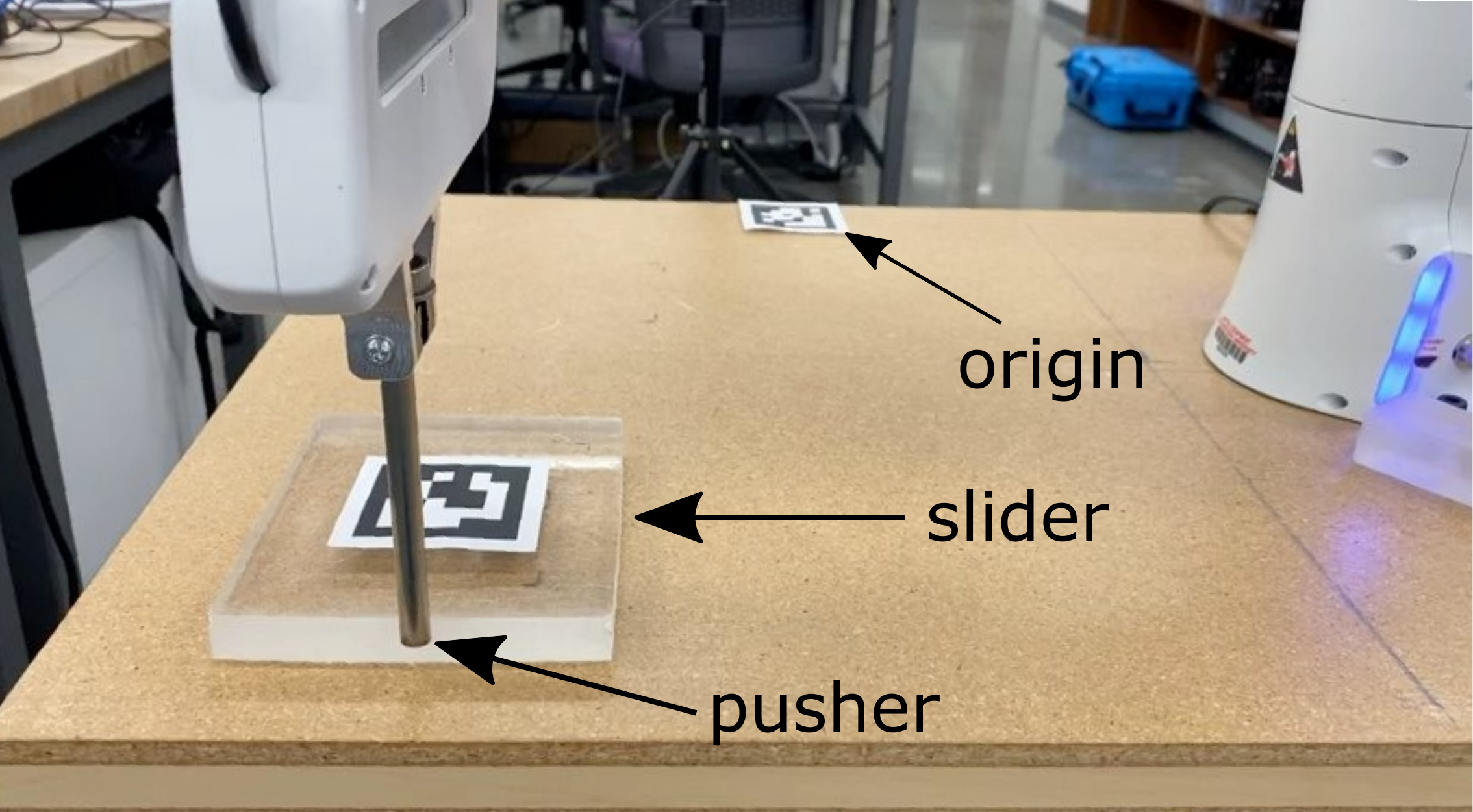}
    \caption{Experimental setup of the pusher-slider system. We record the states of the pusher and the slider with an overhead camera and identify the block configuration using QR labels.}\label{fig:: pusherSliderSetup}
\end{figure}
 
 \begin{figure}
    \centering
    \subcaptionbox{LS Koopman}{\includegraphics[width = 0.49\linewidth, keepaspectratio = true]{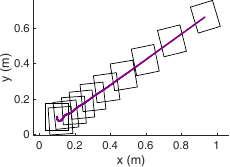}}
    \hfill
    \subcaptionbox{DISKO}{\includegraphics[width = 0.49\linewidth, keepaspectratio = true]{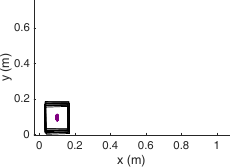}}
    \caption{Simulation of the pusher-slider system over 500 time steps ($ dt = 0.1$) with zero control inputs. The least-squares Koopman model is unstable and drifts away, despite the fact that there should be no motion in the absence of control.}\label{fig:: PusherSlider_ZeroControlPrediction}
\end{figure}
\begin{figure*}
    \centering
    \includegraphics[width = \textwidth, keepaspectratio = true]{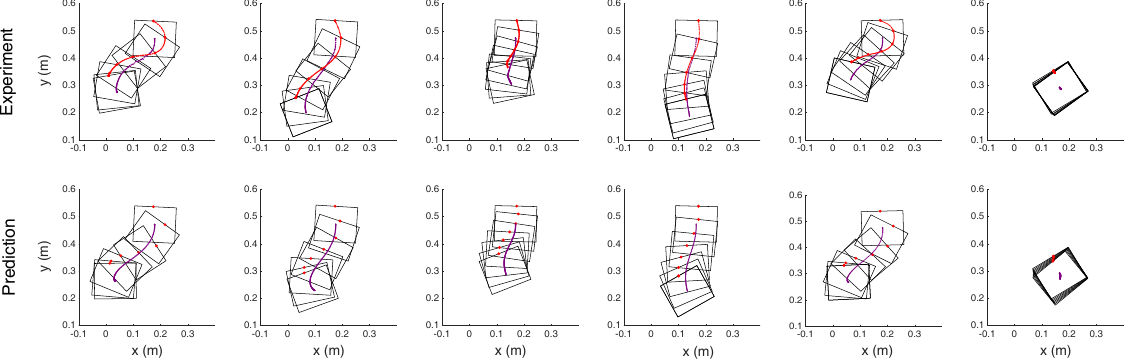}
    \caption{Comparison of the experimental trajectories from the training data to those obtained in simulation using DISKO. Instances are shown every 2 seconds. We initiate the simulated trajectories with the starting states of the experiments and forward-propagate them with the same control inputs that were applied in each run. We indicate the position of the pusher with red and the center of the slider with purple. In the predictions, the location of the pusher is known only as $py$, the distance away from the center of the block along its pushing side; we assume that the pusher is always in contact with the block. To highlight this difference, we do not plot a trajectory of the pusher, but show its location only at the instants the block is drawn. }	\label{fig:: pusherSlider_ExpVsPrediction} 
\end{figure*}

Next, we test the predictive accuracy of the learned DISKO representation against the training. Specifically, we use the stable Koopman model to forward-simulate the pusher-slider system with the controls applied during the experiments and compare it to the actual trajectories. We show the results in Fig. \ref{fig:: pusherSlider_ExpVsPrediction}. In all cases, the model obtained using DISKO generates qualitatively similar trajectories compared to the actual experiments. 

\begin{figure*}
 \subcaptionbox{}{
 \includegraphics[height = 0.4\columnwidth, keepaspectratio = true]{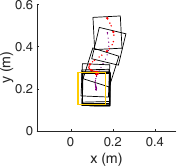}}\vspace{1ex}
 \hfill
 \subcaptionbox{}{
\includegraphics[height = 0.4\columnwidth, keepaspectratio = true]{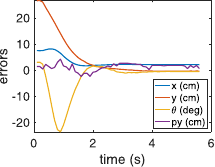}}
 \hfill
 \subcaptionbox{}{
\includegraphics[height = 0.4\columnwidth, keepaspectratio = true]{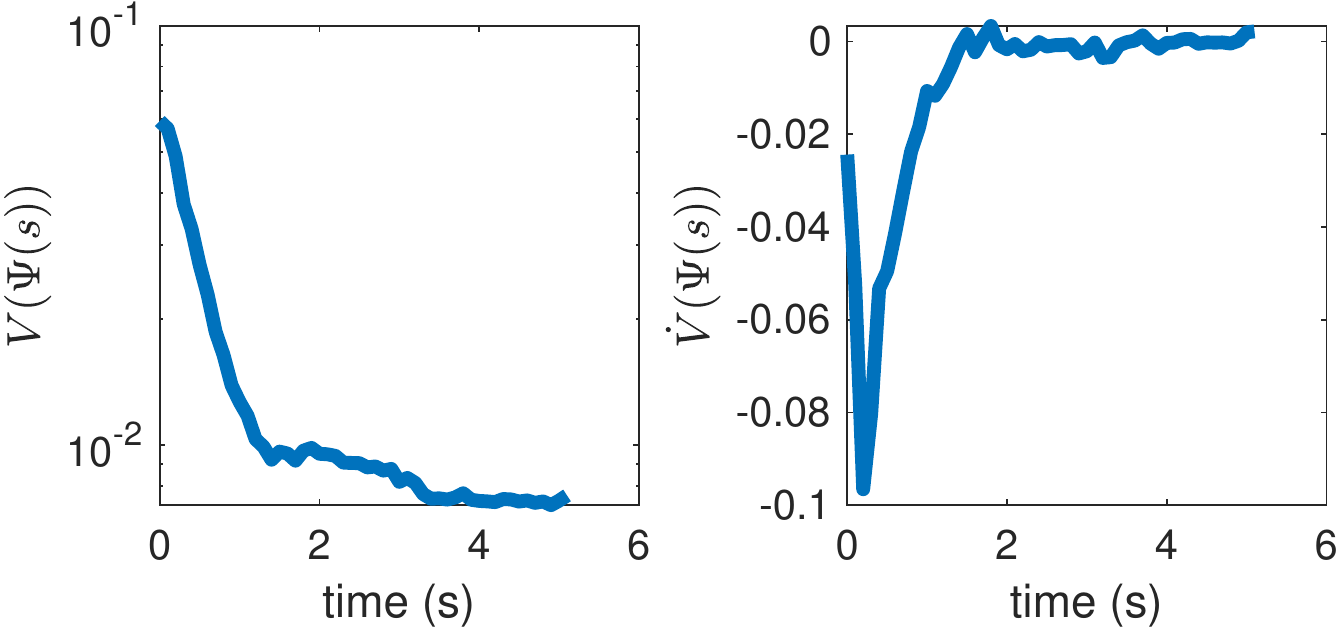}}
\vspace{1ex}
 \subcaptionbox{ }{
 \includegraphics[height = 0.4\columnwidth, keepaspectratio = true]{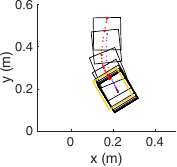}}
 \hfill
 \subcaptionbox{}{
\includegraphics[height = 0.4\columnwidth, keepaspectratio = true]{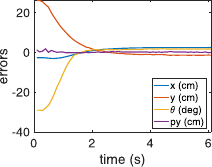}}
 \hfill
 \subcaptionbox{ }{
\includegraphics[height = 0.4\columnwidth, keepaspectratio = true]{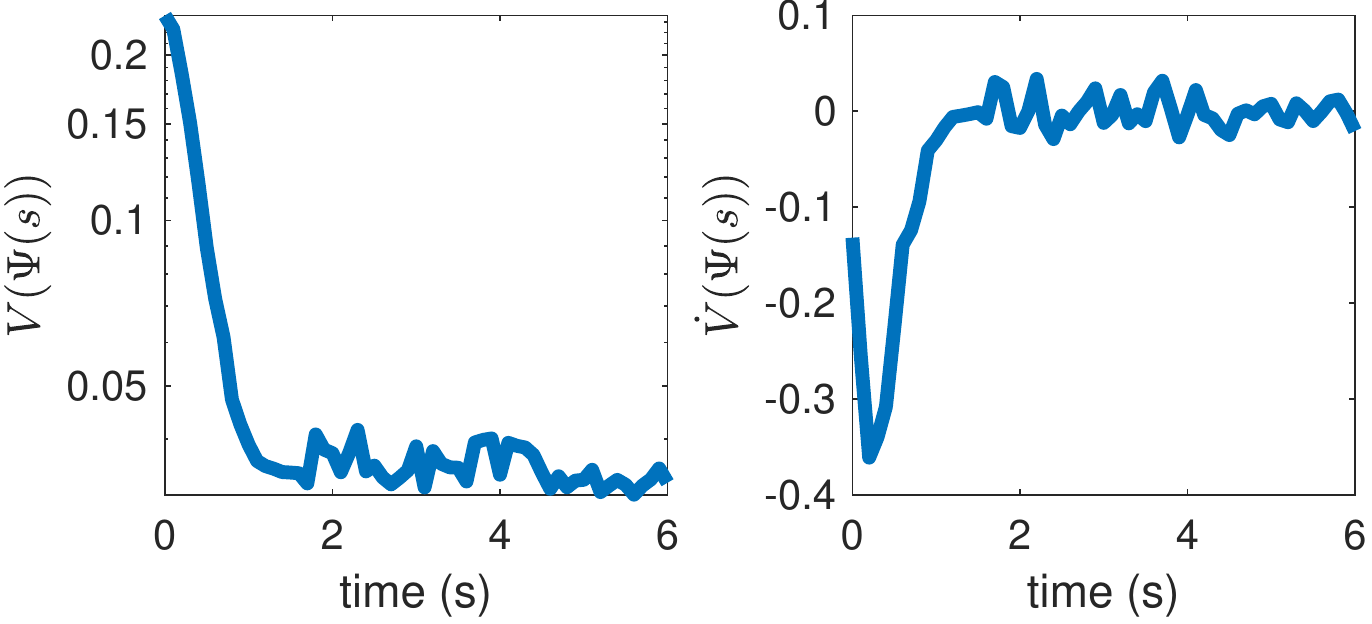}}
   \subcaptionbox{ }{
 \includegraphics[height = 0.4\columnwidth, keepaspectratio = true]{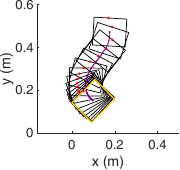}}
 \hfill
 \subcaptionbox{}{
\includegraphics[height = 0.4\columnwidth, keepaspectratio = true]{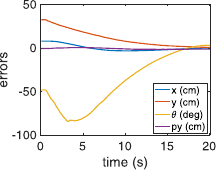}}
 \hfill
 \subcaptionbox{ }{
\includegraphics[height = 0.4\columnwidth, keepaspectratio = true]{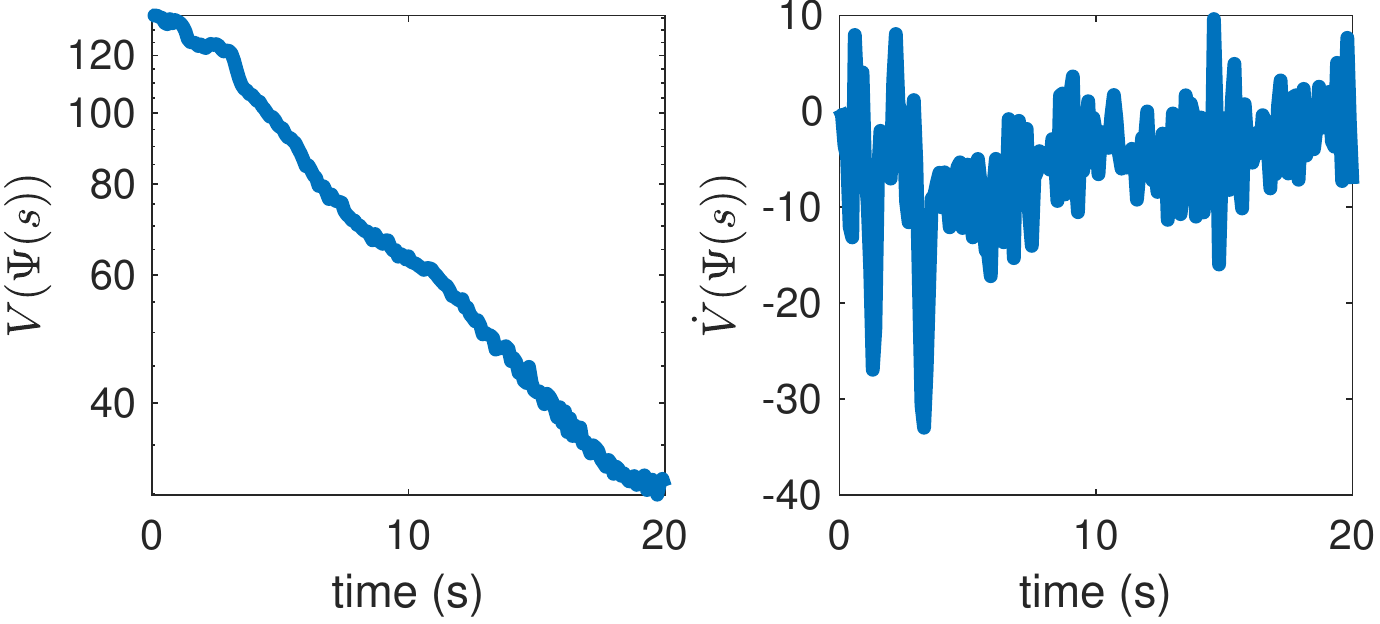}}
\caption{Control of the pusher-slider system using DISKO. We use infinite-horizon LQR control to push the block to the desired orientation, marked with yellow border. We mark the pusher and the center of the block with red and purple dots, respectively. Each row corresponds to one experimental run and shows the trajectory, the tracking errors, and the constructed candidate control-Lyapunov function that verifies that the controlled system converges to the target. The candidate control-Lyapunov function is $V(\Psi(s)) = \Psi(s)^TP\Psi(s)$, where $P$ is the solution to the Lyapunov equation \eqref{eq:: LyapunovEquation} using the stable Koopman operator. \label{fig:: pusherSlide_LQRcontrol}}
\end{figure*}
  
 \begin{figure}
     \centering
     \begin{subfigure}[]{0.49\columnwidth}
            \centering
          \includegraphics[width=0.9\linewidth]{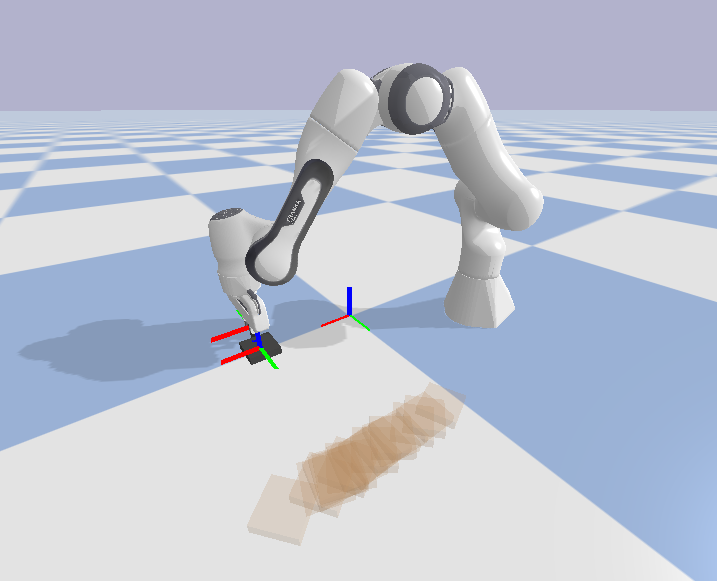}
          \caption{Simulated environment.}
           \label{table:controlComparison-a}
     \end{subfigure}
     \begin{subtable}[]{0.49\columnwidth}
     \centering
\begin{tabular}{@{}lll@{}}
\toprule
                                                                & \begin{tabular}[c]{@{}l@{}}Stable \\ Koopman\end{tabular} & \begin{tabular}[c]{@{}l@{}}LS \\ Koopman\end{tabular} \\ \midrule
\begin{tabular}[c]{@{}l@{}}Fin. \\ LQR\end{tabular}   & \bf{8.753}                                                    & 16.079                                                           \\
\begin{tabular}[c]{@{}l@{}} Inf. \\ LQR\end{tabular} & \bf{13.774}                                                    & 20.606                                                          \\ \bottomrule
\end{tabular}
     \caption{Mean terminal state cost}
    \label{table:controlComparison-b}
     \end{subtable}
     \caption{Model-based control comparison of the stable Koopman model with the unconstrained least squares (LS) Koopman model on the pusher-slider system. The models are trained on the same 5, 100 step trajectories and evaluated in simulation over a sequence of 10 experiments with target slider goals shown as opaque rendered blocks (a). The control methods using the stable Koopman model better maintains the block near the target end goal. In contrast, the control methods constructed with the LS Koopman model tends to deviate from the target goal state, resulting in worse terminal state costs. Code for the simulated Franka experiment is found in the abstract link.}
    \label{table:controlComparison}
 \end{figure}

Last, we test the control performance of the DISKO approach. We apply infinite-horizon LQR control, the gains of which are calculated offline once. We show the results in Fig. \ref{fig:: pusherSlide_LQRcontrol}. The Franka Emika Robot successfully pushes the block to the desired orientations. In the last of the three experiments, we applied a low-pass filter to increase the controller lag time and slow down the experiment in order to capture more of the dynamic response of the robot experiment. We formulate the LQR objective in terms of $x, y, \theta$, and $py$. The weights used for the states are respectively $Q = \text{diag}[800, 800, 50, 10 ,0,0,0,0,0,0,0,0]$ and the control weights are $R = \text{diag}[10^4, 10^4]$. In all three experiments, the pusher successfully moves the block close to the desired configuration. 

We further compare the efficacy of the stable Koopman model for control against a least squares unconstrained Koopman model trained on the same data. Here, a simulated environment that mimics Fig.~\ref{fig:: pusherSliderSetup} is used to directly compare the models using a finite and infinite horizon LQR control strategy. The experimental settings and parameters are given by $Q = \text{diag}[100, 100, 100, 1,0,0,0,0,0,0,0,0]$ and the control weights are given by $R = \text{diag}[10^4, 10^4]$. We test each controller over a 1000 step time window at 10Hz over 10 target end-goals. The goals are placed at a distance 0.4 meters away ($y$ direction) from the starting point and uniformly distributed at a range of 0.4 m ($x$ direction) and $\pi/2$ radians from the initial slider origin starting state ($0,0,0$) as shown in Fig.~\ref{table:controlComparison-a}.

As illustrated in Fig.~\ref{table:controlComparison-b}, the learned stable Koopman model improves the control performance on the pusher-slider task for both the finite and infinite horizon LQR controllers. The controllers constructed with the least squares Koopman model would on average deviate from the terminal state causing poor tracking performance compared to the stable model. The discrepency in the performance is due to the stable model better capturing the underlying dynamics of the pusher-slider system in contrast to the least squares model which can have incorrectly modeled instabilities.  A similar issue is often reported with nonlinear neural network models that are used for model-based control. Due to the nonlinearities, it is difficult to ascertain the accuracy of the models and the stability for long-time horizon prediction and control, making the neural network models difficult to control due to exploding or vanishing dynamic behavior~\cite{pal2020brief}. Imposing the stability requirements allow control synthesis to be more realizable and having effective linear models e.g., Koopman operators, enables better predictability of the underlying controlled system behavior.  

We verify that the controlled system converges to the target by constructing a candidate control-Lyapunov function with the stable Koopman operator and evaluating it with the state measurements. Note that we reconstruct the candidate control-Lyapunov function using Theorem \ref{th:: SemiGlobalStability}, using data from each individual trajectory. The candidate control-Lyapunov function is $\Psi(s(t))^T P \Psi s(t))$, where $P$ is the solution to the Lyapunov equation \eqref{eq:: LyapunovEquation} and $\mathcal{K}$ is the identified stable Koopman model. We also show the time derivative of the candidate control-Lyapunov function, $\frac{d}{dt}V(\Psi(s(t))$. Note that the candidate control-Lyapunov function is monotonically decreasing in all three cases. The fluctuations seen for $\frac{d}{dt}V(\Psi(s(t))$ at the end of the first two trials and throughout the third trial are observed because the motion of the system is slow; the measurements do not change enough and noise becomes more significant. At the end of the first two trials, the slow motion arises as the system converges to the target, while the motion is slower in the third experiment due to the low-pass filter. This is a candidate control-Lyapunov function for a basis of attraction surrounding the equilibrium, but the robot would need to collect data verifying the candidate control-Lyapunov function. This active phase of stability verification will be considered in future work.

\section{Discussion}\label{sec:: Discussion}
This paper demonstrates the benefits of stable Koopman operators in the prediction and control of data-driven nonlinear systems. We derive a formula for the prediction error associated with Koopman operators for an arbitrary number of time steps, which we use to show how unstable eigenvalues exponentially amplify modeling errors. We also derive properties for basis functions that are consistent with a stable Koopman operator and can improve the learning process. Appropriate basis functions can lead to low training error, which is in turn useful for the construction of Lyapunov functions, also shown in this work. 

Using the examples of the pendulum and the quadrotor, we demonstrate how stability constraints on the Koopman models makes them more robust to limited training data (which often arises in time-urgent tasks such as stabilizing unknown dynamics), as well as the prediction horizon. In fact, stability constraints improve even models that are computed with rich data obtained through active learning methods. Last, using the pusher-slider system, we validate the performance of DISKO experimentally. 

Learning Koopman models for prediction and control is an active research topic with many recent noteworthy results \cite{brunton2017chaos, korda2020optimal, azencot2020forecasting, lange2021fourier}. In this work, for the purposes of highlighting the importance of stability for arbitrary Koopman models without system- or application-specific knowledge, we do not exploit any of the available tools for synthesizing more accurate Koopman models. Identifying appropriate Koopman basis functions, either to obtain Koopman invariant subspaces or to improve the accuracy of the approximate models, remains a challenge and an active research area \cite{Haseli_EDMDtunableAccuracy}. The proposed DISKO approach can be used as a tool both for choosing basis functions and improving the accuracy and control of Koopman models. The conditions on the basis functions shown in Proposition \ref{prop: conditions_for_Koopman_basis} can assist in the selection of basis functions for the data-driven discovery of Koopman invariant subspaces or the improvement of approximate models. It can also complement other algorithms on Koopman prediction and control to avoid the undesirable effects of instability, especially in the low-data limit. 

Note that minimizing a multi-step prediction error \cite{koopman_deeplearning}, as well as including delay measurements in the basis functions (using Hankel matrices) \cite{Hankel_DMD} are two ways that have helped improve long-term prediction of Koopman operators. Although the resulting model is improved, there is still no guarantee that it will be stable. Further, training over the multi-step prediction error or including delay measurements can be prohibitive for online applications. Specifically, the former method is computationally more expensive and does not scale well as the number of prediction steps used in training increases. In addition, both of these alternative methods need longer series of data, which may not be feasible for time-critical learning tasks that rely on few data. Storing a longer history of the measurements also makes these solutions more memory-intensive compared to the memory-efficient running-sum approach presented in this work (see Appendix \ref{App:: Equivalence}).

This work is well-aligned with current efforts on physics-based learning methods that impose physical assumptions on data-driven models \cite{drgovna2021physics, ahmadi2020learning, kashinath2021physics}, shown to improve data-efficiency, prediction, and control performance. Several of these studies focus on stability constraints and bounding the model predictions \cite{kaptanoglu2021promoting, revay2021recurrent, jongeneel2021efficient, sashidhar2021bagging, yin2021imitation}. In addition, recent results establishing a relationship between stability in the Koopman framework and the existence of a contraction metric for nonlinear systems \cite{yi2021equivalence} makes the proposed algorithm a useful tool in the stability analysis and control of nonlinear systems. 

There are many avenues for future work. Investigating whether the derived formula for the prediction error of Koopman operators can be used as a metric to evaluate and compare different representations could help determine the best approximate finite-dimensional Koopman model. Further, we are interested in using the proposed methodology to construct Lyapunov functions for nonlinear systems and investigate the stability of nonlinear systems. These results could provide candidate control Lyapunov functions for finding stabilizing feedback laws of unknown nonlinear dynamics. Last, we want to explore the benefits of DISKO for Koopman operators used to optimize training of neural network methods \cite{optimizingNNwithKoopman}. 

\section{Acknowledgements}
This work was supported by the National Science Foundation (IIS-1717951, IIS-1837515). Any opinions, findings, and conclusions or recommendations expressed in this material are those of the authors and do not necessarily reflect the views of the National Science Foundation.

\begin{appendices}

\section{Necessary Conditions for Koopman Basis Functions Based on System Properties}\label{appendix: Necessary conditions for Koopman Basis Functions}
\subsubsection{Proof for first condition in Proposition \ref{prop:  conditions_for_Koopman_basis}}\mbox{\\}\\

\begin{proof}
Using chain rule, the Koopman dynamics can be expressed as 
\begin{align}
    \frac{d}{dt} \Psi(s(t)) = \frac{d\Psi(s(t))}{ds(t)} \mathrm{f}(s(t)).
\end{align}
Then, given $s_e$ an equilibrium point of the original system dynamics $\mathrm{f}$, such that $\mathrm{f}(s_e) = 0$, it is also an equilibrium of the Koopman dynamics. That is,
\begin{flalign*}
  && \mathrm{f}(s_e) =& 0&\\ 
  \Longrightarrow && \frac{d\Psi(s_e)}{ds_e} \mathrm{f}(s_e) =& 0&\\
  \Longrightarrow && \frac{d}{dt} \Psi(s_e) =& 0&\\
  \Longrightarrow && f(\Psi(s_e)) =& 0&
\end{flalign*}

\end{proof}
\subsubsection{Proof for second condition in Proposition \ref{prop:  conditions_for_Koopman_basis}}\mbox{\\}\\
\begin{proof}
Note that
\begin{flalign}\label{eq: lyap_stable_1}
 \notag  &&\| s(0) - s_e \|  < & \delta_s&\\ 
  \Longrightarrow && L_\Psi \| s(0) - s_e \| <& L_\Psi \delta_s&\\
  \notag \Longrightarrow && \| \Psi(s(0)) - \Psi(s_e) \| <& L_\Psi \delta_s.
\end{flalign}
Similarly, 
\begin{flalign}\label{eq: lyap_stable_2}
  \notag &&\| s(t) - s_e \|  < & \epsilon_s&\\ 
  \Longrightarrow && L_\Psi \| s(t) - s_e \| < & L_\Psi \epsilon_s&\\
  \notag \Longrightarrow && \| \Psi(s(t)) - \Psi(s_e) \| <& L_\Psi \epsilon_s.
\end{flalign}

Then, assume that for every $\epsilon_s > 0$ there exists $\delta_s >0$ such that 
\begin{flalign*}
  &&\| s(0) - s_e \|  <& \delta_s&   \Longrightarrow && \| s(t) - s_e \|  <& \epsilon_s~\forall~t\ge 0.&
\end{flalign*}
From \eqref{eq: lyap_stable_1} and \eqref{eq: lyap_stable_2} it follows that, for every $\epsilon_s > 0$ there exists $\delta_s >0$ such that 
\begin{flalign*}
  &&\| \Psi(s(0)) - \Psi(s_e) \|  <& L_\Psi\delta_s&   
  \\\Longrightarrow && \| \Psi(s(t)) - \Psi(s_e) \| <& L_\Psi \epsilon_s~\forall~t\ge 0.
\end{flalign*}
Then, it follows that, for every $\epsilon_\Psi = L_\Psi \epsilon_s > 0$ there exists $\delta_\Psi = L_\Psi\delta_s >0$ such that 
\begin{flalign*}
  &&\| \Psi(s(0)) - \Psi(s_e) \|  <& \delta_\Psi&   
  \\\Longrightarrow && \| \Psi(s(t)) - \Psi(s_e) \| <& \epsilon_\Psi~\forall~t\ge 0.
\end{flalign*}
\end{proof}

\subsubsection{Proof for third condition in Proposition \ref{prop:  conditions_for_Koopman_basis}}\mbox{\\}\\
\begin{proof}
Let $s_e$ be an asymptotically stable equilibrium for the nonlinear dynamical system \eqref{eq:: nonlinear_dynamics_states} in a region of the state space $\mathcal{D} \subseteq \mathcal{S}$. That is, $s_e$ is a Lyapunov stable equilibrium and there exists $\delta_s$ such that
\begin{align*}
     \|s(0) - s_e \| < \delta_s \Longrightarrow  \lim_{t\to\infty} \|s(t) - s_e\| = 0.
 \end{align*}

First, from the second condition of Proposition \ref{prop:  conditions_for_Koopman_basis}, if $s_e$ is a Lyapunov-stable equilibrium for the nonlinear dynamics \eqref{eq:: nonlinear_dynamics_states}, it is also a Lyapunov-stable equilibrium for the Koopman dynamics \eqref{eq:: nonlinear_dynamics_Koopman}. 
Second, from Lipschitz continuity, 
\begin{align*}
      &\| \Psi(s(0)) - \Psi(s_e) \| \frac{1}{L_\Psi} \le \|s(0) - s_e \| < \delta_s
      \\ \Longrightarrow & \| \Psi(s(0)) - \Psi(s_e) \| < L_\Psi \delta_s = \delta_\Psi.
 \end{align*}
  In addition, 
 \begin{align*} 
     &\lim_{t\to\infty} \|s(t) - s_e\| = 0 
     \\ \Longrightarrow &\lim_{t\to\infty} \|\Psi(s(t)) - \Psi(s_e)\| = 0.
 \end{align*}
 Therefore, $s_e$ is a Lyapunov-stable equilibrium for the Koopman dynamics and there exists $\delta_\Psi$ such that
 \begin{align*}
     \| \Psi(s(0)) - \Psi(s_e) \| < \delta_\Psi \Longrightarrow \lim_{t\to\infty} \|\Psi(s(t)) - \Psi(s_e)\| = 0.
 \end{align*}
\end{proof}
\section{Equivalent Matrix Representation for Sum of Squares Error}\label{App:: Equivalence}
Note that the Frobenius norm of $A \in \mathbb{R}^{m \times n}$ is
\begin{align*}
    \lVert A \rVert_F = \sqrt{\sum_{i = 1}^m \sum_{j=1}^n |a_{ij}|^2}.
\end{align*}
Let $\tilde{\mathcal{K}}_d \in \mathbb{R}^{W \times W}$ and $\Psi(\cdot) \in \mathbb{R}^W$ and consider the expression
\begin{align}\label{eq:: Appendix_SumLS}
    \sum_{k = 1}^{P}\lVert \Psi(s(t_k + \Delta t), u(t_k + \Delta t)) - \tilde{\mathcal{K}}_d \Psi(s(t_k), u(t_k))\rVert^2.
\end{align}
Then, using the Frobenius norm definition for the vector $\Psi(s(t_k + \Delta t), u(t_k + \Delta t)) - \tilde{\mathcal{K}}_d \Psi(s(t_k), u(t_k)) \in \mathbb{R}^{W}$,
\begin{align*}
\eqref{eq:: Appendix_SumLS} =& 
\sum_{k = 1}^{P} (\sum_{i=1}^W |\Psi_i(s(t_k + \Delta t), u(t_k + \Delta t)) \\&- \tilde{\mathcal{K}}_{d_i} \Psi_i(s(t_k), u(t_k))|^2,
\end{align*}
where $\tilde{\mathcal{K}}_{d_i} \in \mathbb{R}^{W}$ is a row vector that corresponds to the $i$th row of $\tilde{\mathcal{K}}_{d}$. Then, consider the term inside the absolute value as the $\mathcal{G}_{ik}$ element of a matrix $\mathcal{G} \in \mathbb{R}^{W \times P}$ such that
\begin{align*}
\eqref{eq:: Appendix_SumLS} =& 
\sum_{k = 1}^{P} (\sum_{i=1}^W |\mathcal{G}_{ij}||^2.
\end{align*}
We can express $\mathcal{G}$ as 
\begin{equation*}
    \resizebox{0.99\hsize}{!}{$
\begin{aligned}
\mathcal{G}^T = 
    \begin{bmatrix}
        (\Psi(s(t_1 + \Delta t), u(t_1 + \Delta t)) - \tilde{\mathcal{K}}_d \Psi(s(t_1), u(t_1)))^T \\
         \vdots 
         \\ 
         (\Psi(s(t_P + \Delta t), u(t_P + \Delta t)) - \tilde{\mathcal{K}}_d \Psi(s(t_P), u(t_P)))^T
    \end{bmatrix}
\end{aligned}
   $}
\end{equation*}
and rewrite it as 
\begin{equation*}
\resizebox{0.99\hsize}{!}{$
\begin{aligned}
    \mathcal{G}^T =
    \begin{bmatrix}
    \Psi(s(t_1 + \Delta t), u(t_1 + \Delta t)) ^T\\
    \vdots \\
    \Psi(s(t_P + \Delta t), u(t_P + \Delta t)) ^T
    \end{bmatrix}
    - 
    \begin{bmatrix}
    \Psi(s(t_1), u(t_1))) ^T
    \\ 
    \vdots 
    \\
    \Psi(s(t_P), u(t_P))) ^T
    \end{bmatrix}
    \tilde{\mathcal{K}}_d^T.
\end{aligned}$}\end{equation*}
Let $X, Y \in \mathbb{R}^{W \times P}$ be given by
\begin{equation*}\resizebox{0.99\hsize}{!}{$\begin{aligned}
X = \begin{bmatrix} \Psi(s(t_1), u (t_1 ))^T \\ \vdots \\ \Psi(s(t_P), u (t_P))^T
 \end{bmatrix}^T
 Y = \begin{bmatrix} \Psi(s(t_1 + \Delta t), u (t_1 + \Delta t))^T \\ \vdots \\ \Psi(s(t_P + \Delta t), u (t_P + \Delta t))^T
 \end{bmatrix}^T
\end{aligned}$}\end{equation*}
such that
\begin{align*}
    \mathcal{G} = Y - \tilde{\mathcal{K}}_d X.
\end{align*}
Then, using the Frobenius norm definition, we can rewrite \eqref{eq:: Appendix_SumLS} as
\begin{align*}
    \eqref{eq:: Appendix_SumLS} =& \lVert \mathcal{G} \rVert_F^2 \\
    =& \Vert Y - \tilde{\mathcal{K}}_d X \rVert_F^2.
\end{align*}

\section{Memory Preserving Gradient Descents}\label{App:: Memory-Preserving Gradient Descents}
Let $\Psi(s(t))\in \mathbb{R}^W$ be given by
\begin{align*}
\Psi(s(t)) =& \begin{bmatrix} \Psi_1(s(t)) & \Psi_2(s(t)) & \dots & \Psi_W(s(t)) \end{bmatrix}^T 
\end{align*}
and $X, Y \in \mathbb{R}^{W \times P}$ be given by 
\begin{align*}
X =& 
 \begin{bmatrix}
 \Psi(s(t_1))
 &
 \Psi(s(t_2))
 &
 \dots
 & 
 \Psi(s(t_P))
 \end{bmatrix}
 \intertext{and}
Y =& 
 \scalemath{1}{\begin{bmatrix}
 \Psi(s(t_1 +\Delta t))
 &
 \Psi(s(t_2 + \Delta t))
 &
 \dots
 & 
 \Psi(s(t_P + \Delta t))
 \end{bmatrix}}.
\end{align*}
Then,
\begin{equation*}
\resizebox{0.99\hsize}{!}
{$
\begin{aligned}
 X X^T =&
 \begin{bmatrix}
 \Psi(s(t_1))
 &
 \Psi(s(t_2))
 &
 \dots
 & 
 \Psi(s(t_P))
 \end{bmatrix}
 \begin{bmatrix}
 \Psi(s(t_1))^T \\
 \Psi(s(t_2))^T \\
 \vdots \\
 \Psi(s(t_P))^T
 \end{bmatrix}
 \\ =&
 \begin{bmatrix}
 \Psi_1(s(t_1))
 &
 \Psi_1(s(t_2))
 &
 \dots
 & 
 \Psi_1(s(t_P)) \\
 \Psi_2(s(t_1))
 &
 \Psi_2(s(t_2))
 &
 \dots
 & 
 \Psi_2(s(t_P)) \\
 \vdots & \vdots & \dots & \vdots \\
 \Psi_W(s(t_1))
 &
 \Psi_W(s(t_2))
 &
 \dots
 & 
 \Psi_W(s(t_P))
 \end{bmatrix}\\
 &\cdot
 \begin{bmatrix}
 \Psi_1(s(t_1)) & \Psi_2(s(t_1)) & \dots & \Psi_W(s(t_1)) \\
 \Psi_1(s(t_2)) & \Psi_2(s(t_2)) & \dots & \Psi_W(s(t_2)) \\
 \vdots & \vdots & \dots & \vdots \\
 \Psi_1(s(t_P)) & \Psi_2(s(t_P)) & \dots & \Psi_W(s(t_P))
 \end{bmatrix}
 \\ =&
 \begin{bmatrix}
 \sum\limits_{k=1}^P \Psi_1(s(t_k)\Psi_1(s(t_k) & \dots & \sum\limits_{k=1}^P \Psi_1(s(t_k) \Psi_W(s(t_k) 
 \\
 \vdots & \ddots & \vdots \\
 \sum\limits_{k=1}^P \Psi_W(s(t_k)\Psi_1(s(t_k) & \dots & \sum\limits_{k=1}^P \Psi_W(s(t_k) \Psi_W(s(t_k)
 \end{bmatrix}\\
 =& 
 \sum\limits_{k=1}^P
 \begin{bmatrix}
 \Psi_1(s(t_k)\Psi_1(s(t_k) & \dots & \Psi_1(s(t_k) \Psi_W(s(t_k) 
 \\
 \vdots & \ddots & \vdots \\
 \Psi_W(s(t_k)\Psi_1(s(t_k) & \dots & \Psi_W(s(t_k) \Psi_W(s(t_k)
 \end{bmatrix}\\
 =& 
 \sum\limits_{k=1}^P
 \Psi(s(t_k)\Psi(s(t_k)^T \\
 =& \mathcal{G}.
\end{aligned}
$}
\end{equation*}
Similarly, $YX^T = \mathcal{A}$, $XU^T = X_U$, $YU^T = Y_U$, and $UU^T = U_U$.

\section{Stability-Based Filtering of Basis Functions}\label{app:: FilteringOfBasisFunctions}
Consider the pendulum dynamics with equilibrium points at $[\theta, \dot\theta]$ = $[n\pi, 0]$ for $n = \mathcal{Z}$. Also consider basis functions that include the system states and $k$ functions of the states $g$ such that $\Psi(s) = [\theta, \dot\theta, g(\theta, \dot\theta)]^T$. Choose $g(\theta, \dot\theta)$ such that all but one evaluate to zero at the equilibrium points, that is $g_i(\theta_e, \dot\theta_e) = 0$ for all $i \ne k$. Then, from Proposition \ref{prop: conditions_for_Koopman_basis}, the Koopman dynamics share the same equilibrium point as the original dynamics. As a result, for a Koopman operator that captures the dynamics without loss of accuracy it must be true that at the equilibrium point $[\theta, \dot\theta]= [0, 0]$, $\frac{d}{dt}\Psi(s_e(t)) = 0$, such that
\begin{align*}
    \begin{bmatrix}0 \\ 0 \\ \vdots \\ 0 \end{bmatrix} = \begin{bmatrix}\tilde{\mathcal{K}}_{11} & \tilde{\mathcal{K}}_{12} & \cdots & \tilde{\mathcal{K}}_{1W} 
    \\ \tilde{\mathcal{K}}_{21} & \tilde{\mathcal{K}}_{22} & \cdots & \tilde{\mathcal{K}}_{2W} 
    \\ \vdots & \vdots & \cdots & \vdots 
    \\
    \tilde{\mathcal{K}}_{W1} & \tilde{\mathcal{K}}_{W2} & \cdots & \tilde{\mathcal{K}}_{WW} 
    \end{bmatrix}
    \begin{bmatrix}0 \\ 0 \\ \vdots \\ g_k(0, 0) \end{bmatrix}
\end{align*}
Then, for each row $i$, 
\begin{align*}
    0 =& \tilde{\mathcal{K}}_{iW}g_k(0, 0) \\
   \Longrightarrow 0 =&\tilde{\mathcal{K}}_{iW}.
\end{align*}
That means that, for a Koopman model that captures the dynamics with full fidelity and using the basis functions shown here, the function $g_k(\theta, \dot\theta)$ is redundant. After removing $g_k(\theta, \dot\theta)$ from $\Psi(s)$ and repeating this analysis at the equilibrium  $[\theta, \dot\theta]= [\pi, 0]$, one finds that $\theta$ is also redundant. Leveraging the conditions for the Koopman models as shown in Proposition \ref{prop: conditions_for_Koopman_basis} to optimize the choice of basis functions is a promising research topic that merits further investigation and is deferred to future work.
\end{appendices}

\bibliographystyle{IEEEtran}
\normalsize{\normalsize\bibliography{references}}

\end{document}